\documentclass{article} 
\usepackage{iclr2023_conference,times}


\usepackage{amsmath,amsfonts,bm}









\def\eqref#1{equation~\ref{#1}}









\def\1{\bm{1}}










\DeclareMathAlphabet{\mathsfit}{\encodingdefault}{\sfdefault}{m}{sl}
\SetMathAlphabet{\mathsfit}{bold}{\encodingdefault}{\sfdefault}{bx}{n}













\usepackage[utf8]{inputenc} 
\usepackage[T1]{fontenc}    
\usepackage{hyperref}       
\usepackage{url}            
\usepackage{booktabs}       
\usepackage{nicefrac}       
\usepackage{microtype}      
\usepackage{xcolor}         
\usepackage{algorithm}
\usepackage{algorithmic}
\usepackage{bbm}
\usepackage{adjustbox}
\usepackage{booktabs}
\usepackage{wrapfig}
\usepackage{amsmath,amsfonts,amsthm,bm}
\usepackage{bibentry}
\usepackage{bbding}
\usepackage{multirow}
\usepackage{makecell}
\usepackage{cancel}
\usepackage{xspace}
\usepackage{graphicx}
\usepackage{capt-of}
\usepackage{varwidth}
\usepackage{subfigure}
\usepackage{lineno}
\usepackage{colortbl}
\newtheorem{theorem}{Theorem}[section]
\newtheorem{lemma}[theorem]{Lemma}

\newtheorem{corollary}[theorem]{Corollary}

\title{Non-equispaced Fourier Neural Solvers for PDEs}
\iclrfinalcopy

\author{Haitao Lin, Lirong Wu, Yongjie Xu, Yufei Huang, Siyuan Li, Guojiang Zhao \& Stan Z, Li \\
CAIRI, School of Engineering, Westlake University\\
\texttt{\{linhaitao, stan.zq.li\}@westlake.edu.cn} \\
}

%

\begin{document}

\maketitle

\begin{abstract}
   Solving partial differential equations is difficult. 
   Recently proposed neural resolution-invariant models, 
   despite their effectiveness and efficiency, usually require equispaced spatial points of data. However, sampling in spatial domain is sometimes inevitably non-equispaced in real-world systems, limiting their applicability.
   In this paper, we propose a Non-equispaced Fourier PDE Solver (\textsc{NFS}) with adaptive interpolation on resampled equispaced points and a variant of Fourier Neural Operators as its components.  
   Experimental results on complex PDEs demonstrate its advantages in accuracy and efficiency. Compared with the spatially-equispaced benchmark methods, it achieves superior performance with $42.85\%$ improvements on MAE, and is able to handle non-equispaced data with a tiny loss of accuracy.
   Besides, to our best knowledge, \textsc{NFS} is the first ML-based method with mesh invariant inference ability to successfully model turbulent flows in non-equispaced scenarios, with a minor deviation of the error on unseen spatial points.     
\end{abstract}

\section{Introduction}
\vspace{-0.3cm}

Solving the partial differential equations (PDEs) holds the key to revealing the underlying mechanisms and forecasting the future evolution of the systems.
However, classical numerical PDE solvers require fine discretization in spatial domain to capture the patterns and assure convergence.
Besides, they also suffer from computational inefficiency. Recently, data-driven neural PDE solvers revolutionize this field by providing fast and accurate solutions for PDEs.
Unlike approaches designed to model one specific instance of PDE \citep{e2017deep,bar2019unsupervised,smith2020eikonet,shaowu2020,Raissi2020sceience}, neural operators \citep{guoconv2016,Sirignano_2018, Bhatnagar_2019,KHOO_2020, li2020multipole,li2021fourier,bhattacharya2021model,Brandstetter2022, lin2022stonet} directly learn the mapping between infinite-dimensional spaces of functions.
They remedy the mesh-dependent nature of the finite-dimensional operators by producing a single set of network parameters that may be used with different discretizations.

However, two problems still exist -- discretization-invariant modeling for \textit{non-equispaced data} and \textit{computational inefficiency} compared with convolutional neural networks in the finite-dimensional setting.
To alleviate the first problem, \textsc{MPPDE}  \citep{Brandstetter2022} lends basic modules in \textsc{MPNN} \citep{gilmer2017neural} to model the dynamics for spatially non-equispaced data, but even intensifies the time complexity due to the pushforward trick and suffers from unsatisfactory accuracy in complex systems (See Fig.~\ref{fig:equicomp}).
\textsc{FNO} \citep{li2021fourier} has achieved success in tackling the second problem of inefficiency and inaccuracy, while the spatial points must be equispaced due to its harnessing the fast Fourier transform (FFT).

To sum up, two properties should be available in neural PDE solvers: \textit{(1)} discretization-invariance and \textit{(2)} equispace-unnecessity. 
Property \textit{(1)} is shared by infinite-dimensional neural operators, in which the learned pattern can be generalized to unseen meshes. By contrast, classical vision models and graph spatio-temporal models are not discretization-invariant. 
Property \textit{(2)} means that the model can handle irregularly-sampled spatial points. For example, graph spatio-temporal models do not require the data to be equispaced, but vision models are equispace-necessary, and limited to handling images as 2-d regular grids.
And recently proposed methods can be classified into four types according to the two properties, as shown in Fig.~\ref{fig:modelclass}. 
\begin{figure}[]
    \hspace{0.5cm}\includegraphics[width=0.86\linewidth, trim = 8 0 0 4,clip]{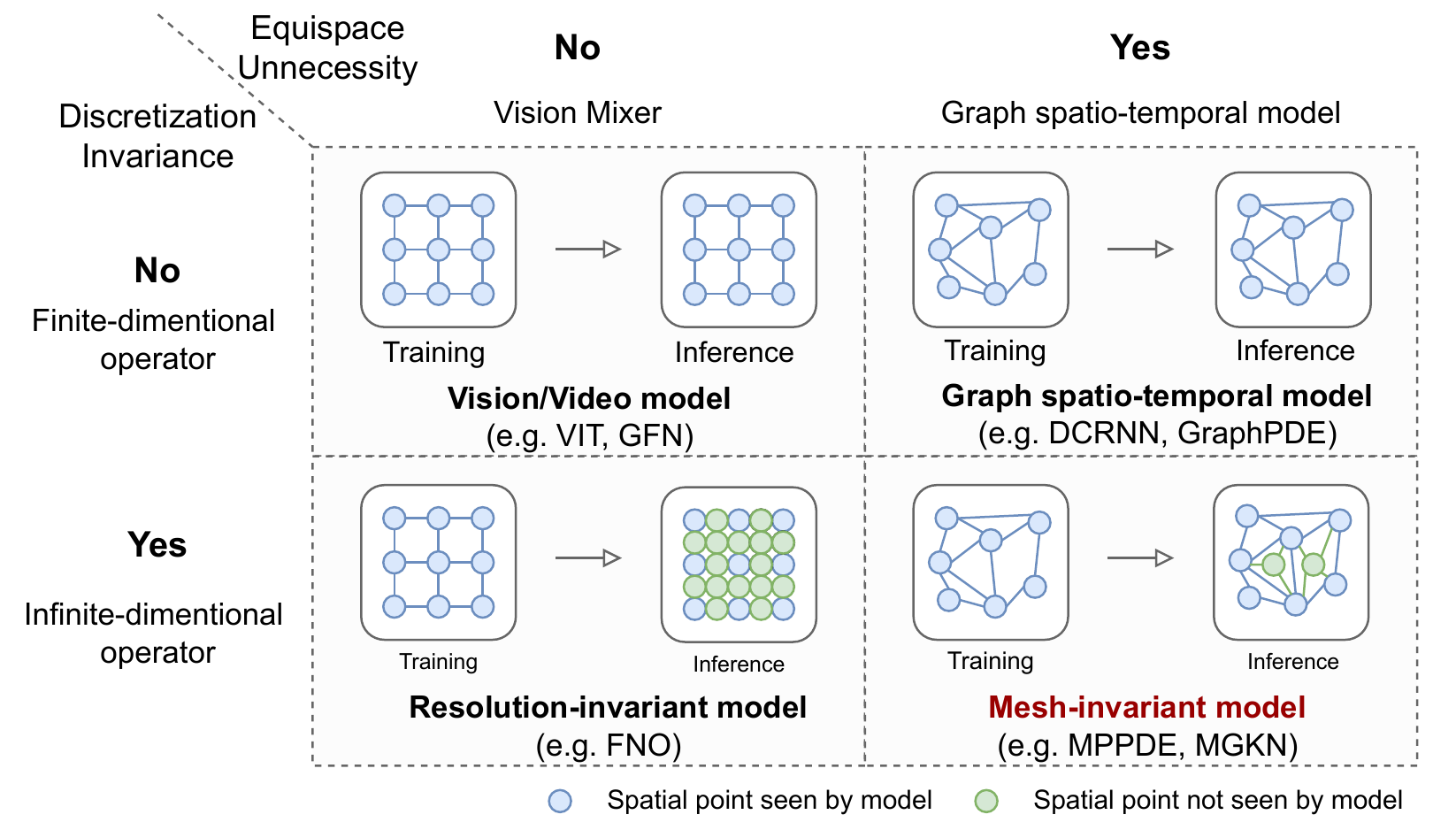}\vspace{-0.23cm}
    \caption{Four types of methods with or without the two concluded limitations.}\vspace{-1.3em}
    \label{fig:modelclass}
\end{figure}
As discussed, although the equispace-necessary methods enjoy fast parallel computation and low prediction error, they lack the ability to handle the spatially non-equispaced data. 
For these reasons, this paper aims to design a mesh-invariant model (defined in Fig.~\ref{fig:modelclass}) called Non-equispaced Fourier neural Solver (\textsc{NFS}) with comparably low cost of computation and high accuracy, by lending the powerful expressivity of \textsc{FNO} and vision models to efficiently solve the complex PDE systems. Our paper including leading contributions is organized as follows:
\begin{itemize}
   \item In Sec. \ref{sec:background}, we first give some preliminaries on neural operators as related work, with a brief introduction to Vision Mixers, to build a bridge between Fourier Neural Operator and Vision Mixers. 
   Thus, we illustrate our motivation for the work: To establish a mesh-invariant neural operator, by harnessing the network structure of Vision Mixers.
   \item In Sec. \ref{sec:method}, we proposed a Non-equispaced Fourier Solver (NFS), with adaptive interpolation operators and a variant of Fourier Neural Operators as the components. Approximation theorems that guarantee the expressiveness of the proposed interpolation operators are developed. Further discussion gives insights into the relation between NFS, patchwise embedding and multipole graph models. 
   \item In Sec. \ref{sec:exp}, extensive experiments on different types of PDEs are conducted to demonstrate the superiority of our methods. Detailed ablation studies show that both the proposed interpolation kernel and the architecture of Vision Mixers contribute to the improvements in performance.
\end{itemize}

\section{Background and Related Work}
\label{sec:background}
\vspace{-0.5em}
\subsection{Problem Statement}
\vspace{-0.5em}
Let $D\in \mathbb{R}^{d}$ be the bounded and open spatial domain where $n_s$-point discretization of the domain $D$ written as $\bm{X} = \{\bm{x}_i = (x_i^{(1)}, \ldots, x_i^{(d)}) : 1\leq i \leq n_s\}$ are sampled.
The observation of input function $a \in \mathcal{A}(D;\mathbb{R}^{d_a})$ and output $u \in \mathcal{U}(D;\mathbb{R}^{d_u})$ on the $n_s$ points are denoted by $\{a(\bm{x}_i), u(\bm{x}_i)\}_{i=1}^{n_s}$, where $\mathcal{A}(D;\mathbb{R}^{d_a})$ and $\mathcal{U}(D;\mathbb{R}^{d_u})$ are separable Banach spaces of function taking values in $\mathbb{R}^{d_a}$ and $\mathbb{R}^{d_u}$ respectively. 
Suppose $\bm{x} \sim \mu$ is i.i.d. sampled from the probability measure $\mu$ supported on $D$.  
An infinite-dimensional neural operator $\mathcal{G}_\theta: \mathcal{A}(D;\mathbb{R}^{d_a}) \rightarrow \mathcal{U}(D;\mathbb{R}^{d_u})$ parameterized by $\theta \in \Theta$, aims to build an approximation so that $\mathcal{G}_\theta (a) \approx u$.
A cost functional $\mathcal{C}: \mathcal{U}(D;\mathbb{R}^{d_u}) \times \mathcal{U}(D;\mathbb{R}^{d_u}) \rightarrow \mathbb{R}$ is defined to optimize the parameter $\theta$ of the operator by the objective 
\begin{equation}
   \begin{aligned}
      \min_{\theta \in \Theta}\mathbb{E}_{\bm{x}\sim \mu}[\mathcal{C}(\mathcal{G}_\theta (a), u)(\bm{x})] \approx \frac{1}{n_s} \sum_{i=1}^{n_s} \mathcal{C}(\mathcal{G}_\theta (a), u)(\bm{x}) \label{eq:opt}
   \end{aligned}
\end{equation}
To establish a mesh-invariant operator, $\bm{X}$ can be non-equispaced, and the learned $\mathcal{G}_\theta$ should be transferred to an arbitary discretization $\bm{X}' \in D$, where $\bm{x} \in \bm{X'}$ can be not necessarily contained in $\bm{X}$.
Because we focus on spatially non-equispaced points, when the PDE system is time-dependent, we assume that timestamps $\{t_j\}$ are uniformly sampled, which means we do not focus on temporally irregular sampling or continuous time problem \citep{rubanova2019latent, chen2019neural, yd2019ode2vae, PDEGNN}.  
\vspace{-0.5em}
\subsection{Discrete Fourier Transform}
\vspace{-0.5em}
Let $\bm{k}_l = (k_l^{(1)},\ldots,k_l^{(d)})$ the $l$-th frequency corresponding to $\bm{X}$, with $\bm{k}_l \in \mathbb{Z}^{d}$.
The discrete Fourier transform of $f : D \rightarrow \mathbb{R}^{d_f}$ is denoted by $\mathcal{F}(f) (\bm{k}) \in \mathbb{C}^{d_f}$, with $\mathcal{F}^{-1}$ as its inverse, then
\begin{equation}
    \begin{aligned}
    \mathcal{F}(f)^{(j)} (\bm{k}_l) = \sum_{i=1}^{n_s}f^{(j)}(\bm{x}_i) e^{-2i\pi <\bm{x}_i, \bm{k}_l>}, \quad \quad \mathcal{F}^{-1}(f)^{(j)} (\bm{x}_i) = \sum_{l=1}^{n_s} f^{(j)}(\bm{k}_l) e^{2i\pi <\bm{x}_i, \bm{k}_l>},
    \end{aligned}
\end{equation}
where $j$ means the $j$-th dimension of $f$.
General Fourier transforms have complexity $O(n_s^2)$. 
When the spatial points are distributed uniformly on equispaced grids, fast Fourier transform (FFT) and its inverse (IFFT) \citep{1162805} can be implemented to reduce the complexity to $O(n_s\log n_s)$. 
\vspace{-0.4em}
\subsection{Fourier Neural Operator}
\textbf{Neural Operators.} To model one specific instance of PDEs, a line of neural solvers have been designed, with prior physical knowledge as constraints.
Different from these methods, neural operators \citep{lulu2021deeponet, nelsen2021random} require no knowledge of underlying PDEs, and only data.
Finite-dimensional operator methods \citep{guoconv2016,Sirignano_2018, Bhatnagar_2019,KHOO_2020} are discretization-variant, meaning that the model can only learn the patterns of the spatial points which have been fed to the model in the training process.
By contrast, infinite-dimensional operator methods \citep{li2020multipole,li2021fourier,bhattacharya2021model,Brandstetter2022} are proposed to be discretization-invariant, enabling the learned models to generalize well to unseen meshes with zero-shot.

\textbf{Kernel integral operator method} \citep{li2020gkn} is a family of infinite-dimensional operators, in which $(\mathcal{G}_{\theta}(a))(\bm{x}) = Q \circ v^{\rm{T}} \circ \cdots \circ v^{\rm{1}} \circ P(a)(\bm{x})$ is formulated as an iterative architecture. 
A higher-dimensional representation function is first obtained by $v^{\rm{0}} = P(a) \in \mathcal{U}(D;\mathbb{R}^{d_v})$, where $P$ is a shallow fully-connected network. It is updated by 
\begin{equation}
    v^{\rm{t+1}}(\bm{x}) := \sigma(W  v^{\rm{t}}(\bm{x}) + \mathcal{K}_\phi(a) v^{\rm{t}}(\bm{x})), \quad \quad \forall \bm{x} \in D \label{eq:kerintop}
\end{equation}
where $\mathcal{K}_\phi : \mathcal{A} \rightarrow \mathcal{L}(\mathcal{U})$ is a kernel integral operator mapping, mapping $a$ to bounded linear operators, with parameters $\phi$. $W$ is a linear transform and $\sigma$ is a non-linear activation function.
After the final iteration, $Q$ projects $v^{\mathrm{T}}(\bm{x})$ back to $\mathcal{U}(D;\mathbb{R}^{d_u})$.

\textbf{Fourier Neural Operator} (\textsc{FNO}) \citep{li2021fourier} as a member in kernel integral operator methods,  updates the representation by applying the convolution theorem as:
\begin{equation}
    \mathcal{K}_\phi(a)v(\bm{x}) = \mathcal{F}^{-1} (\mathcal{F}(\kappa_\phi) \cdot \mathcal{F}(v))(\bm{x}) = \mathcal{F}^{-1} (R_\phi \cdot \mathcal{F}(v))(\bm{x}), \label{eq:fourierop}
\end{equation} 
where $R_\phi$ as the Fourier transform of a periodic kernel function $\kappa_\phi$, is directly learned as the parameters in the updating process. 
To be resolution-invariant, \textsc{FNO} picks a finite-dimensional parameterization by truncating the Fourier series of both $\mathcal{F}(v)$ and $R_\phi$ as a maximal number of modes $k^{(l)}_{\rm{max}}$ for $1\leq l \leq d$. 
Because the sampled spatial points are equispaced in \textsc{FNO}, it can conduct FFT and IFFT to get the Fourier series, which can be very efficient.
\subsection{Vision Mixer and Graph Spatio-Temporal Model}
\textbf{Vision Mixers} \citep{ Tolstikhin2021mlpmixer, Rao2021gfn, Guibas2021afno} are a line of models with a stack of (token mixing) - (channel mixing) - (token mixing) as their network structure for vision tasks.
They are based on the assumption that the key component for the effectiveness of Vision Transformers (\textsc{ViT}) \citep{Dosovitskiy2020vit} is attributed to the proper mixing of tokens.
The defined tokens are equivalent to equispaced spatial points in the former definition, and the research on the mixing of them can be an analogy to modeling the proper interaction or message-passing patterns among spatial points.
In specific, \textsc{ViT} uses a non-Mercer kernel function \citep{Wright2021nonmerceler} $\kappa_\phi$ to adaptively learn the pattern of message-passing through the iterative updating process
\begin{equation}
    \begin{aligned}
        v^{\rm{t+1}}(\bm{x}) &= \sigma(\mathrm{ChannelMix}\circ\mathrm{TokenMix}(v^{\mathrm{t}}(\bm{x}))); \label{eq:vitmix}\\
\mathrm{TokenMix}(v(\bm{x})) = \sum_{i} \kappa_\phi(\bm{x},& \bm{x}_i, v(\bm{x}),v(\bm{x}_i))\cdot v(\bm{x}_i); \quad \mathrm{ChannelMix}(v(\bm{x})) = Wv(\bm{x}), 
    \end{aligned}
\end{equation}
where $W$ is a linear transform called channel mixing layer because it transforms the input on the channel of an image whose dimension is equivalent to function dimension $d_f$. 
Note that we omit the residual connection in Eq.~(\ref{eq:kerintop}) for simplicity.

\textbf{Remark.} The \textsc{FNO} can be regarded as a member of the family of Vision Mixers. The reason is that a component in an iteration in Eq.~(\ref{eq:fourierop}) can be written as 
$(R_\phi \cdot \mathcal{F}(v))(\bm{x}) = R_\phi \cdot \sum_{i} e^{-2\pi i<\bm{x}, \bm{x}_i>} v(\bm{x}_i)$, because in the equispaced scenarios, $\bm{x}_i$ can be regarded as lying on the same grids as $\bm{k}$ after scaling.
The kernel $\kappa_\phi$ is parameterized by $\kappa_\phi(\bm{x}, \bm{x}_i, v(\bm{x}),v(\bm{x}_i)) = e^{-2\pi i<\bm{x}, \bm{x}_i>}$
, and the matrix multiplication of $R_\phi$ also performs mixing on channels. 
Besides, the inverse Fourier transform can also be regarded as token mixing layers, or so-called token demixing layers \citep{Guibas2021afno}.

However, the powerful fitting ability and efficiency of Vision Mixers are limited to being applied to non-equispaced spatial points. 
Another option for non-equispaced data is graph spatio-temporal models, in which interaction patterns among spatial points are modeled in a graph message-passing way \citep{gilmer2017neural,atwood2016diffusionconvolutional,defferrard2017convolutional}.
The mechanism is similar to the token mixing in Vision Mixers by means of the summation in Eq.~(\ref{eq:vitmix}) conducted in the predefined neighborhood of each point. 
Unfortunately, the graph spatio-temporal models \citep{seo2016gcgru,li2018diffusion,bai2020adaptive,lin2021conditional} suffer from high computational complexity and unsatisfactory accuracy in solving complex dynamical systems (such as turbulent flows). 

\subsection{Motivation}
Since \textsc{FNO} belongs to Vision Mixers, this firstly raised a question to us: \textit{Do models employing Vision Mixers's architecture have the potential to model complex PDE systems?}
Thus, experiments are conducted to give an intuitive explanation of our motivation as shown in Fig.~\ref{fig:motivation}. The data are generated by Navier-Stokes equations.
It is noted that graph spatio-temporal methods can also handle the equispaced data. Detailed setup is given in Sec. \ref{sec:exp}. 
\begin{figure*}[ht]\vspace{-1em}
    \centering
            \subfigure[Equispaced comparison]{ \label{fig:equicomp}
                \includegraphics[width=0.33\linewidth, trim = 3 0 20 20,clip]{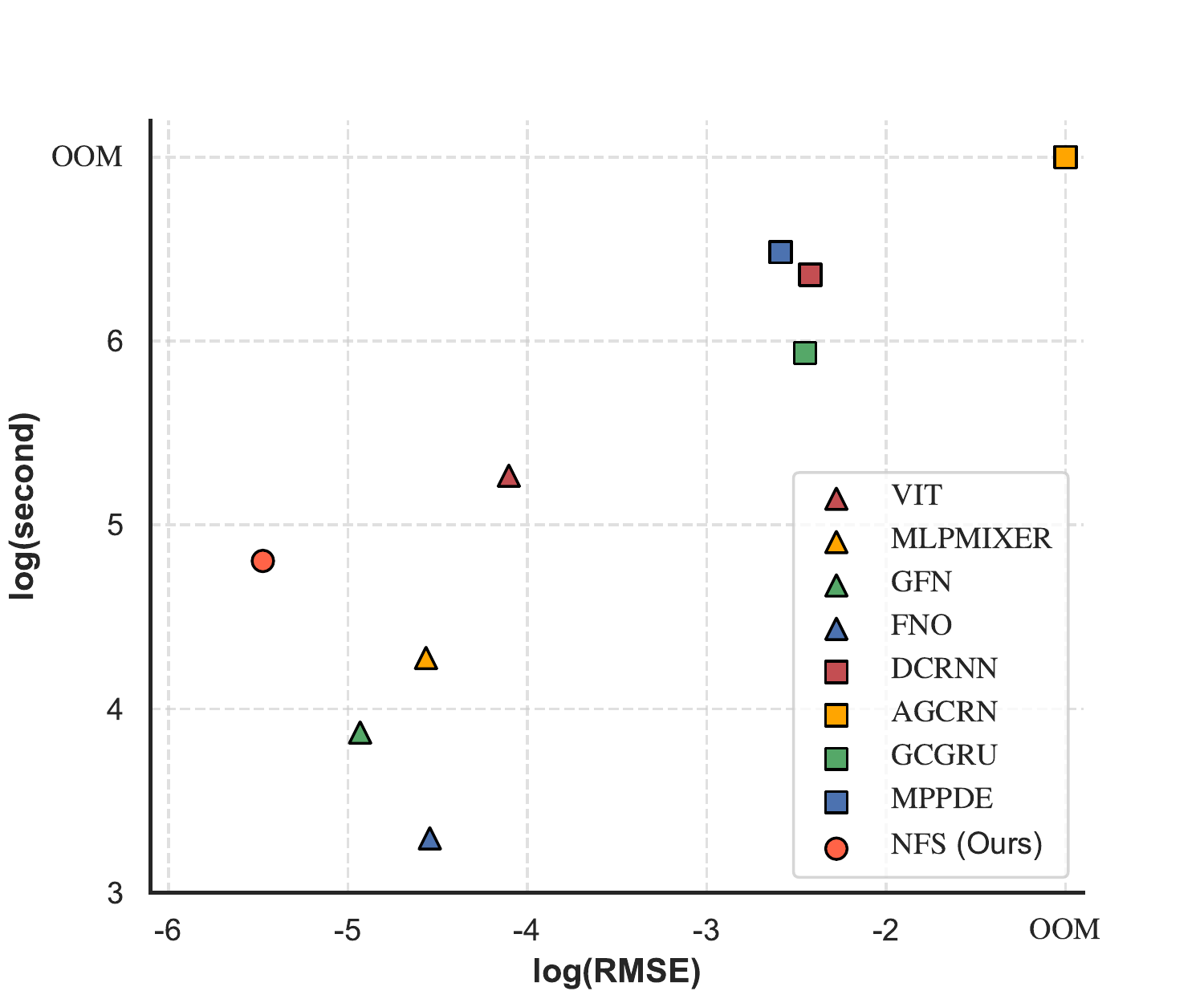}}\hspace{-2mm}
            \subfigure[Non-equispaced comparison]{ \label{fig:nonequicomp}
                \includegraphics[width=0.33\linewidth, trim = 3 0 20 20,clip]{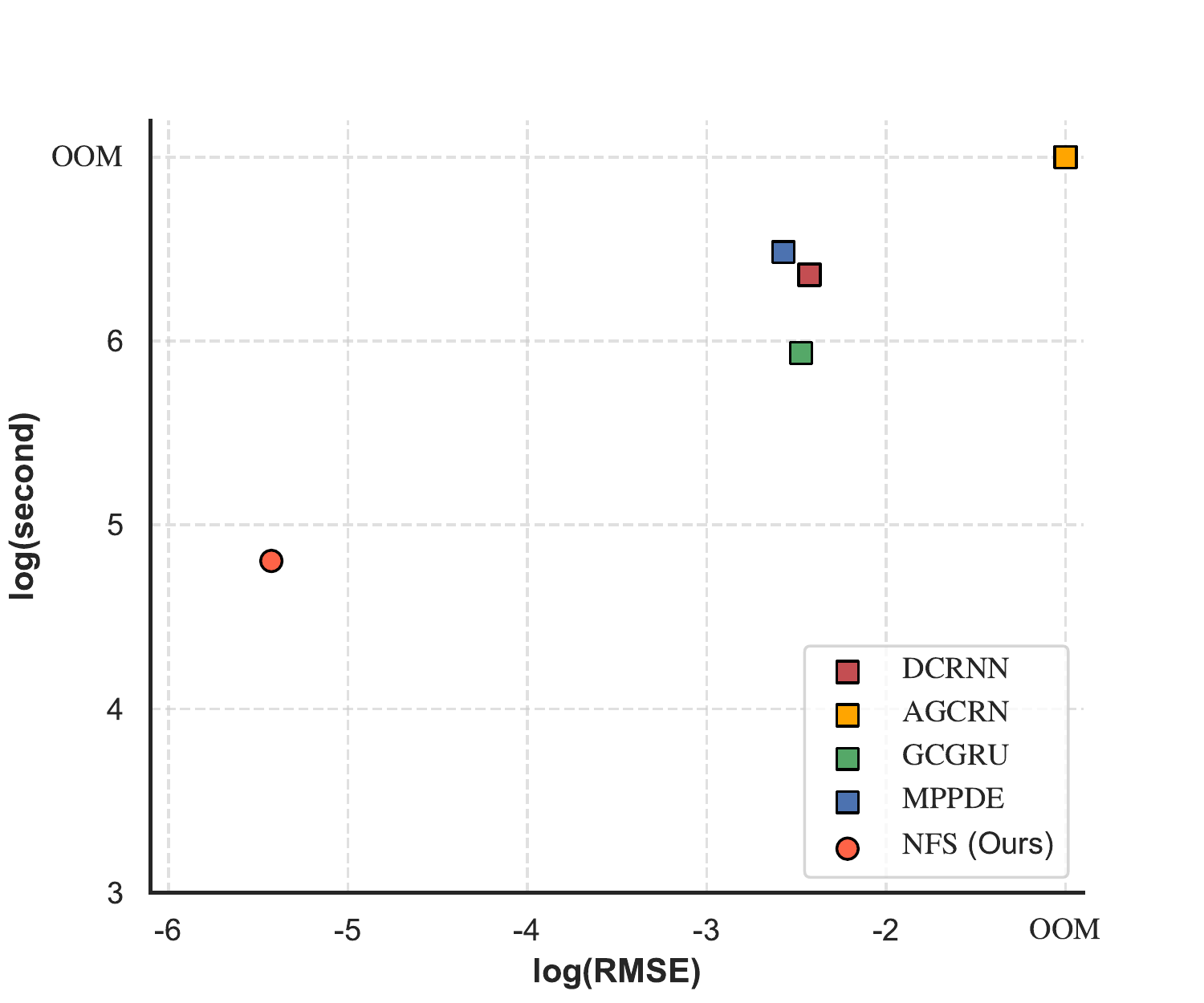}}\hspace{-2mm}
                \subfigure[Loss on Validation Set ]{ \label{fig:losscomp}
                \includegraphics[width=0.33\linewidth, trim = 6 0 20 20,clip]{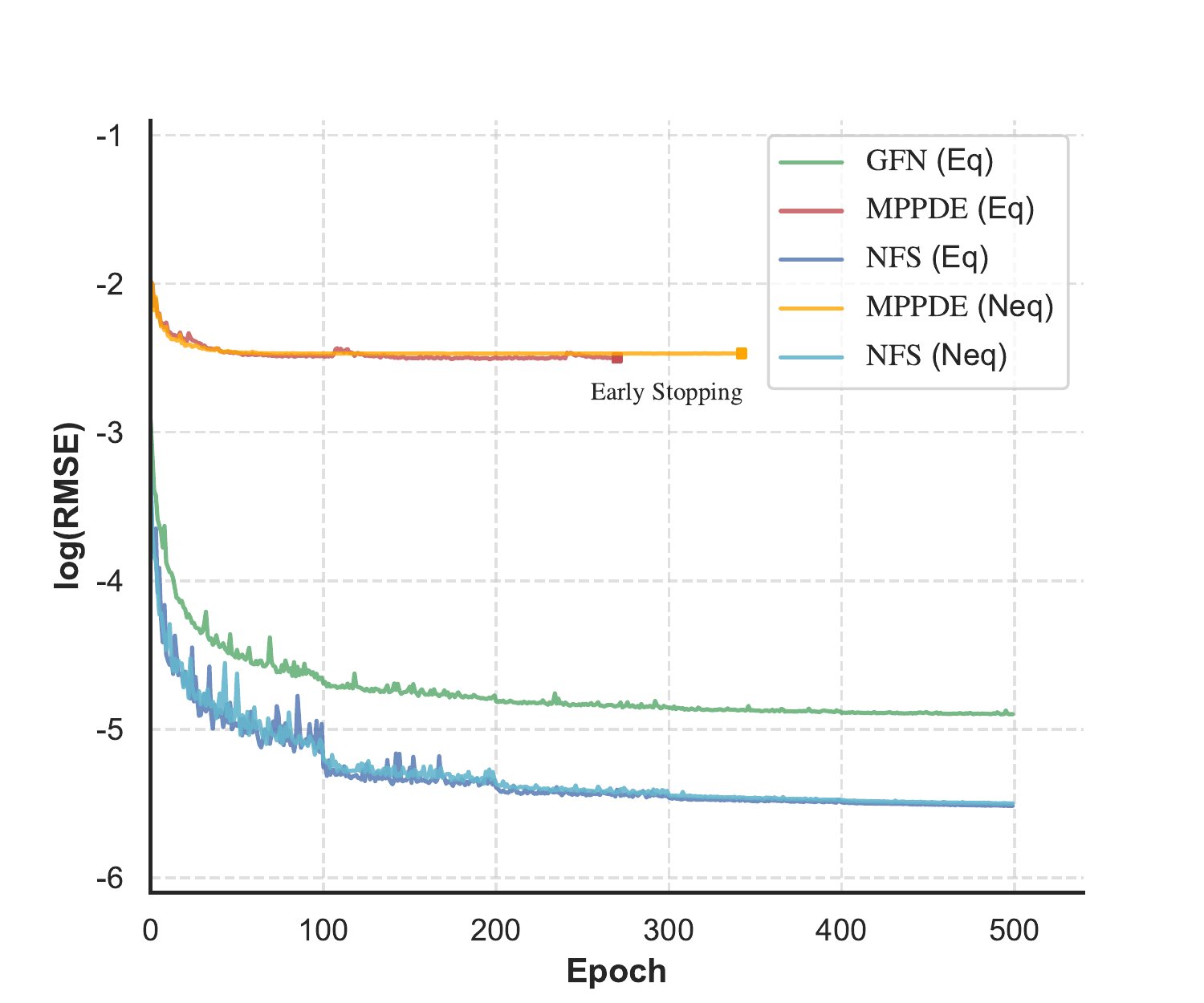}}\hspace{0mm}
                \vspace{-0.5em}
        \caption{Intuitive explanation of our motivation: In (a) and (b), `$\bigtriangleup$' represents Vision Mixers, `$\square$' represents graph spatio-temporal models and `$\bigcirc$' is the proposed \textsc{NFS}. In (c), `Eq' and `Neq' mean the methods are trained in equispaced and non-equispaced scenarios respectively.} \label{fig:motivation}\vspace{-1em}
\end{figure*}

We find that \textit{(1)} All of the evaluated Vision Mixers are able to model the dynamical systems effectively, in spite of \textsc{FNO} as the only discretization-invariant model; 
\textit{(2)} The complex dynamics of the systems are hardly captured by graph spatio-temporal models, whose performance on both accuracy and computational efficiency is very unsatisfactory in either equispaced or non-equispaced scenarios. Fig.~\ref{fig:losscomp} shows the problem of infeasibility of graph spatio-temporal models through the loss curves on the validation set, compared with Vision Mixers. 
However, Vision Mixers fail to handle the non-equispaced data. 
Therefore, we aim to \textit{(1)} establish a \textbf{mesh-invariant} model, by harnessing the network structure of Vision Mixers to achieve competitive \textbf{efficiency} and \textbf{effectiveness} in equispaced scenarios, as shown in Fig.~\ref{fig:equicomp}; \textit{(2)} Besides, it should allow applicability and comparable accuracy in \textbf{non-equispaced} scenarios for solving PDEs, as shown in Fig.~\ref{fig:nonequicomp}.

\section{Proposed Method}
\label{sec:method}
\vspace{-0.5em}
\subsection{Non-equispaced Fourier Transform}\label{sec:nefft} 
\vspace{-0.5em}
Nonuniformly signals are unavoidable in certain real-world physics scenarios, such as signals obtained by meteorological stations on the earth surface, which urge the fast Fourier transform (FFT) to be extended to non-equispaced data with efficient implementation of FFT.  
Non-equispaced FFTs usually rely on a mixture of interpolation and the judicious use of FFT, where the calculations of interpolation are no more than $O(n_s\log n_s)$ operations \citep{6267881,7891596}.
For example, Lagrange interpolation is used to approximate the signal values on $m_s$ resampled equispaced points $\{\bm{x}_j\}_{1\leq j \leq m_s}$, and then implement FFT on the interpolated points.
A low rank approximation with complexity of $O(n_s\log(1/\epsilon))$ is used to replace the interpolation with complexity of $O(n_s^2)$ with $\epsilon$ as the precision of computations \citep{DUTT199585}.
Another example is commonly used Gaussian-based interpolation \citep{5474076}.
Denote $\mathcal{F}$ as equispaced FFT in particular, and $\mathcal{H}$ as the interpolation operator, and the proposed non-equispaced FFT can be written as
\begin{equation}
        (\mathcal{F}\circ\mathcal{H}(f))(\bm{k}) 
     \approx \sqrt{\frac{\pi}{\tau}}e^{\tau<\bm{k},\bm{k}>} \sum_{j=1}^{m_s}e^{-2i\pi<\bm{k}, \bm{x}_j>}\sum_{i=1}^{n_s} f(\bm{x}_i) h_{\tau}(\bm{x}_i - \bm{x}_j).
\end{equation}
$\mathcal{H}(f)(\bm{x}_j) = \sum_{i=1}^{n_s} f(\bm{x}_i) h_{\tau}(\bm{x}_i - \bm{x}_j) $ interpolates values on resampled points via convolution with the periodic heat kernel $h_\tau(\bm{x} - \bm{y}) = \sum_{\bm{l}\in \mathbb{Z}^{d}} e^{-(\bm{x}-\bm{l})^2/4\tau}$, with $\tau$ as a constant. 
Multiplication of the inperploation matrix $(H_\tau)_{i,j} = h_\tau(\bm{x}_i - \bm{x}_j)$ and the signal vector $(\bm{f})_i = f(\bm{x}_i)$ includes $O(n_s m_s) \approx O(n_s^2)$ operations. 
For the kernel $h_{\tau}$, it is a summation of Gaussian kernel, and convolution with a single Gaussian in each points $\bm{x}_i$'s neighborhood $\mathcal{N}(\bm{x}_i)$ can yeid a tiny error depending on $\tau$, so the interpolation operator can be approximate via $\mathcal{H}(f)(\bm{x}_j) = \sum_{\bm{x}_j \in \mathcal{N}(\bm{x}_i)} f(\bm{x}_i) h_{\tau}(\bm{x}_i - \bm{x}_j)$.  Restricting the neighbor number to $|\mathcal{N}(\bm{x}_i)|\leq \log n_s$ leads the complexity to reduce to $O(n_s\log n_s)$.

\subsection{Non-equispaced Fourier Neural PDE Solver}
\label{sec:nefns}
\begin{figure}[]
    \centering
    \includegraphics[width=1.0\linewidth, trim = 0 0 0 00,clip]{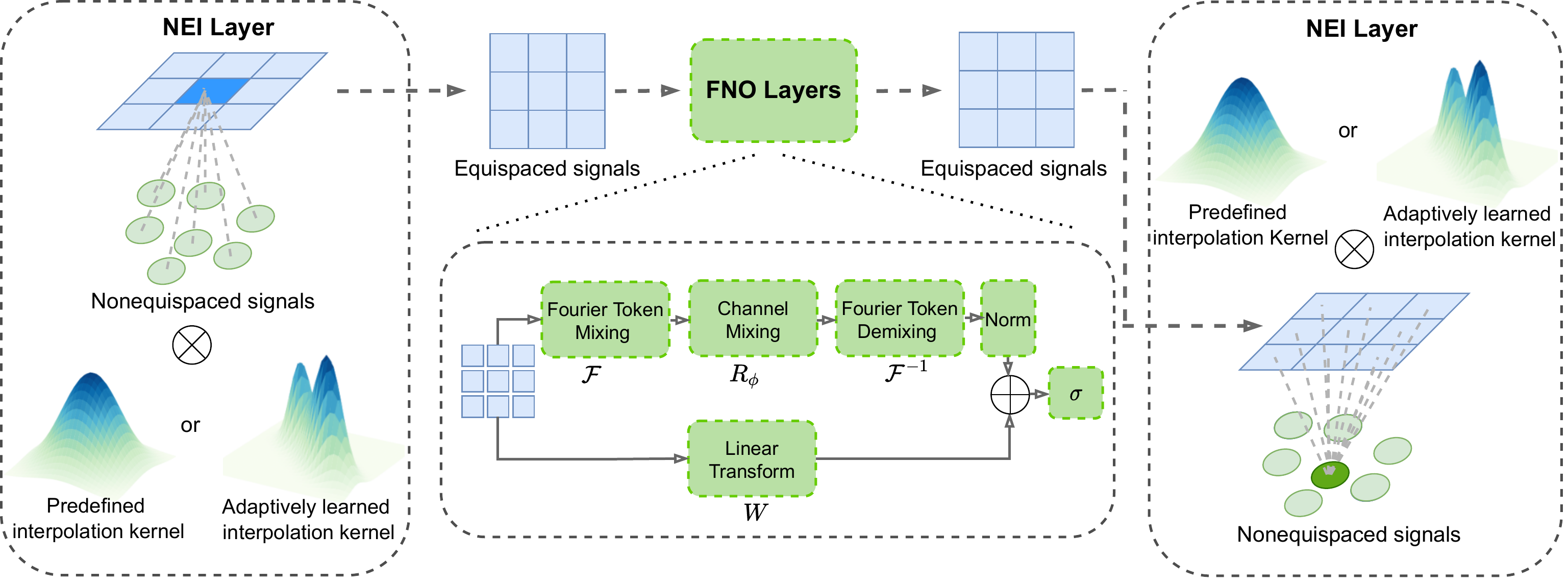}\vspace{-1em}
    \caption{The architecture of \textsc{NFS}: In non-equispaced inperpolation (NEI) layers, the interpolation kernels are adaptively learned rather than predefined, and the interpolated equispaced signals are processed through a stack of FNO layers with the same structure of Vision Mixers.}\vspace{-1em}
    \label{fig:nfsframe}
\end{figure}
\paragraph{Non-equispaced interpolation.} To harness the effectiveness of \textsc{FNO}, we use non-equispaced Fourier token mixing instead of the equispaced one. It generalizes the equispaced FFT in Eq. (\ref{eq:fourierop}) as 
\begin{equation}
     \mathcal{\tilde F}(v) = (\mathcal{F}\circ\mathcal{H}_{\eta}(a)) (v).
\end{equation}
We denote  $\mathcal{H}_{\eta}: \mathcal{A} \rightarrow \mathcal{L}(\mathcal{U})$ as the interpolation operator mapping, which maps parametric function to a bounded interpolation operator.
$\mathcal{H}_\eta(a)$ gets the inerploated values on $m_s$ resampled equispaced points via the convolution with kernel $h_\eta$ as  
\begin{equation}
    \begin{aligned}
        (\mathcal{H}_\eta(a) v)(\bm{x}_j) &= \frac{1}{n_s}\sum_{i=1}^{n_s} v(\bm{x}_i) h_\eta(\bm{x}_j - \bm{x}_i, \bm{x}_i, a(\bm{x}_i)), \label{eq:intep}
    \end{aligned}
\end{equation}
where $\bm{x}_j$ lies on resampled equispaced grids. Another $\mathcal{H}^{\prime}_\zeta$ interpolates back on the $n_s$ non-equispaced ones in the same way via the convolution with kernel $h_\zeta$.
To reduce the operations to no more than $O(n_s\log n_s)$, the summation is restricted in the neighborhood of $\bm{x}_i$ and $\bm{x}_j$, such that $|\mathcal{N}(\bm{x}_i)| \approx |\mathcal{N}(\bm{x}_j)| \leq c \log n_s$ with $c$ as a predefined constant determining the neighborhood size of spatial points.
We formulate the kernel with a shallow feed-forward neural networks. 
Thanks to the universal approximation of neural networks, the following theorem assures that the interpolation operator can approximate the representation function $v$ arbitrarily well. (For detailed proof, see Appendix.~\ref{app:thmproof}.)
Empirical observations on the convergence of interpolation operators are given in Appendix C.

\begin{theorem}[Approximation Theorem of the Adaptive Interpolation]
   Assume the setting of \textrm{\textbf{{Theorem} \ref{thm:univerker}}} in Appendix.~\ref{app:thmproof} is satisfied. $\mu$ is the probability measure supported on $D$. For $v \in \mathcal{U}$, suppose $\mathcal{U} = L^p(D; \mathbb{R}^{d_v})$, for any $1< p < \infty$. Then, given $\epsilon > 0$, there exist a neural network $h_\eta:\mathbb{R}^d \times \mathbb{R}^d \times \mathbb{R}^{d_a} \rightarrow \mathbb{R}^{d_v}$, such that $||\hat v - v||_{\mathcal{U}} \leq \epsilon $, where $\hat v(\bm{x}) = \int_{D}h_\eta(\bm{x} - \bm{y}, \bm{x}, a(\bm{y}))v(\bm{y})d\mu(\bm{y})$. 
\end{theorem}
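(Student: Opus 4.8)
The plan is to realize the integral operator as a small perturbation of an \emph{approximate identity}: first I would exhibit a continuous ``ideal'' kernel $\kappa_\delta$ whose induced operator $T_\delta$ sends $v$ into an $L^p$-neighbourhood of $v$, and then invoke the universal-approximation guarantee of Theorem~\ref{thm:univerker} to replace $\kappa_\delta$ by a feed-forward network $h_\eta$ while keeping the operator perturbation small. The third argument $a(\bm y)$ is irrelevant for this statement: the network I build simply ignores its third input, which is harmless.

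\emph{Construction of the $\mu$-adapted mollifier and convergence to the identity.} Let $\rho$ be the density of $\mu$ (bounded above by $c_1$ and away from $0$, part of the setting of Theorem~\ref{thm:univerker}), let $\phi_\delta\ge 0$ be a compactly supported mollifier of scale $\delta$ with $\int\phi_\delta=1$, and set $Z_\delta(\bm x)=\int_D\phi_\delta(\bm x-\bm y)\,d\mu(\bm y)$; for a domain with, e.g., Lipschitz boundary one has $Z_\delta(\bm x)\ge c>0$ uniformly for $\bm x\in D$ and small $\delta$. Define $\kappa_\delta(\bm x-\bm y,\bm x):=\phi_\delta(\bm x-\bm y)/Z_\delta(\bm x)$: it uses the evaluation point only through the allowed second slot, it is continuous and bounded on the compact set $(\overline{D-D})\times\bar D$, and it is \emph{exactly normalized}, $\int_D\kappa_\delta(\bm x-\bm y,\bm x)\,d\mu(\bm y)=1$. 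Writing $T_\delta$ for the corresponding operator, exact normalization gives
\[
(T_\delta f)(\bm x)-f(\bm x)=\int_D\kappa_\delta(\bm x-\bm y,\bm x)\,\bigl(f(\bm y)-f(\bm x)\bigr)\,d\mu(\bm y),
\]
so for continuous $f$, uniform continuity on $\bar D$ together with the shrinking support of $\phi_\delta$ yields $\|T_\delta f-f\|_{L^\infty(D)}\to 0$ and hence $\|T_\delta f-f\|_{\mathcal U}\to 0$. Bounding $\kappa_\delta\le (c_1/c)\,\phi_\delta$ and applying Young's inequality gives $\sup_{\delta\le\delta_0}\|T_\delta\|_{\mathcal U\to\mathcal U}<\infty$, so by density of continuous functions in $\mathcal U=L^p(D;\mathbb R^{d_v})$ for $p<\infty$ the standard three-$\epsilon$ argument upgrades the convergence to all of $\mathcal U$; in particular fix $\delta$ with $\|T_\delta v-v\|_{\mathcal U}<\epsilon/2$.

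\emph{Replacing the kernel by a network and assembling.} Apply Theorem~\ref{thm:univerker} to the continuous kernel $\kappa_\delta$ to obtain a feed-forward network $h_\eta:\mathbb R^d\times\mathbb R^d\times\mathbb R^{d_a}\to\mathbb R^{d_v}$ (ignoring its third input) with $\sup|h_\eta-\kappa_\delta|<\gamma$. Since $\mu$ is a probability measure and $D$ is bounded, Hölder's inequality and the density bounds relating $L^p(\mu)$ and $L^p(D)$ give $\|(T_{h_\eta}-T_\delta)v\|_{\mathcal U}\le C\gamma\|v\|_{\mathcal U}$ for a constant $C$ depending only on $D,\mu,p$; choose $\gamma$ so that this is $<\epsilon/2$. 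Then $\hat v=T_{h_\eta}v$ is bounded (as $h_\eta$ is bounded on the relevant compact set and $v\in L^1(\mu)$), hence $\hat v\in\mathcal U$, and $\|\hat v-v\|_{\mathcal U}\le\|(T_{h_\eta}-T_\delta)v\|_{\mathcal U}+\|T_\delta v-v\|_{\mathcal U}<\epsilon$.

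\textbf{Main obstacle.} The universal-approximation step is routine. The substance is in the mollifier construction: the operator integrates against the \emph{sampling} measure $\mu$, so a naive symmetric mollifier would approximate multiplication by $\rho$ rather than the identity and would lose mass near $\partial D$. Both defects are cured by the pointwise renormalization $Z_\delta(\bm x)$ — which is exactly why the theorem permits the kernel to depend on $\bm x$ in addition to $\bm x-\bm y$ — at the price of mild regularity ($\rho$ continuous and bounded above and below, $\partial D$ regular enough that $Z_\delta$ stays bounded below and continuous), which I expect is precisely what ``the setting of Theorem~\ref{thm:univerker}'' supplies.
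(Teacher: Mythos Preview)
Your argument is essentially correct but takes a genuinely different route from the paper. The paper does \emph{not} build an approximate identity. Instead it (i) uses a compactness lemma to produce a finite-rank approximation $v\approx\sum_{j=1}^n G_j(v)\varphi_j$, (ii) invokes the Riesz representation theorem to write each linear functional as $G_j(v)=\int_D v(\bm y)g_j(\bm y)\,d\mu(\bm y)$, (iii) approximates the $g_j\in L^q$ by continuous $\psi_j$, and (iv) applies the classical universal approximation theorem to the resulting separable kernel $\sum_j\psi_j(\bm y)\varphi_j(\bm x)$. The passage to the signature $(\bm x-\bm y,\bm x,a(\bm y))$ and to $d_v>1$ is then done by trivial reparametrization corollaries, exactly as you handle the third argument by ignoring it.

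Your mollifier approach is more direct for this particular statement and avoids the finite-rank machinery, but it buys that simplicity at the cost of extra regularity: you need $\mu$ to have a density bounded above and away from zero and $\partial D$ regular enough to keep $Z_\delta$ uniformly positive. Those hypotheses are \emph{not} part of ``the setting of Theorem~\ref{thm:univerker}'' --- the paper only assumes $\mu$ is a probability measure supported on compact $D$ --- so your proof covers a strictly smaller class of measures than the paper's. One further quibble: when you write ``Apply Theorem~\ref{thm:univerker} to the continuous kernel $\kappa_\delta$ to obtain $\sup|h_\eta-\kappa_\delta|<\gamma$'', you are not actually using Theorem~\ref{thm:univerker} (which is an operator-approximation statement, not a sup-norm density statement); what you need there is the classical Hornik-type universal approximation theorem, which the paper cites and uses internally. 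With that citation corrected, your argument goes through under your stated regularity assumptions.
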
 

\paragraph{Applicability of Layer-Norm.} As shown in Fig.~\ref{fig:nfsframe}, besides the comparison of the proposed interpolation operator with the traditional ones,
a notable difference between the original \textsc{FNO} and FNO layers in our Vision Mixer architecture is the applicability of normalization layers (Layer-Norm) which is usually used in Vision Mixers' architecture. \textsc{FNO} cannot adapt Layer-Norm layers, because the change of resolution will make the trained normalization parameters and spatial points disagree with each other.  
In comparison, the resampled equispaced points are fixed in our architecture, no matter how the discretization of the input changes. Therefore, the normalization layers can be added, in a similar way to Vision Mixers, bringing considerable improvements (See Sec.~\ref{sec:expanaly}).  

\paragraph{Mesh invariance.} In the intermediate layers, which adopt equispaced \textsc{FNO}, the resampled points are fixed in both training and inference process, invariant to the input meshes.
In the interpolation layers, the operator $\mathcal{H}_\eta(a)$ is discretization-invariant because the kernel can be inductively obtained by the newly observed signals $a(\bm{x})$, its coordinate $\bm{x}$ and resampled spatial points' relative coordinates $\bm{x}_j - \bm{x}$.
In the same way, $\mathcal{H}_\zeta^{\prime}(a)$ is also mesh-invariant. 
This allows the \textsc{NFS} to achieve zero-shot mesh-invariant inference ability, which is demonstrated in Sec.~\ref{sec:expnumer}.

\paragraph{Complexity analysis.} The complexity of \textsc{FNO} is $O(n_s \log n_s + n_sk_{\mathrm{max}})$. In the interpolation layers, because the interpolated values of resampled points are determined by their neighbors,  we set the size of each resampled point's neighborhood in $\mathcal{G}$ and observed non-equispaced points's neighborhood in $\mathcal{G}'$ as $|\mathcal{N}(\bm{x}_i)| \approx |\mathcal{N}(\bm{x}_j)| \leq c \log (n_s)$, for $1\leq i\leq n_s, 1\leq j \leq m_s$.
And in this way, the sparsity of the interpolation matrix reduces the complexity of the two interpolation layers to $O(c\cdot n_s\log n_s + c\cdot m_s\log n_s)$.
If we set the resampled points number as $n_s$, the overall complexity is $O(2c\cdot n_s\log n_s + n_s\log n_s + n_sk_{\mathrm{max}}) \sim O(n_s\log n_s + n_sk_{\mathrm{max}})$.
\subsection{Further Discussion}

\label{sec:discussion}
\vspace{-0.1cm}
\paragraph{Relation to Vision Mixer.} The interpolation can be compared to patchwise embedding in Vision Mixers. 
For example, \textsc{MLPMixer} learns the token mixing patterns adaptively with a feed-forward network, 
but the high resolution of input images does not permit the global mixing of tokens due to the complexity of $O(n_s^2)$. 
Therefore, the input images are firstly rearranged into patches, with each patch containing $n_p$ pixels. In this way, the complexity is reduced to $O(n^2_s/n^2_p)$, enabling feasible token mixing.   
The patchwise embedding is very similar to interpolating the values on resampled points, as the former one first chooses patches' centers as $n_s^2/n_p^2$ resampled points, and `interpolates' the resampled points by lifting the embedding dimension and the rearranging of their neighbors' values as the interpolated values, rather than using a kernel.  

\paragraph{Relation to multipole graph model.} The adaptively learned interpolation layer in NFS has a similar formulation of multipole graph models \citep{li2020multipole}. 
In multipole graph models, the high-level nodes aggregate messages from their low-level neighbors as 
  $
   v^{\mathrm{High}}(\bm{x}_j) = \frac{1}{|\mathcal{N}(\bm{x}_j)|}\sum_{\bm{x}_i \in \mathcal{N}(\bm{x}_j)}v^{\mathrm{Low}}(\bm{x}_i)h_\eta(\bm{x}_j, \bm{x}_i, a(\bm{x}_j), a(\bm{x}_i)).       
  $ 
Compared to multipole graph models, the values of high-level resampled equispaced nodes are approximated with low-level non-equispaced nodes' values in \textsc{NFS}, but nodes' values of low levels are given in multipole graphs.
This causes differences in multipole graph models' message-passing and \textsc{NFS}'s interpolation: In the former one, messages flow circularly among different levels of nodes, while in \textsc{NFS}, messages only exchange twice between the nodes of two levels -- one is from low-level non-equispaced nodes to high-level resampled equispaced nodes, and the other is the opposite. 
 
\section{Experiments}
\label{sec:exp}
\vspace{-0.1cm}
\subsection{Experiemntal Setup}
\label{sec:setup}
\vspace{-0.1cm}
\textbf{Benchmarks for comparison.} 
For finite-dimensional operators, we choose Vision Mixers including \textsc{ViT} \citep{Dosovitskiy2020vit}, \textsc{GFN} \citep{Rao2021gfn} and \textsc{MLPMixer} \citep{Tolstikhin2021mlpmixer} as equispaced problem solvers,
with \textsc{DeepONet-V} and \textsc{DeepONet-U} as two variants for DeepONet\citep{lulu2021deeponet} and graph spatio-temporal models including \textsc{DCRNN} \citep{li2018diffusion}, \textsc{AGCRN} \citep{bai2020adaptive} and \textsc{GCGRU} \citep{seo2016gcgru} as non-equispaced problem solvers.
For infinite-dimensional operators, the state-of-the-art FNO \citep{li2021fourier} for equispaced problems and \textsc{MPPDE} \citep{Brandstetter2022} for non-equispaced problems are chosen. 
A brief introduction to these models is shown in Appendix.~\ref{app:modeldiscribe}.
In Vision Mixers, the different timestamps in temporal axis are also regarded as `tokens' in that timestamps are uniformly sampled.

\textbf{Protocol.} The widely-used metrics - Mean Absolute Error (MAE) and Root Mean Square Error (RMSE) are deployed to measure the performance. 
The reported mean and standard deviation of metrics are obtained through 5 independent experiments.
All the models for comparison are trained with target function of MSE, i.e. $\mathcal{C}(u,v)(\bm{x}) = ||u(\bm{x}) - v(\bm{x})||^2$ corresponding to Eq.~(\ref{eq:opt}), and optimized by Adam optimizer in 500 epochs. 
The hyper-parameters are chosen through a carefully tuning on the validation set. Every trial is implemented on a single
Nvidia-V100 (32510MB).
\subsection{Numerical Experiments}
\vspace{-0.1cm}
\label{sec:expnumer}
\textbf{Data.} We choose four equations for numerical experiments, three of which are time-dependent (KdV, Burgers' and NS), while the other one is not (Darcy Flows). 
For 1-d problem, we consider Korteweg de Vries (KdV) and Burgers' equation (given in Appendix.~\ref{app:data}.).

For 2-d PDEs, we consider Darcy Flow (given in Appendix.~\ref{app:data}.) and Navier-Stokes (NS) equation for a viscous, incompressible fluid in vorticity form on the unit torus:
\begin{equation}
\begin{aligned}
    \partial_{t}w(\bm{x},t) + u(\bm{x},t) \cdot \nabla w(\bm{x},t) &= \nu  \Delta w(\bm{x},t) + f(\bm{x}), \\
    \nabla \cdot u(\bm{x},t) = 0, \quad &\quad \label{eq:ns}
    w(\bm{x}, 0) = w_0(\bm{x}), 
\end{aligned}
\end{equation}
where $\bm{x} \in [0,1]^2, t\in[0,1]$.  $u$ is the velocity field, $w = \nabla \times u$ is the vorticity, $w_0$ is the initial vorticity, $\nu \in \mathbb{R}^{+}$ is the viscosity coefficient, and $f$ is the forcing function.

The total number of instances is 1200, with percentages of 0.7, 0.1 and 0.2 for training, validating and testing, respectively.
The original simulated resolutions of PDE signals are $128\times 128$ in NS equation. For others, see Appendix.~\ref{app:data}.
When evaluating their performance in equispaced scenarios of different resolutions, we can downsample the resolution for training to low-resolution data, e.g. $64\times 64$ in NS equation.
To evaluate their performance in non-equispaced scenarios of different meshes, we randomly choose $n_s$ spatial points for training.
\begin{table}[ht] \vspace{-1em}
   \centering
   \caption{MAE($\times 10^{-3}$) comparison with vision mixer benchmarks.} \label{tab:eqcomp}
   \resizebox{0.99\columnwidth}{!}{
   \begin{tabular}{lrrrrrrrrrrrr}
   \toprule                                                                                                                                                                                                                                                        
      & \multicolumn{3}{c}{\texttt{Burgers'} ($n_t = 10$)}                                                                                                                                                                       & \multicolumn{3}{c}{ \texttt{Darcy Flow}}        & \multicolumn{3}{c}{\texttt{NS} ($n_t = 1$)}                                                                                                                                                                                                                                                               & \multicolumn{3}{c}{\texttt{NS} ($n_t = 10$)}                                                                                                                                                                                                                                                          \\
                                         \cmidrule(lr){2-4} \cmidrule(lr){5-7} \cmidrule(lr){8-10} \cmidrule(lr){11-13} 
   \multicolumn{1}{c}{$r$}      &  \multicolumn{1}{c}{$512$}   &  \multicolumn{1}{c}{$512$}  &     \multicolumn{1}{c}{$1024$}     &   \multicolumn{1}{c}{$64$}    &   \multicolumn{1}{c}{$128$}   &       \multicolumn{1}{c}{$256$}     &  \multicolumn{1}{c}{$64$}     &  \multicolumn{1}{c}{$64$}    &  \multicolumn{1}{c}{$128$}      &  \multicolumn{1}{c}{$64$}     &         \multicolumn{1}{c}{$64$}        &  \multicolumn{1}{c}{$128$}        \\
   \multicolumn{1}{c}{$n'_t$}      &  \multicolumn{1}{c}{$10$}    &   \multicolumn{1}{c}{$40$}  &     \multicolumn{1}{c}{$20$}       &   \multicolumn{1}{c}{$1$}     &   \multicolumn{1}{c}{$1$}    &        \multicolumn{1}{c}{$1$}      &  \multicolumn{1}{c}{$10$}     &  \multicolumn{1}{c}{$40$}    &  \multicolumn{1}{c}{$20$}       &  \multicolumn{1}{c}{$10$}     & \multicolumn{1}{c}{$40$}                &   \multicolumn{1}{c}{$20$}              \\              
                                      \midrule
   \textsc{VIT}      & 0.5042                                                     & 2.4269                                                     & 1.5327                                                     &    0.5073                                                &   0.9865                                              &    1.1078               & 9.3797                                                    & 22.8565                                                    & 15.7398                                                    & 3.9609                                                  & 12.3433                                                      & 9.3010                                                                    \\
   \textsc{MLPMixer} & 0.1973                                                     & 0.4210                                                     & 0.3303                                                     &    0.4970                                              &     0.8909                                              &      0.9125             & 7.5246                                                     & 15.8632                                                    & 14.9360                                                    & 3.1530                                                    & 7.9291                                                         & 7.7410                                                            \\
   \textsc{GFN}      & 0.2383                                                     & 0.4187                                                     & 0.3500                                                     &    0.4739                                               &     0.8659                                             &      0.9618             & 3.5524                                                    & 10.2250                                                    & 6.3976                                                     & 1.7396                                                     & 5.4464                                                     & 3.1261                                                                                           \\
   \textsc{FNO}      & 0.0978                                                     & 0.1815                                                     & \textbf{\textsc{0.1430}}                                   &    0.4289                                                &   0.7086                                              &      0.9075             & 3.3425                                                  & 8.9857                                                    & 4.4627                                                     & 2.4076                                                         & 7.6979                                                      & 3.7001                                                               \\
   \textsc{DeepONet-U}      & 0.4471                                                    & 1.9624                                                   & 0.6541                                   &    0.3753                                                &   0.9488                                              &      0.9692             & 7.4912                                                  & 16.0440                                                   & 14.3476                                                     & 3.4436                                                         & 10.2950                                                     & 7.1394                                                               \\
   \textsc{DeepONet-V}      & 0.4782                                                     & 2.1707                                                    & 1.6131                                &    0.5119                                                &   0.9614                                              &      1.3216             & 8.6986                                                   & 18.5561                                                    & 16.0587                                                   & 3.9745                                                        & 12.3314                                                     & 9.3471                                                               \\
   \textsc{NFS}      & \textbf{\textsc{0.0958}}                                   & \textbf{\textsc{0.1708}}                                   & 0.1474                                                     &    \textbf{\textsc{0.1497}}                                &     \textbf{\textsc{0.2254}}                        &  \textbf{\textsc{0.4216}}                 & \textbf{\textsc{1.7425}}                           & \textbf{\textsc{4.7882}}                                & \textbf{\textsc{2.6988}}    & \textbf{\textsc{0.8636}}   & \textbf{\textsc{3.1122}} & \textbf{\textsc{1.8406}}\\
   \bottomrule
   \end{tabular}}
\end{table}\vspace{-1em}

\begin{table}[ht]
   \centering\vspace{-1.5em}
   \caption{MAE($\times 10^{-3}$) comparison with graph spatio-temporal benchmarks.} \label{tab:noneqcomp}
   \resizebox{0.99\columnwidth}{!}{
   \begin{tabular}{lrrrrrrrrrrrr}
   \toprule                                                                                                                                                                                                                                                        
      & \multicolumn{3}{c}{\texttt{Burgers'} ($n_t = 10$)}                                                                                                                                                                                                                                                               & \multicolumn{3}{c}{\texttt{Darcy Flow} }        & \multicolumn{3}{c}{\texttt{NS} ($n_t = 1$)}                                                                                                                                                                                                                                                               & \multicolumn{3}{c}{\texttt{NS} ($n_t = 10$)}                                                                                                                                                                                                                                                          \\
                                         \cmidrule(lr){2-4} \cmidrule(lr){5-7} \cmidrule(lr){8-10} \cmidrule(lr){11-13} 
   \multicolumn{1}{c}{$n_s$}     &  \multicolumn{1}{c}{$512$}   &  \multicolumn{1}{c}{$512$}  &     \multicolumn{1}{c}{$256$}     &   \multicolumn{1}{c}{$4096$}    &   \multicolumn{1}{c}{$16384$}   &       \multicolumn{1}{c}{$1024$}     &  \multicolumn{1}{c}{$4096$}     &  \multicolumn{1}{c}{$4096$}    &  \multicolumn{1}{c}{$1024$}      &  \multicolumn{1}{c}{$4096$}     &         \multicolumn{1}{c}{$4096$}        &  \multicolumn{1}{c}{$1024$}        \\
   \multicolumn{1}{c}{$n'_t$}  &  \multicolumn{1}{c}{$10$}    &   \multicolumn{1}{c}{$40$}  &     \multicolumn{1}{c}{$20$}       &   \multicolumn{1}{c}{$1$}     &   \multicolumn{1}{c}{$1$}    &        \multicolumn{1}{c}{$1$}      &  \multicolumn{1}{c}{$10$}     &  \multicolumn{1}{c}{$40$}    &  \multicolumn{1}{c}{$20$}       &  \multicolumn{1}{c}{$10$}     & \multicolumn{1}{c}{$40$}                &   \multicolumn{1}{c}{$20$}              \\              
                                      \midrule
   \textsc{DCRNN\quad }      & 2.6122                                      & 8.5880                           & 4.6126                                                                                             &  1.7629                                                    &  OOM                                                       &  1.8146                           & 30.6756                                                    & 88.3382                                                   & 52.1290                                                     &  8.7025                                                  & 59.6602                                                    & 27.1069                                                                    \\
   \textsc{AGCRN}      & 4.6667                                           & 15.6143                         & 10.4900                                                                                                      &  1.7336                                                     &  OOM                                                   &  1.6938                       & OOM                                                   & OOM                                                   & 59.9393                                                   & OOM                                                     & OOM                                                         & 42.4197                                                          \\
   \textsc{GCGRU}      & 1.6643                                          & 5.7653                           & 3.1400                                                                                                        &  1.7403                                                  &  OOM                                                    &  1.7633                        & 28.8537                                                   & 85.9303                                                  & 49.9352                                                    & 6.3570                                                   & 57.2493                                                     & 21.3537                                                                                         \\
   \textsc{MPPDE}      & 1.1271                                          & 4.1213                            & 2.4554                                                                                                 &  0.5608                                                      &  0.6384                                                      &  0.6673                      & 8.9810                                                  & 54.2387                                                    & 20.7453                                                  & 5.4353                                                       & 42.3057                                                      & 17.5902                                                              \\
   \textsc{NFS}      & \textbf{\textsc{0.1085}}                          & \textbf{\textsc{0.1983}}           & \textbf{\textsc{0.1634}}                                                                    &  \textbf{\textsc{0.1430}}                                                 &  \textbf{\textsc{0.2379}}                             &  \textbf{\textsc{0.1727}}        & \textbf{\textsc{2.1992}}                                 & \textbf{\textsc{4.7865}}                                 & \textbf{\textsc{3.9178}}                                                    & \textbf{\textsc{0.9335}}                                      & \textbf{\textsc{3.2768}}                         & \textbf{\textsc{1.8239}}\\
   \bottomrule
   \end{tabular}}

\end{table}
      
\textbf{Performance comparison.} In this part, for time-dependent PDEs, our target is to map the observed physical quantities from initial condition $u(\bm{X}, \bm{T}) \in \mathbb{R}^{n_s \times n_t}$, where $\bm{T} = \{t_i:t_i < T\}_{1\leq i \leq n_t}$,  to quantities at some later time $u(\bm{X}, \bm{T'}) \in \mathbb{R}^{n_s \times n'_t}$, where $\bm{T'} = \{t_i:T < t_i < T'\}_{1\leq i \leq n'_t}$.
We set the input timestamp number $n_t$ as 1 (initial state to future dynamics) and 10 (sequence to sequence), and prediction horizon $n'_t$ as 10, 20 and 40 as short-, mid- and long-term settings.
For Darcy Flows, which are independent of time, we directly build an operator to map $a$ to $u$.
In equispaced scenarios, the resolution is denoted by $r^d = n_s$, where $d$ is the spatial dimension.
In non-equispaced scenarios, the spatial points number is denoted by $n_s$. The comparison of benchmarks with or without equispace-unnecessity are shown in Table.~\ref{tab:eqcomp} and \ref{tab:noneqcomp} respectively, and detailed results including KdV equations with RMSE and standard deviations are given in Appendix.~\ref{app:kdvcomp}.  It can be concluded that
\textit{(1)} In equispaced scenarios, the proposed \textsc{NFS} obtains the lowest error in most 1-d PDE settings, and in solving 2-d PDEs, its superiority over other Vision Mixers are significant, with $42.85\%$ improvements on MAE according to the trials of \texttt{NS} ($r=64,n_t=10,n'_t=40$).
\textit{(2)} In non-equispaced scenarios, the evaluated graph spatio-temporal models' performance is unsatisfactory, especially in NS equations. In comparison, \textsc{NFS} achieves comparable high accuracy to the equispaced scenarios, for instance, according to columns of \texttt{NS} ($r=64,n_t=10, n'_t=40$) with ($n_s=4096,n_t=10, n'_t=40$).   
\textit{(3)} In some trials such as \texttt{Burgers'} ($n_t=1$) in Table.~\ref{tab:eqcomp1} in Appendix.~\ref{app:kdvcomp}, Vison Mixers including FNO also suffer from non-convergence of loss; while \textsc{NFS} can still generate accurate predictions. The explanation of the phenomenon will be our future work.   

\begin{figure}[h]
    \centering 
    \subfigure{\includegraphics[width=1.0\columnwidth, trim = 120 0 30 26,clip]{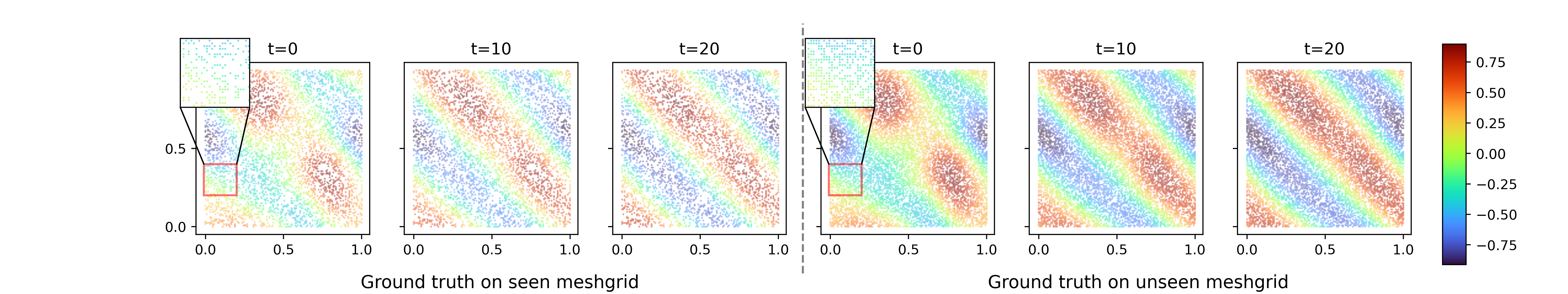}}\vspace{-1.1em}
    \subfigure{\includegraphics[width=1.0\columnwidth, trim = 120 0 30 26,clip]{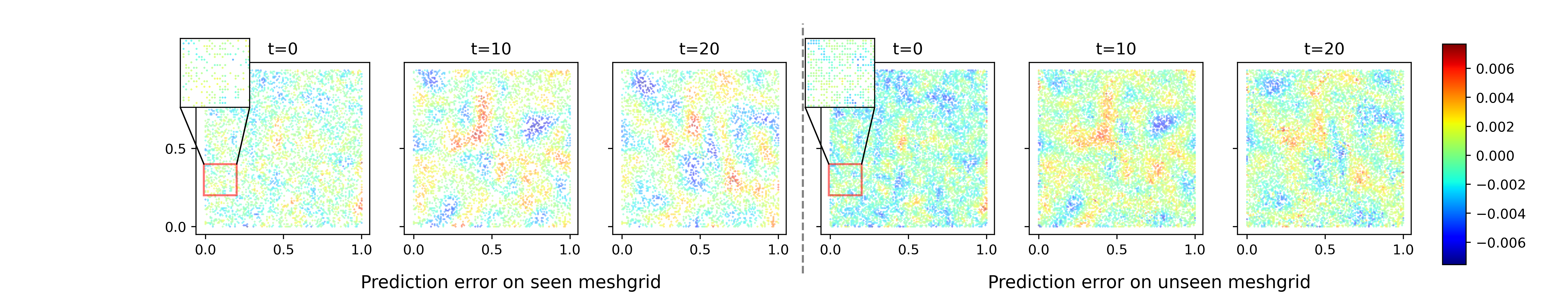}}\vspace{-1.3em}
                \caption{Visualization on non-equispaced NS equation: The training mesh ($n_s=4096$) differs from the mesh in inference process ($n'_s=8192$). Appendix.~\ref{app:visualization} gives more visualization.}\vspace{-1em} \label{fig:pdevis}
\end{figure}

\vspace{-0.1cm}
\begin{table}[h]
    \centering \vspace{-0.5em}
    \caption{Results on \texttt{NS} equation: MAE($\times 10^{-3}$) of \textsc{NFS} and different variants of \textsc{NFS} on seen and unseen meshes. `Flex + LN' is the proposed NFS.  `Flex' represents the flexible interpolation layer defined in Eq.~(\ref{eq:intep}), `LN' is the Layer-Norm and `Gaus' is the predefined Gaussian interpolation. Appendix.~\ref{app:meshinv} gives details and results on other equations.} \label{tab:meshinv}
    \resizebox{0.99\columnwidth}{!}{
      \begin{tabular}{c|>{\columncolor{gray!40}}rrr>{\columncolor{gray!40}}rrr>{\columncolor{gray!40}}rrr}
         \toprule
         & \multicolumn{3}{c}{$(n_s=4096, n_t=10, n_t'=10)$}                                                                             & \multicolumn{3}{c}{$(n_s=1024, n_t=10, n_t'=20)$}                                                                             & \multicolumn{3}{c}{$(n_s=4096, n_t=10, n_t'=40)$}                                                                             \\
         \cmidrule(lr){2-4} \cmidrule(lr){5-7}  \cmidrule(lr){8-10}
   $n'_s$ & \multicolumn{1}{>{\columncolor{gray!40}}c}{{Flex + LN}} & \multicolumn{1}{c}{Gaus + LN} & \multicolumn{1}{c}{Flex + \cancel{LN}} & \multicolumn{1}{>{\columncolor{gray!40}}c}{Flex + LN} & \multicolumn{1}{c}{Gaus + LN} & \multicolumn{1}{c}{Flex + \cancel{LN}} & \multicolumn{1}{>{\columncolor{gray!40}}c}{Flex + LN} & \multicolumn{1}{c}{Gaus + LN} & \multicolumn{1}{c}{Flex + \cancel{LN}} \\
   \midrule
   $n_s$  &  0.9335                             &   1.6341                                &  1.2138                             &  1.8239                             &    2.1976                               &   2.5119                            &     3.2768                          &  3.6422                                 &   5.6761                            \\
   $2n_s$ &  0.9731                             &   2.8589                                &  1.4882                             &  2.3530                             &    3.7465                               &   7.0203                            &     3.5439                          &  3.9092                           &   5.8975                            \\
   $3n_s$ &  1.1071                             &   3.4513                                &  1.6384                             &  2.5179                             &    5.7712                               &   7.9177                            &     3.6584                          &  4.2102                           &   6.6622                            \\
   $4n_s$ &  1.1015                             &   3.4357                                &  1.6975                             &  2.5919                             &    5.5990                               &   7.1962                            &     3.6608                          &  4.2628                           &    6.6951                           \\
\bottomrule   
\end{tabular}
}
\end{table}

\textbf{Mesh-invariance evaluation.} We use $(u(\bm{X}, \bm{T}), u(\bm{X}, \bm{T'}))$ as the training set,  and evaluate the model's performance of mesh-invariant inference ability on $\bm{X'}$, where $|\bm{X'}| = n'_s$.
$\bm{X'}$ is a different mesh with $\bm{X} \subseteq \bm{X'}$. The visualization results of \texttt{NS} ($n_s = 4096, n'_t=40$) are shown in Fig.~\ref{fig:pdevis}. For a fixed $n'_s$, we randomly sampled different $\bm{X'}$ for 100 times, to get the mean errors and standard deviations (given in Appendix.~\ref{app:meshinv}) of different spatial meshes.
We can conclude from Table.~\ref{tab:meshinv} that
\textit{(1)} The errors on unseen meshes are larger than the errors on seen meshes, showing the overfitting effects. However, the errors on unseen meshes are acceptable, since they are even lower than other models' prediction error on seen meshes.
\textit{(2)} Larger $n'_s$ leads to higher prediction error because a large number of unseen points are likely to disturb the learned token mixing patterns.  On the other hand, \texttt{NS} ($n_s=1024, n'_t=10$) implies that small spatial point numbers of training meshes ($n_s$) hinder model's generalizing ability on unseen meshes, due to excessive loss of spatial information.

\vspace{-0.2cm}
\subsection{Architecture Analysis}
\vspace{-0.2em}
\label{sec:expanaly}
Two modules in \textsc{NFS} differ from \textsc{FNO}. The first is the interpolation layers at the beginning and the end of the architecture.
The second is the extra Layer-Norm in the FNO layers, which can be applicable in \textsc{NFS} thanks to its fixed resampled equispaced points, but inapplicable to \textsc{FNO} for preserving its resolution-invariance.
We aim to figure out what makes \textsc{NFS} outperform \textsc{FNO}. 

\paragraph{Effects of neighborhood sizes.} It is widely believed that modeling the long-range dependency among tokens brings improvements \citep{vitproperty, convortrans, robusttrans}. By contrast, some local kernel methods demonstrate their superiority \citep{conselfatt, swintrans, twins, howvitwork}. For this reason,  we first conjecture that the large neighborhood sizes in the interpolation layer are conducive to predictive performance. \begin{wrapfigure}{r}{0.46\textwidth}\vspace{-1.6em}
   \begin{center}
     \includegraphics[width=0.46\textwidth, trim = 10 00 0 30,clip]{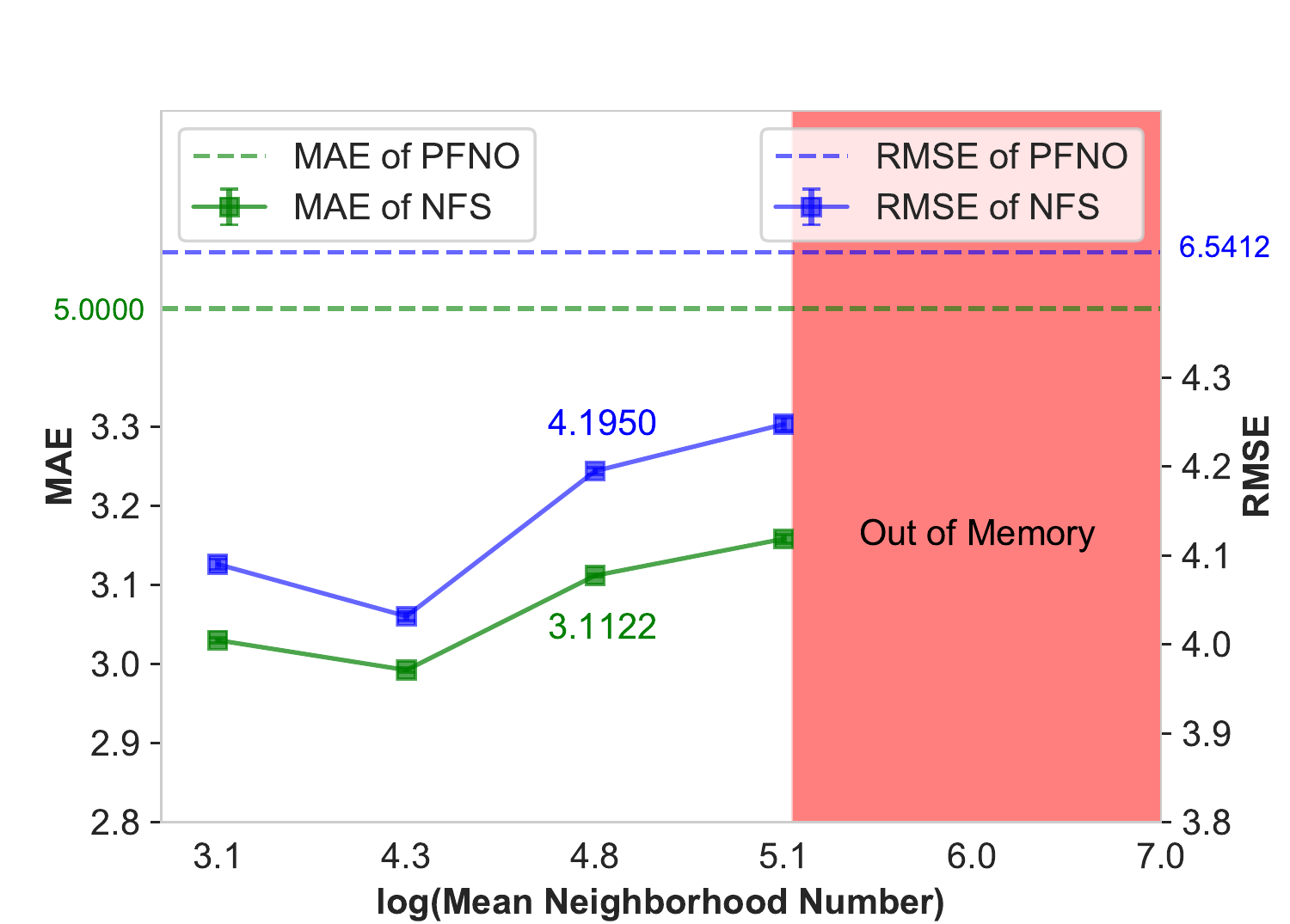}
   \end{center}\vspace{-1.2em}
   \caption{Effects of neighborhood sizes on \texttt{NS} ($r=64,n'_t=10, n'_t=40$).} \label{fig:nbhdcomp}\vspace{-1.5em}
 \end{wrapfigure} 
Besides, as demonstrated in Sec.~\ref{sec:discussion}, the patchwise embedding in Vision Mixers can be an analogy to the resampling and interpolating, so we further establish a patchwise \textsc{FNO} (\textsc{PFNO}), with patch size equaling to $4$ and $[4,4]$ in 1-d and 2-d PDE problems, equivalent to each resampled points aggregating 4 and 16 points in spatial domains in 1-d and 2-d situations respectively. 
 Layer-Norm is stacked in the \textsc{FNO} layers in \textsc{PFNO}, for a fair comparison. Results of Fig.~\ref{fig:nbhdcomp} show that the long-range dependency may even compromise the performance, as larger mean neighborhood sizes often cause higher errors. However, no matter how large is the neighbor size, the \textsc{NFS} outperforms \textsc{PFNO}. More details are given in Appendix.~\ref{app:nbhd}. Therefore, we rule out the possibility of performance gains brought form large neighborhood sizes and suppose that proposed kernel interpolation layers are the key, and is superior to the simple patchwise embedding methods.
\vspace*{-0.2em}
\paragraph{Benefits from learned interpolation kernel.}  Since the kernel interpolation is likely to hold the key to improvements, we investigate the performance gains brought from adaptively learned interpolation kernels over the predefined one (See Fig. ~\ref{fig:nfsframe}).
We use an inflexible Gaussian kernel $h(\bm{x}_j - \bm{x}_i) = \beta \exp(- (\bm{x}_j - \bm{x}_i - \bm{\mu})^{T}(\Gamma)^{-1}(\bm{x}_j - \bm{x}_i - \bm{\mu}))$ as a predefined one as discussed in Sec.~\ref{sec:nefft}, where $\Gamma = \mathrm{diag}(\gamma^{(1)}, \ldots, \gamma^{(d)})$, and $\bm{\mu} \in \mathbb{R}^{d}$, $\gamma^{(1)}, \ldots, \gamma^{(d)}, \beta  \in \mathbb{R}^{+}$  are learnable parameters.  
By setting all the other modules and the interpolation neighborhood sizes as the same, we compare performance on different meshes of the two interpolation kernels in Table.~\ref{tab:meshinv} (Gaus + LN), where the adaptively learned kernels achieve better accuracy.
\vspace*{-0.2em}
\paragraph{Benefits from normalization layers.}  Previous works demonstrated the normalization is necessary for network architecture, for fast convergence and stable training \citep{attisnotallyouneed,LayerNormalization}. A notable difference between \textsc{NFS} and \textsc{FNO} is that the Layer-Norm can be implemented in NFS's layers without disabling its discretization-invariance.
The improvements brought from the normalization layers are given in Table.~\ref{tab:meshinv} (Flex + \cancel{LN}), where the performance gap is obvious on unseen meshes. 
\vspace{-0.2em}
\subsection{Non-equispaced Vision Mixers}
\vspace{-0.3em}
Since \textsc{NFS} can be regarded as a combination of our interpolation layers with the revised \textsc{FNO}, our interpolation layers can also be implemented in the other Vision Mixers, so that these methods are equipped with the ability to handle non-equispaced data.
Details are given in Appendix.~\ref{app:interpmix}. We find that 
\textit{(1)} From Table.~\ref{tab:neqmixer} and Table.~\ref{tab:eqcomp}, the degeneration of performance is obvious in other Vision Mixers. In comparison, \textsc{FNO} as intermediate equispaced layers, truncates the high frequency in its channel mixing and retains the low frequency shared by both resampled and original signals, so the loss of accuracy in non-equispaced scenarios is tiny in our \textsc{NFS};
\textit{(2)} Although the performance on unseen meshes is more stable in these non-equispaced Vision Mixers, the performance gap is still large, according to Table.~\ref{tab:neqmixer} and Table.~\ref{tab:meshinv}.
\vspace{-0.2em}

\subsection{Complexity comparison}
\label{sec:complexity}
\vspace{-0.3em}
The discussed Fig.~\ref{fig:motivation} shows time complexity of each method.
Vision Mixers are the fastest, while graph spatio-temporal models are far much slower. \textsc{NFS} falls in between because the interpolation layer can be an analogy to a graph-message-passing layer (See Sec.~\ref{sec:discussion}), and the intermediate are equispaced token-channel mixing layers of Vision Mixers' structure.   
Memory usage shows storing each point's neighbors in the interpolation layer is very memory-consuming. 
\vspace{-0.5em}

\section{Conclusion and Future Work}
\vspace{-0.5em}
A simple method called \textsc{NFS} is established based on \textsc{FNO}, with theoretically enough expressiveness, allowing non-equispaced scenarios and fast computation for solving PDEs,
and achieves state-of-the-art performance in equispaced and non-equispaced scenarios. 
Problems still exist, including \textbf{relatively high error on unseen meshes} and \textbf{high memory cost in the interpolation layers}.

\bibliography{bibfile}
\bibliographystyle{iclr2023_conference}
\clearpage
\appendix

\section{Methods}
\setcounter{table}{0}   
\setcounter{figure}{0}
\setcounter{theorem}{0}
\renewcommand{\thetable}{A\arabic{table}}
\renewcommand{\thefigure}{A\arabic{figure}}
\renewcommand{\thetheorem}{A\arabic{theorem}}

\subsection{Notation}
\begin{table}[ht]
   \centering
   \resizebox{0.90\columnwidth}{!}{
   \begin{tabular}{lp{0.7\columnwidth}}
   \toprule
   Symbol                                                                                                                                  & Used for      \\    
   \midrule
   $\bm{X}$                                                                                                                                & A discretization of the domain $D$, which is used to train the model.                                                                                 \\
   \rowcolor{gray!40}
   $\bm{X'}$                                                                                                                                & A discretization of the domain $D$, which is used to evaluate the model.                                                                                 \\
   $\bm{x}$                                                                                                                                & Coordinate of spatial point in $D$.                                                                    \\
   \rowcolor{gray!40}
   $n_s$                                                                                                                                     & Number of spatial points of seen meshes for training, as $|\bm{X}| = n_s$.                                                                  \\
   $n'_s$                                                                                                                                     & Number of spatial points of unseen meshes for inference, as $|\bm{X'}| = n'_s$.                                                            \\
   \rowcolor{gray!40}
   $n_t$                                                                                                                                      & Number of input timestamps as the number of input time-dependent PDE's initial states.                              \\
   $n'_t$                                                                                                                                 & Number of output timestamps as the number of output time-dependent PDE's future states.              \\
   \rowcolor{gray!40}
   $a$                                                                                                                                  & Input function, where $a \in \mathcal{A}(D;\mathbb{R}^{d_a})$ means for $\bm{x} \in D$, $a(\bm{x}) \in \mathbb{R}^{d_a}$. In time-dependent PDEs, $a = u(\cdot, \bm{T})$, where $\bm{T} = \{t_i\}_{i=1}^{n_t}$.                                                 \\
   $u$                                                                                                                                    & Target function for approximation, where $u \in \mathcal{U}(D;\mathbb{R}^{d_u})$ means for $\bm{x} \in D$, $u(\bm{x}) \in \mathbb{R}^{d_u}$.                                                       \\
   \rowcolor{gray!40}
   $v$                                                                                                                                    & Representation function, where $v \in \mathcal{U}(D;\mathbb{R}^{d_v})$ means for $\bm{x} \in D$, $v(\bm{x}) \in \mathbb{R}^{d_v}$, which is a function obtained by a lifter which project $a$ into a higher dimensional space.                                                       \\
   
   $\mathcal{G}_\theta$                                                                                                                  & Approximation operator, where $\mathcal{G}_{\theta}(a) \approx u$.                                          \\
   \rowcolor{gray!40}

   $\mu$                                                                                                                                    &The probability measure for sampling Spatial points $\bm{x}$, which is supported on $D$.\\
   $\mathcal{C}$                                                                                                                              & Cost functional as the minimum optimization target.                                        \\ 
   \rowcolor{gray!40}

   $\mathcal{F}$                                                                                                                          & Discrete Fourier transform for equispaced spatial points.                          \\
   $\mathcal{F}^{-1}$                                                                                                                          & Discrete Inverse Fourier transform for equispaced spatial points.                          \\   
   \rowcolor{gray!40}

   $P$                                                                                                                                      & Project operator, with $P(a)(\bm{x}) \in \mathbb{R}^{d_v}$. \\
   $Q$                                                                                                                                      & Project operator, with $Q(v)(\bm{x}) \in \mathbb{R}^{d_u}$. \\
   \rowcolor{gray!40}

   $v^{\mathrm{t}}$                                                                                                                        & $\mathrm{t}$-th iterative representation function after kernel operators' update. \\
   $\mathcal{K}_{\phi}$                                                                                                                     &Kernel integral operator mapping, which maps $a$ to a bounded linear operator, with parameter $\phi$.\\  
   \rowcolor{gray!40}

   $W$                                                                                                                                  &Linear transform on the $d_v$ dimension (channel) of $v(\bm{x}) \in d_v$.\\
   $R_{\phi}$                                                                                                                                  &Fourier transform of a periodic kernel function, which is learnable parameters in a single iterative process in FNO.\\
   \rowcolor{gray!40}
   
   ChannelMix                                                                                                                                 &Channel-mixing operator as a linear transform in the dimensoion of channel ($d_v$).\\
   TokenMix                                                                                                                                 &Token-mixing operator as a linear transform in the dimensoion of spatial points ($n_s$).\\  
   \rowcolor{gray!40}
   
   $\tilde{\mathcal{F}}$                                                                                                                              & Proposed non-equispaced Fourier transform, where $\tilde{\mathcal{F}} = (\mathcal{F} \circ \mathcal{H}(a))$.   \\                                                           
   $m_s$                                                                                                                              & Spatial points' number on resampled equi-spaced points.                                                      \\
   \rowcolor{gray!40}
   
   $\mathcal{H}$                                                                                                                                & Interpolation operator to interpolate the non-equispaced spatial points on equispaced spatial grids.                                                           \\   
   $\tau$                                                                                                                       & Parameter in Gaussian interpolation kernels controlling smoothness of the kernel.                                                      \\
   \rowcolor{gray!40}
   $\bm{\mu}$                                                                                                                                & Parameter in Gaussian interpolation kernels, as the mean of Gaussian kernels.                 \\   
   
   $\mathcal{H}_{\eta}$                                                                                                                                & Interpolation operator mapping, where $\mathcal{H}_{\eta}(a)$ is an interpolation operator used to map signals on the non-equispaced spatial points to equispaced spatial grids.                                                       \\   
   \rowcolor{gray!40}
   $\mathcal{H'}_{\zeta}$                                                                                                                                & Interpolation operator mapping, where $\mathcal{H}'_{\zeta}(a)$ is a interpolation operator used to map signals on the equispaced spatial points to non-equispaced spatial grids.                                                           \\   
   
   $\mathcal{N}(\bm{x})$                                                                                                                                & Neighborhood of spatial point $\bm{x}$.                                                           \\
   \bottomrule
   \end{tabular}
   }
   \vspace*{0.2cm}
   \caption{Glossary of Notations used in this paper.}
   \end{table}

   \clearpage
\subsection{Graph construction}
\paragraph{Neighborhood construction.}
Instead of using K-nearest neighborhood method, the neighborhood system in the interpolation layer is constructed by $\epsilon$-ball, because in equispace scenarios, there will be multiple points as K-th nearest neighbor at the same time. For point $\bm{x}$, its neighbor is defined according to
\begin{equation} 
    \begin{cases}
      &d(\bm{x}, \bm{x}_i) \leq \epsilon \quad \quad \bm{x}_i\in \mathcal{N}(\bm{x});\\
      &d(\bm{x}, \bm{x}_i) > \epsilon \quad \quad \bm{x}_i \not\in \mathcal{N}(\bm{x}).\\
    \end{cases}
\end{equation}
For given $c$ defined in Sec.~\ref{sec:nefns}, we can restrict $\epsilon$ so that $\mathbb{E}_{x\sim \mu}[|\mathcal{N}(\bm{x})|] < c\log(n_s)$.
\subsection{Proof of Theorem 3.1.}
\label{app:thmproof}
Our proof is mostly based on \cite{Chen1995Universal} and \cite{LearningMaps}. For notation simplicity, in the proof, we directly write $\mathcal{H}_{\eta}(a)$ as $\mathcal{H}_{\eta}$ as the linear operator.
\begin{lemma}\label{lem:functn}
   Let $\mathcal{X}$ be a Banach space, and $\mathcal{U} \subseteq \mathcal{X}$ a compact set, and $\mathcal{K} \subset \mathcal{X}$ a dense set. Then, for any $\epsilon >0$, there exists a number $n \in \mathbb{N}$, and a series of continuous, linear functionals $G_{1}, G_{2},\ldots, G_{n} \in C(\mathcal{U};\mathbb{R})$, and elements $\varphi_{1},\ldots,\varphi_{n} \in \mathcal{K}$, such that 
   \begin{equation}
         \sup_{u \in \mathcal{U}}|| v - \sum_{j=1}^{n}G_{j}(v) \varphi_j ||_{\mathcal{X}} \leq \epsilon
   \end{equation}
\end{lemma}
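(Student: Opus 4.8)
The plan is to exploit the compactness of $\mathcal{U}$ to reduce the problem to a finite configuration, build an explicit finite-rank map from a continuous partition of unity, and then use the density of $\mathcal{K}$ to replace the net centres by admissible elements. The algebraic engine is the partition-of-unity identity: if the coefficient functions $G_j$ are nonnegative and sum to $1$ on $\mathcal{U}$, then $v - \sum_j G_j(v)\varphi_j$ can be rewritten as a convex combination of the vectors $v - \varphi_j$, so bounding the error reduces to controlling $\|v - \varphi_j\|$ only on the (small) support of $G_j$.

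Concretely, first I would fix the radius $r = \epsilon/2$. Since $\mathcal{U}$ is compact, finitely many open balls $B(u_1,r),\dots,B(u_n,r)$ with centres $u_k \in \mathcal{U}$ cover $\mathcal{U}$. Using density of $\mathcal{K}$, I pick $\varphi_k \in \mathcal{K}$ with $\|\varphi_k - u_k\| < \epsilon/2$. I then define the bump functions $\psi_k(v) = \max(0,\, r - \|v - u_k\|)$, which are continuous on $\mathcal{X}$, nonnegative, and strictly positive exactly on $B(u_k,r)$. Because the balls cover $\mathcal{U}$, the denominator $\sum_{l} \psi_l(v)$ is strictly positive for every $v \in \mathcal{U}$, so $G_k(v) = \psi_k(v)\big/\sum_{l}\psi_l(v)$ is well defined and continuous on $\mathcal{U}$, with $\sum_k G_k \equiv 1$ and $G_k(v)=0$ whenever $\|v-u_k\|\ge r$.

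The error estimate then splits by the triangle inequality into $\|v - \sum_k G_k(v)u_k\| + \|\sum_k G_k(v)(u_k - \varphi_k)\|$. For the first term, $\sum_k G_k(v) = 1$ lets me write it as $\|\sum_k G_k(v)(v-u_k)\| \le \sum_k G_k(v)\|v-u_k\|$, and since $G_k(v)\ne 0$ forces $\|v-u_k\| < r$, this is at most $r = \epsilon/2$. The second term is at most $\sum_k G_k(v)\|u_k-\varphi_k\| < \epsilon/2$. Adding gives $\sup_{v\in\mathcal{U}}\|v - \sum_k G_k(v)\varphi_k\| \le \epsilon$, as required, with the $G_k$ playing the role of the functionals and each $\varphi_k \in \mathcal{K}$.

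The step I expect to require the most care is the word \emph{linear} in the statement: the partition-of-unity coefficients $G_k$ are continuous but generically nonlinear, and genuinely linear continuous functionals approximating the identity uniformly on a compact set exist only when $\mathcal{X}$ has the approximation property (which fails for some Banach spaces). For the construction above, continuity is all that is used downstream, so I would treat $G_k \in C(\mathcal{U};\mathbb{R})$ as continuous functionals. If true linearity is wanted, I would instead invoke a Schauder basis $\{e_j\}$ of $\mathcal{X}$ with coordinate functionals $e_j^*$: basis expansions converge uniformly on compact sets, so $\sup_{v\in\mathcal{U}}\|v - \sum_{j\le n} e_j^*(v)e_j\|$ is small for large $n$, and since each $e_j^*$ is bounded on the compact set $\mathcal{U}$ the swap $e_j \mapsto \varphi_j \in \mathcal{K}$ costs at most $(\max_{v}\sum_{j\le n}|e_j^*(v)|)\cdot \max_j\|e_j-\varphi_j\|$, controllable by density. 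This route yields linear $G_j = e_j^*$ and is available in the paper's $L^p(D)$ setting with $1<p<\infty$.
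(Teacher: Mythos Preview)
The paper does not actually prove this lemma; it simply cites ``Lemma 7 in \cite{LearningMaps}'' (the Kovachki et al.\ neural-operator paper). The argument you give---finite $\epsilon/2$-cover by compactness, bump functions $\psi_k(v)=\max(0,r-\|v-u_k\|)$, normalisation to a partition of unity, density to swap centres into $\mathcal{K}$---is exactly the construction used in that reference, and your error estimate is clean and correct. So as far as matching the paper's (outsourced) proof goes, you are on the same track.

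Your caveat about the word \emph{linear} is well taken and worth flagging explicitly: the statement writes ``continuous, linear functionals $G_j\in C(\mathcal{U};\mathbb{R})$'', but the partition-of-unity $G_j$ are nonlinear, and the notation $C(\mathcal{U};\mathbb{R})$ itself only demands continuity. In fact the very next theorem in the appendix (Theorem~A2) invokes the Riesz representation theorem to write $G_j(v)=\int_D v\,g_j\,d\mu$, which \emph{does} require linearity, so the tension you spotted is real and lives in the paper itself. Your Schauder-basis alternative is the right fix in the $L^p(D)$, $1<p<\infty$, setting used downstream: those spaces have unconditional bases, partial-sum projections converge uniformly on compacta, and the coordinate functionals $e_j^*$ are genuinely linear, so the Riesz step becomes legitimate. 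You might tighten the swap estimate slightly---$\sup_{v\in\mathcal{U}}\sum_{j\le n}|e_j^*(v)|$ is finite by compactness and continuity, as you note---but the idea is sound.
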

The proof is given in \textbf{Lemma 7.} in \cite{LearningMaps}, and \textbf{Theorem 3.} and \textbf{4.} for reference .

\begin{theorem} \label{thm:univerker}
Let $D \subseteq \mathbb{R}^d$ be compact domain.  Let $\mathcal{U}$ be a separable Banach space of real-valued functions on $D$, such that $C(D, \mathbb{R})\subseteq\mathcal{U}$ is dense.
Suppose $\mathcal{U} = L^{p}(D;\mathbb{R})$ for any $1<p<\infty$. $\nu$ is a probability measure supported on $\mathcal{U}$ and assume that, $\mathbb{E}_{v\sim \nu} ||v||_{\mathcal{U}} < \infty$ for any $v \in \mathcal{U}$.
$\mu$ is a probabilistic measure supported on $D$, which defines the inner product of Hilbert space $\mathcal{U}$ as $<f,g>_{\mathcal{U}}=\int_{D}f(\bm{x})g(\bm{x})d\mu(\bm{x})$. Then, there exists a neural network $h_{\eta}: \mathbb{R}^d\times \mathbb{R}^d \rightarrow \mathbb{R}$ whose activation functions are of the Tauber-Wiener class, such that 
$$||v - {\mathcal{H}}(v)||_\mathcal{U} \leq \epsilon,$$
where ${\mathcal{H}}(v)(\bm{x}) = \int_{D}h_{\eta}(\bm{x}, \bm{y}) v(\bm{y})d\mu(\bm{y})$.
\end{theorem}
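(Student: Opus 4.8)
The plan is to squeeze the kernel--integral operator $\mathcal{H}$ between the target $v$ and a finite-rank ``ideal'' operator delivered by Lemma~\ref{lem:functn}, then to rewrite that ideal operator as integration against a \emph{continuous} kernel on $D\times D$ and invoke the Chen--Chen universal approximation theorem \cite{Chen1995Universal} to replace the continuous kernel by a neural network with Tauber--Wiener activations; the infinite-dimensional bookkeeping follows \cite{LearningMaps}. Set $\mathcal{X}=\mathcal{U}=L^p(D,\mu)$ and $\mathcal{K}=C(D)$, which is dense in $\mathcal{X}$ because $\mu$ is a Borel probability measure on the compact set $D$. It suffices to argue over a compact family $\mathcal{U}_0\subseteq\mathcal{U}$ of inputs: either $\mathcal{U}_0=\{v\}$ (the network may depend on $v$), or, using tightness of the Radon measure $\nu$ on the separable space $\mathcal{U}$ together with $\mathbb{E}_{v\sim\nu}\|v\|_{\mathcal{U}}<\infty$, a compact $\mathcal{U}_0$ capturing all but an arbitrarily small fraction of the mass of $\nu$ and of $\int\|v\|_{\mathcal{U}}\,d\nu$.

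Concretely, first apply Lemma~\ref{lem:functn} to obtain $n\in\mathbb{N}$, continuous linear functionals $G_1,\dots,G_n$ on $\mathcal{X}$, and $\varphi_1,\dots,\varphi_n\in C(D)$ with $\sup_{v\in\mathcal{U}_0}\bigl\|v-\sum_{j=1}^{n}G_j(v)\varphi_j\bigr\|_{L^p(\mu)}\le\epsilon/4$. Since $1<p<\infty$, the dual of $L^p(D,\mu)$ is $L^q(D,\mu)$ with $1/p+1/q=1$, so each $G_j$ is represented by some $g_j\in L^q(D,\mu)$, namely $G_j(v)=\langle g_j,v\rangle_{\mathcal{U}}=\int_D g_j v\,d\mu$; by density of $C(D)$ in $L^q(D,\mu)$ I replace $g_j$ by a continuous $\tilde g_j$ with $\|g_j-\tilde g_j\|_{L^q(\mu)}$ small enough that $\sum_j\|\varphi_j\|_{L^p(\mu)}\|g_j-\tilde g_j\|_{L^q(\mu)}\sup_{v\in\mathcal{U}_0}\|v\|_{L^p(\mu)}\le\epsilon/4$. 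Then $K(\bm{x},\bm{y}):=\sum_{j=1}^{n}\varphi_j(\bm{x})\tilde g_j(\bm{y})$ is continuous on the compact set $D\times D$, the operator $\mathcal{H}_K(v)(\bm{x})=\int_D K(\bm{x},\bm{y})v(\bm{y})\,d\mu(\bm{y})=\sum_j\varphi_j(\bm{x})\langle\tilde g_j,v\rangle_{\mathcal{U}}$ obeys $\sup_{v\in\mathcal{U}_0}\|\mathcal{H}_K(v)-v\|_{L^p(\mu)}\le\epsilon/2$, and Chen--Chen yields a network $h_\eta:\mathbb{R}^d\times\mathbb{R}^d\to\mathbb{R}$ with $\sup_{(\bm{x},\bm{y})\in D\times D}|h_\eta(\bm{x},\bm{y})-K(\bm{x},\bm{y})|\le\delta$. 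For the resulting $\mathcal{H}$ and any $v\in\mathcal{U}_0$,
\[
\|\mathcal{H}(v)-v\|_{L^p(\mu)}\le\|\mathcal{H}(v)-\mathcal{H}_K(v)\|_{L^p(\mu)}+\|\mathcal{H}_K(v)-v\|_{L^p(\mu)}\le\delta\,\|v\|_{L^1(\mu)}+\tfrac{\epsilon}{2},
\]
where I used $|\mathcal{H}(v)(\bm{x})-\mathcal{H}_K(v)(\bm{x})|\le\delta\int_D|v|\,d\mu$ and $\|v\|_{L^1(\mu)}\le\|v\|_{L^p(\mu)}$ ($\mu$ a probability measure); taking $\delta\le\epsilon/\bigl(2\sup_{v\in\mathcal{U}_0}\|v\|_{L^p(\mu)}\bigr)$, which is finite since $\mathcal{U}_0$ is bounded, gives $\|\mathcal{H}(v)-v\|_{\mathcal{U}}\le\epsilon$.

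The functional-analytic steps --- Riesz representation of the $G_j$, density of $C(D)$ in $L^q(D,\mu)$, continuity and boundedness of $K$ and $h_\eta$ on $D\times D$, well-definedness of $\mathcal{H}(v)\in L^\infty(\mu)\subseteq L^p(\mu)$ --- are routine. The conceptual crux is the identity $\sum_j G_j(v)\varphi_j(\bm{x})=\int_D\bigl(\sum_j\varphi_j(\bm{x})g_j(\bm{y})\bigr)v(\bm{y})\,d\mu(\bm{y})$, which converts the abstract finite-rank approximation of Lemma~\ref{lem:functn} into a kernel-integral operator to which a network-approximation theorem applies. The main technical obstacle is making the kernel-to-network replacement uniform over all inputs of interest: the error in that step is $\delta\|v\|_{L^1(\mu)}$, so a uniform $L^1(\mu)$ (equivalently $L^p(\mu)$) bound on the inputs is required, which is exactly what compactness of $\mathcal{U}_0$ supplies; in the $\nu$-average reading this propagates into a truncation/tail estimate over $\mathcal{U}_0^{c}$, controlled via $\|\mathcal{H}\|\le\|h_\eta\|_\infty\le\|K\|_\infty+\delta$ and $\int_{\mathcal{U}_0^{c}}\|v\|_{\mathcal{U}}\,d\nu$, handled as in \cite{LearningMaps}. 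Finally, the vector-valued ($\mathbb{R}^{d_v}$), translation-form ($\bm{x}-\bm{y}$) and $a(\bm{y})$-dependent kernel of Theorem~\ref{thm:univerker}'s application in Theorem~3.1 add nothing essential: apply the above componentwise, and absorb the invertible change of variables $(\bm{x}-\bm{y},\bm{x})\mapsto(\bm{x},\bm{y})$ and a dummy $a$-input into the first affine layer of $h_\eta$.
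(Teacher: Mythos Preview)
Your proposal is correct and follows essentially the same route as the paper: reduce to a compact family of inputs via tightness of $\nu$, invoke Lemma~\ref{lem:functn} to get a finite-rank approximant $\sum_j G_j(v)\varphi_j$, represent each $G_j$ by some $g_j\in L^q$ via Riesz, approximate $g_j$ by continuous $\tilde g_j$, and then apply a universal approximation theorem to the resulting continuous kernel $\sum_j\varphi_j(\bm{x})\tilde g_j(\bm{y})$ on the compact set $D\times D$, chaining the errors by the triangle inequality. Your bookkeeping (explicit choice of $\delta$, the $\|v\|_{L^1(\mu)}\le\|v\|_{L^p(\mu)}$ step, and the tail estimate over $\mathcal{U}_0^c$) is in fact tighter than the paper's somewhat informal final lines, but the architecture of the argument is identical.
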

\begin{proof}
   Since $\mathcal{U}$ is a Polish space, we can find a compact set $\mathcal{K}$, such that $\nu(\mathcal{U}\setminus\mathcal{K})\leq\epsilon$. Therefore,
   \textbf{Lemma \ref{lem:functn}} can be applied, to find a number $n \in \mathbb{N}$, a series of continuous linear functionals $G_j \in C(\mathcal{U};\mathbb{R})$ and functions $\varphi_j \in C(D;\mathbb{R})$ such that 
   $$
   \sup_{v\in \mathcal{K}} ||v - \sum_{j=1}^n G_j(v)\varphi_j||_{\mathcal{U}}  \leq \epsilon.
   $$
   Denote $\hat{\mathcal{H}}_n(v) = \sum_{j=1}^n G_j(v)\varphi_j$, and let $1<q<\infty$ be the Hölder conjugate of $p$. Since $\mathcal{U} = L^{p}(D;\mathbb{R})$, by Reisz Representation Theorem, there exists functions $g_j \in L^q(D;\mathbb{R})$,
   such that
   $G_j(v) = \int_D v(\bm{x})g_j(\bm{x})d\mu(\bm{x})$ for $j = 1, \ldots, n$ and $v \in L^{p}(D;\mathbb{R})$.
   By density of $C(D;\mathbb{R})$ in $L^q(D;\mathbb{R})$, we can find functions $\psi_1,\ldots,\psi_n \in C(D;\mathbb{R})$, such that 
   $$\sup_{j\in\{1,\ldots,n\}}||\psi_j - g_j||_{L^{q}(D;\mathbb{R})} \leq \epsilon/n.$$
   Then, we define $\tilde{\mathcal{H}}_n: L^p(D;\mathbb{R}) \rightarrow C(D;\mathbb{R})$ by 
   $$\tilde{\mathcal{H}}_n(v) = \sum_{j=1}^n \int_{D}\psi_j(\bm{y})v(\bm{y})d\mu(\bm{y})\varphi_j(\bm{x}).$$
For the universal approximation (density) \citep{HORNIK1989359} of neural networks, we can find a Multi-layer Feedforward network $h_{\eta}: \mathbb{R}^d\times \mathbb{R}^d \rightarrow \mathbb{R}$ whose activation functions are of the Tauber-Wiener class, such that 
$$\sup_{\bm{x}, \bm{y} \in D} |h_{\eta}(\bm{x}, \bm{y}) - \sum_{j=1}^{n}\psi_j(\bm{y})\varphi_j(\bm{x})| \leq \epsilon.
$$
Let $\mathcal{H}_\eta(\bm{x}) = \int_{D}h_{\eta}(\bm{x}, \bm{y}) v(\bm{y})d\mu(\bm{y})$. Then, there exists a constant $C_1 > 0$, such that 
$$||\hat{\mathcal{H}}_n(v) - \mathcal{H}(v)||_{L^p(D;\mathbb{R})} \leq C_1 (||\hat{\mathcal{H}}_n(v) - \tilde{\mathcal{H}}_n(v)||_{L^p(D;\mathbb{R})}  + ||\tilde{\mathcal{H}}_n(v) - {\mathcal{H}}(v)||_{L^p(D;\mathbb{R})}). $$
For the first term,  there is a constant $C_2 > 0$, such that 
\begin{align*}
   ||\hat{\mathcal{H}}_n(v) - \tilde{\mathcal{H}}_n(v)||_{L^p(D;\mathbb{R})} &\leq C_2 \sum_{j=1}^{n} ||\int_D v(\bm{y}) (g_j(\bm{y}) - \psi_j(\bm{y})) d\mu(\bm{y}) \varphi_j||_{L^p(D;\mathbb{R})}\\
   &\leq C_2 \sum_{j=1}^{n} ||v(\bm{y})||_{L^p(D;\mathbb{R})} ||g_j(\bm{y}) - \psi_j(\bm{y})||_{L^q(D;\mathbb{R})} ||\varphi_j||_{L^p(D;\mathbb{R})} \\ 
   &\leq C_3 \epsilon ||v(\bm{y})||_{L^p(D;\mathbb{R})},
\end{align*}
for some $C_3 > 0$. And for the second term,
\begin{align*}
   ||\tilde{\mathcal{H}}_n(v) - {\mathcal{H}}(v)||_{L^p(D;\mathbb{R})} &= ||\int_D v(\bm{y}) (\sum_{j=1}^n \psi_j(\bm{y})\varphi_j(\cdot) - h_{\eta}(\cdot, \bm{y}))d\mu(\bm{y})||_{L^p(D;\mathbb{R})}\\
   &\leq |D| \epsilon ||v||_{L^p(D;\mathbb{R})},
\end{align*}
Therefore, there is a constant $C > 0$, such that 
$$\int_\mathcal{U} ||\hat{\mathcal{H}}_n(v) - \tilde{\mathcal{H}}_n(v)||_{\mathcal{U}} d\nu(v) \leq \epsilon C \mathbb{E}_{v\sim \nu} ||v||_{\mathcal{U}}$$.
Because of the assumption that $\mathbb{E}_{v\sim \nu} ||v||_{\mathcal{U}} < \infty$, and $\epsilon$ is arbitrary, then 
$$||v - {\mathcal{H}}(v)||_\mathcal{U} \leq ||v - \hat{\mathcal{H}}_n(v)||_\mathcal{U} + ||\hat{\mathcal{H}}_n(v) - {\mathcal{H}}(v) ||_\mathcal{U},$$ 
the proof is complete.
\end{proof}

\begin{corollary}\label{coro:univerker}
   Define $\mathcal{H}_\eta(v) = \int_{D}h_\eta(\bm{x} - \bm{y}, \bm{x}, a(\bm{y}))v(\bm{y})d\mu(\bm{y})$, the interpolation operator can also approximate $v$ to any precision $\epsilon$.
\end{corollary}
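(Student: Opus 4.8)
The plan is to obtain the corollary from \textbf{Theorem \ref{thm:univerker}} essentially for free, by absorbing the change of variables $(\bm{x},\bm{y})\mapsto(\bm{x}-\bm{y},\bm{x})$ and the extra input block $a(\bm{y})$ into the approximating network. First I would reduce to the scalar case $d_v=1$: the kernel--function product in $\mathcal{H}_\eta(v)(\bm{x})=\int_D h_\eta(\bm{x}-\bm{y},\bm{x},a(\bm{y}))\,v(\bm{y})\,d\mu(\bm{y})$ is taken coordinatewise, so each of the $d_v$ output components is an independent scalar approximation problem, and it suffices to handle one of them.

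Then I would replay the proof of \textbf{Theorem \ref{thm:univerker}} verbatim up to the finite-rank surrogate: there are $n\in\mathbb{N}$ and $\psi_1,\dots,\psi_n,\varphi_1,\dots,\varphi_n\in C(D;\mathbb{R})$ for which $\tilde{\mathcal{H}}_n(v)(\bm{x})=\sum_{j=1}^n\big(\int_D\psi_j(\bm{y})\,v(\bm{y})\,d\mu(\bm{y})\big)\varphi_j(\bm{x})$ is $L^p$-close to $v$ in the quantified sense there. The only genuinely new step is to realize the kernel $\sum_{j=1}^n\psi_j(\bm{y})\varphi_j(\bm{x})$ uniformly on $D\times D$ by a network whose input has the prescribed shape $(\bm{x}-\bm{y},\bm{x},a(\bm{y}))$. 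For this I would note that $\Phi:D\times D\to\mathbb{R}^{d}\times\mathbb{R}^{d}$, $\Phi(\bm{x},\bm{y})=(\bm{x}-\bm{y},\bm{x})$, is a homeomorphism onto the compact set $K=\Phi(D\times D)$, with continuous inverse $(\bm{w},\bm{z})\mapsto(\bm{z},\bm{z}-\bm{w})$; hence $F(\bm{w},\bm{z}):=\sum_{j=1}^n\psi_j(\bm{z}-\bm{w})\varphi_j(\bm{z})$ is continuous on $K$, and by the Tauber--Wiener universal approximation theorem \citep{Chen1995Universal} there is a feed-forward network $\tilde h:\mathbb{R}^{2d}\to\mathbb{R}$ with $\sup_K|\tilde h-F|$ arbitrarily small. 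Setting $h_\eta(\bm{w},\bm{x},\bm{\xi}):=\tilde h(\bm{w},\bm{x})$ --- still a legitimate feed-forward network on $\mathbb{R}^{d}\times\mathbb{R}^{d}\times\mathbb{R}^{d_a}$, with zero weights on the $\bm{\xi}$-block --- yields $\big|h_\eta(\bm{x}-\bm{y},\bm{x},a(\bm{y}))-\sum_{j=1}^n\psi_j(\bm{y})\varphi_j(\bm{x})\big|$ uniformly small over $\bm{x},\bm{y}\in D$, with no hypothesis on $a$ needed. (If one insists the kernel truly depend on $a(\bm{y})$, one additionally assumes $a\in C(D;\mathbb{R}^{d_a})$, uses compactness of $a(D)$, and approximates $F$ on the compact set $\{(\bm{x}-\bm{y},\bm{x},a(\bm{y})):\bm{x},\bm{y}\in D\}$ instead; either route works.)

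Finally I would feed this uniform kernel estimate back into the estimate chain of \textbf{Theorem \ref{thm:univerker}}: writing $\mathcal{H}_\eta(v)(\bm{x})=\int_D h_\eta(\bm{x}-\bm{y},\bm{x},a(\bm{y}))\,v(\bm{y})\,d\mu(\bm{y})$, the term $\|\tilde{\mathcal{H}}_n(v)-\mathcal{H}_\eta(v)\|_{L^p(D;\mathbb{R})}$ is bounded by $|D|$ times the kernel error times $\|v\|_{L^p(D;\mathbb{R})}$ by Hölder's inequality and compactness of $D$, exactly as the term $\|\tilde{\mathcal{H}}_n(v)-\mathcal{H}(v)\|$ was bounded there, while the remaining terms $\|v-\hat{\mathcal{H}}_n(v)\|$ and $\|\hat{\mathcal{H}}_n(v)-\tilde{\mathcal{H}}_n(v)\|$ are untouched; integrating against $\nu$ and invoking $\mathbb{E}_{v\sim\nu}\|v\|_{\mathcal{U}}<\infty$ together with $\|v-\mathcal{H}_\eta(v)\|_{\mathcal{U}}\le\|v-\hat{\mathcal{H}}_n(v)\|_{\mathcal{U}}+\|\hat{\mathcal{H}}_n(v)-\mathcal{H}_\eta(v)\|_{\mathcal{U}}$ and the arbitrariness of $\epsilon$ closes the argument. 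The main thing to be careful about is precisely the middle step --- checking that restricting to the rigid input pattern $(\bm{x}-\bm{y},\bm{x},a(\bm{y}))$ costs no approximation power, i.e.\ that composing the target kernel with the homeomorphism $\Phi$ and padding with an ignored coordinate block keeps it within the class covered by the universal approximation theorem; everything downstream is bookkeeping inherited unchanged from \textbf{Theorem \ref{thm:univerker}}.
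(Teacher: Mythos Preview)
Your proposal is correct and rests on the same core idea as the paper: the input pattern $(\bm{x}-\bm{y},\bm{x})$ is an invertible linear reparametrization of $(\bm{x},\bm{y})$, and the extra $a(\bm{y})$-slot can be absorbed by zero weights. The execution differs. You compose the continuous target kernel $\sum_j\psi_j(\bm{y})\varphi_j(\bm{x})$ with the homeomorphism $\Phi(\bm{x},\bm{y})=(\bm{x}-\bm{y},\bm{x})$ and re-invoke universal approximation on the resulting compact image. The paper instead takes the one-line algebraic shortcut: any first-layer pre-activation $\sum_i w_{x,i}x^{(i)}+w_{y,i}y^{(i)}+b$ is identically equal to $\sum_i w_{x,i}(x^{(i)}-y^{(i)})+(w_{y,i}+w_{x,i})y^{(i)}+\sum_j 0\cdot a^{(j)}(\bm{y})+b$, so the very network furnished by Theorem~\ref{thm:univerker} already has the required input signature after relabeling weights---no second appeal to approximation is needed. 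Your route is more systematic and would extend to nonlinear coordinate changes; the paper's buys brevity by exploiting that the first layer is affine.
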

\begin{proof}
   We use a one-layer neural network $h_\eta: D \times D \rightarrow \mathbb{R}$ as an example, which is defined as $h_\eta(\bm{x}, \bm{y}, a(\bm{y}) = \sigma( \sum_{i=1}^{d}w_{x,i}x^{(i)} + w_{y,i}y^{(i)} + b)$. We can rewrite it as $$h_\eta = \sigma(\sum_{i=1}^{d}w_{x,i}(x^{(i)} - y^{(i)}) + (w_{y,i} + w_{x,i})y^{(i)} + \sum_{j=1}^{d_a}w_{a,j}a^{(j)}(\bm{y}) + b),$$
   where $w_{a,j} = 0$.
\end{proof}

\begin{corollary}
   The \textrm{\textbf{Theorem~\ref{thm:univerker}}} and \textrm{\textbf{Corollary~\ref{coro:univerker}}} can be extended for $v: D\rightarrow \mathbb{R}^{d_v}$, where $d_v>1$.
\end{corollary}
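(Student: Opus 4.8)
The plan is to deduce this from the scalar statements \textbf{Theorem~\ref{thm:univerker}} and \textbf{Corollary~\ref{coro:univerker}} by a coordinatewise reduction. Write $v = (v^{(1)}, \dots, v^{(d_v)})$ with each $v^{(k)} \in L^p(D;\mathbb{R})$, and note that, up to a dimensional constant $C = C(d_v, p)$ coming from the equivalence of norms on $\mathbb{R}^{d_v}$ (and equal to $1$ under the convention $\|v\|_{\mathcal{U}}^p = \int_D \sum_k |v^{(k)}(\bm{x})|^p \, d\mu(\bm{x})$), the norm on $\mathcal{U} = L^p(D;\mathbb{R}^{d_v})$ satisfies $\|v\|_{\mathcal{U}}^p \le C^p \sum_{k=1}^{d_v} \|v^{(k)}\|_{L^p(D;\mathbb{R})}^p$. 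Also, if $\nu$ denotes the probability measure on $\mathcal{U}$ appearing in \textbf{Theorem~\ref{thm:univerker}}, its $k$-th coordinate pushforward $\nu_k$ is a probability measure on $L^p(D;\mathbb{R})$ and the hypothesis $\mathbb{E}_{v \sim \nu}\|v\|_{\mathcal{U}} < \infty$ forces $\mathbb{E}_{w \sim \nu_k}\|w\|_{L^p(D;\mathbb{R})} < \infty$, so each coordinate problem is exactly an instance of the scalar setting.

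First I would apply \textbf{Corollary~\ref{coro:univerker}} separately to each $v^{(k)}$ with target accuracy $\epsilon' := \epsilon/(C\, d_v^{1/p})$, obtaining feedforward networks $h_\eta^{(k)} : \mathbb{R}^d \times \mathbb{R}^d \times \mathbb{R}^{d_a} \to \mathbb{R}$ with Tauber--Wiener activations such that $\|\mathcal{H}_\eta^{(k)}(v^{(k)}) - v^{(k)}\|_{L^p(D;\mathbb{R})} \le \epsilon'$, where $\mathcal{H}_\eta^{(k)}(v^{(k)})(\bm{x}) = \int_D h_\eta^{(k)}(\bm{x} - \bm{y}, \bm{x}, a(\bm{y}))\, v^{(k)}(\bm{y})\, d\mu(\bm{y})$. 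Next I would pack these into a single multi-output feedforward network $h_\eta = (h_\eta^{(1)}, \dots, h_\eta^{(d_v)}) : \mathbb{R}^d \times \mathbb{R}^d \times \mathbb{R}^{d_a} \to \mathbb{R}^{d_v}$, which is still of the required class since placing coordinate networks side by side (block-diagonal weights) yields a feedforward network with the same activation class. Interpreting the product in $\hat v(\bm{x}) = \int_D h_\eta(\bm{x} - \bm{y}, \bm{x}, a(\bm{y})) \odot v(\bm{y})\, d\mu(\bm{y})$ componentwise, one gets $\hat v^{(k)} = \mathcal{H}_\eta^{(k)}(v^{(k)})$, hence
\[
\|\hat v - v\|_{\mathcal{U}}^p \;\le\; C^p \sum_{k=1}^{d_v} \|\hat v^{(k)} - v^{(k)}\|_{L^p(D;\mathbb{R})}^p \;\le\; C^p\, d_v\, (\epsilon')^p \;=\; \epsilon^p,
\]
so $\|\hat v - v\|_{\mathcal{U}} \le \epsilon$. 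The extension of \textbf{Theorem~\ref{thm:univerker}} itself (kernel $h_\eta(\bm{x}, \bm{y})$, no $a$-argument) is obtained by the same stacking, and if one prefers to read the kernel as matrix-valued the argument produces an approximately block-diagonal kernel with the identical estimate.

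I do not expect a genuine obstacle here: the reduction is coordinatewise and $d_v$ is fixed and finite, so splitting the budget as $\epsilon' = \epsilon/(C\, d_v^{1/p})$ keeps the error under control. The one place that needs a careful statement rather than a routine calculation is the measure-theoretic bookkeeping around $\nu$ on the vector-valued space: verifying that the integrability hypothesis of \textbf{Theorem~\ref{thm:univerker}} descends to each coordinate marginal $\nu_k$, and that the compact sets used inside the scalar proofs can be chosen compatibly across the $d_v$ coordinates so that the finitely many error terms add up without inflating the approximation budget. Everything else --- the norm equivalence on $\mathbb{R}^{d_v}$, the assembly of the stacked network, and the final $\ell^p$-summation of the errors --- is routine.
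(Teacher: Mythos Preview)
Your proposal is correct and follows the same coordinatewise reduction as the paper: the paper's proof simply notes that for $v=(v^{(1)},\ldots,v^{(d_v)})$ one applies a separate scalar network to each $v^{(j)}$, and remarks that in practice $h_\eta$ is made fully connected. Your version is considerably more careful than the paper's two-line sketch (you handle the $\epsilon$-budget split, the norm equivalence on $\mathbb{R}^{d_v}$, and the pushforward measures $\nu_k$), but the underlying idea is identical.
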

\begin{proof}
   As $v = (v^{(1)}, v^{(2)}, \ldots, v^{(d_v)})$, for each $v^{(j)}$, a single neural network can be used for approximation.
   Moreover, in implementation, we make $h_\eta$ fully-connected, to improve the expressivity.
\end{proof}
\textbf{Remark.} 
    \textit{As $\sum_{\bm{x}_i \in \bm{X}} v(\bm{x}_i) h_{\eta}(\bm{x} - \bm{x}_i, \bm{x}_i, a(\bm{x}_i))$ is the unbiased estimation of $\mathbb{E}_{\bm{y} \sim \mu}(h_\eta(\bm{x}, \bm{y})v(\bm{y}))$, we use the Equation.~(\ref{eq:intep}) for the approximation.}

\section{Experiments}
\label{app:exp}
\setcounter{table}{0}   
\setcounter{figure}{0}
\renewcommand{\thetable}{B\arabic{table}}
\renewcommand{\thefigure}{B\arabic{figure}}
\subsection{Benchmark Method Description}
\label{app:modeldiscribe}
\begin{figure}[h]
    \centering
    \includegraphics[width=1.0\linewidth, trim = 0 0 0 00,clip]{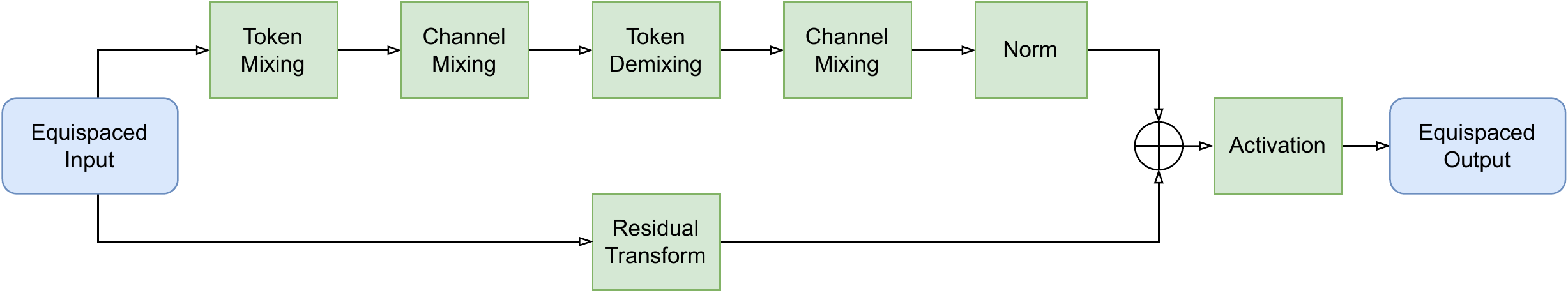}\vspace{-1em}
    \caption{The architecture of Vision Mixers.}\vspace{-0.5em}
    \label{fig:mixerarchi}
\end{figure}

\paragraph{Vision Mixers.} We provide a framework for vision mixers as PDE solvers, including \textsc{VIT}, \textsc{MLPMixer}, \textsc{FNet}, \textsc{GFN}, \textsc{FNO}, \textsc{PFNO} and our \textsc{NFS}.
The intermediate architecture of mixing layers is shown in Fig.~\ref{fig:mixerarchi}. The code of our framework will be released soon. 
And the resampling and back-sampling methods are stacked before `Equispaced Input' and `Equispaced Output'. 
In this way, the description of the Vision Mixers included in our framework can be described by different modules, as shown in Table.~\ref{tab:mixerdescribe}. 
All the trials on Vision Mixers set \textit{embedding size} as $32$, \textit{batch size} as $4$, \textit{layer number} of the intermediate equispaced mixing layers as $2$. In \textsc{FNO} and \textsc{PFNO}, the truncated $K_\mathrm{max}$ is set as $16$.
The \textit{patch size} of Vision Mixers with patchwise embedding are set as $[4,2]$ in 1-d PDEs and $[4,4,2]$ in 2-d PDEs.
The interpolation layers in \textsc{NFS} are composed of one layer of feed-forward network whose perceptron unit is equal to $4 \times$ \textit{embedding size} of the model. 
\begin{table}[h]
    \caption{Description of Vison Mixers in the unifying framework module by module.}\label{tab:mixerdescribe}
    \resizebox{1.0\columnwidth}{!}{
    \begin{tabular}{l|lllllllllllllll}
        \toprule
    \makecell{Modules}               & \makecell{\textsc{VIT}}               & \makecell{\textsc{MLPMixer} }         & \makecell{\textsc{FNet} }              & \makecell{\textsc{GFN}}                & \makecell{\textsc{FNO}}                    & \makecell{\textsc{PFNO}}                   & \makecell{\textsc{NFS}}                    \\
    \midrule
    \makecell{Resampling}     & \makecell{Patchwise \\Embedding} & \makecell{Patchwise \\Embedding} & \makecell{Patchwise \\Embedding}   & \makecell{Patchwise \\Embedding}  & \makecell{Identity}               & \makecell{Patchwise \\Embedding}      & \makecell{Interpolation}          \\
    \midrule
    
    \makecell{Token \\Mixing}   & \makecell{Attention}         & \makecell{MLP}               & \makecell{Fourier}             & \makecell{Fourier}            & \makecell{Fourier}                & \makecell{Fourier}                & \makecell{Fourier}                \\
    \midrule
    
    \makecell{Channel \\Mixing} & \makecell{Linear}            & \makecell{Linear}            & \makecell{Linear}              & \makecell{Elementwise \\  Product} & \makecell{Low  Frequency \\ MatMultiply} & \makecell{Low   Frequency \\ MatMultiply} & \makecell{Low  Frequency \\ MatMultiply} \\
    \midrule
    
    \makecell{Token \\Demixing} & \makecell{Identity}          & \makecell{Identity}          & \makecell{Identity}            & \makecell{Inverse\\Fourier}     & \makecell{Inverse\\Fourier}         & \makecell{Inverse\\Fourier}         & \makecell{Inverse\\Fourier}         \\
    \midrule
    
    \makecell{Channel \\Mixing} & \makecell{Identity}          & \makecell{Identity}          & \makecell{Identity}            & \makecell{Linear}             & \makecell{Identity}               & \makecell{Identity}               & \makecell{Identity}               \\
    \midrule
    
    \makecell{Normalization}           & \makecell{LayerNorm}         & \makecell{LayerNorm}         & \makecell{Complex  \\ LayerNorm} & \makecell{LayerNorm}          & \makecell{Identity}               & \makecell{LayerNorm}              & \makecell{LayerNorm}              \\
    \midrule
    
    \makecell{Residual}       & \makecell{Identity}          & \makecell{Identity}          & \makecell{Identity}            & \makecell{Identity}           & \makecell{1x1   Conv}             & \makecell{1x1   Conv}             & \makecell{1x1   Conv}             \\
    \midrule
    
    \makecell{Activation}     & \makecell{Gelu}              & \makecell{Gelu}              & \makecell{Complex \\  Gelu}      & \makecell{Gelu}               & \makecell{Gelu}                   & \makecell{Gelu}                   & \makecell{Gelu}                   \\
    \midrule
    
    \makecell{Back \\Sampling}  & \makecell{Linear+\\Rearrange}  & \makecell{Linear+\\Rearrange}  & \makecell{Linear+\\Rearrange}    & \makecell{Linear+\\Rearrange}   & \makecell{Identity}               & \makecell{Linear+\\Rearrange}       & \makecell{Interpolation}         \\
    \bottomrule
    
\end{tabular}}
    \end{table}

\paragraph{DeepONet Variants.}
Since vanilla DeepONet uses MLP as Branch Net, it cannot be implemented in such a high-resolution dataset, because for a resolution like the trial (NS $n_s=4096, n_t=10, n’_t=10$
), DeepONet assigns each data point a weight parameter in a single MLP, leading the MLP’s parameter number reaches $O(n^2_sn^2_t) \approx 40960^2$
 in a single Branch Net, which is infeasible in practice. In the original paper, the spatial point’s number in the experiments is set as 40, far less than in the recent Neural Operator’s evaluation protocol.

One feasible alternative is to use other architecture to replace the original MLP, thus allowing DeepONet to handle high-resolution data. For example, CNN and Vit. Therefore, we here conduct further experiments on the three equations in the context, to evaluate DeepONet-U (using UNet as the Branch Net) and DeepONet-V (using Vit as the Branch Net) as two variants of vanilla DeepONet for comparison. 
Note that the architecture of variants of DeepONet are all limited to equispaced data.
\paragraph{Graph Spatio-Temporal Models.} The evaluated graph spatio-temporal neural networks are based on recurrent neural networks for dynamics modeling, where the spatial dependency is modeled by graph neural networks.
The spatial and temporal modules for \textsc{AGCRN}, \textsc{DCRNN} and \textsc{GCGRU} are shown in Table.~\ref{tab:gstdescribe}. \textsc{MPPDE} used different architecture, with the pushforward trick used for taining, with \textit{rolling} equaling 1 and \textit{time window} equaling to 10 . 
All the trials on these graph spatio-tempral models set \textit{embedding size} as $64$, except \textsc{MPPDE} as $128$. \textit{Batch size} is set as 4. When the graph convolution needs multi-hop message-passing, we set the hop as $2$. For \textsc{MPPDE}, the layer number of GNNs is 6.  The \textit{embedding dimension} in \textsc{AGCRN} is set as $2$.
\begin{table}[h]
    \centering
    \caption{Description of different graph spatio-temporal models}\label{tab:gstdescribe}
    \resizebox{0.8\columnwidth}{!}{
    \begin{tabular}{llc}
        \toprule
    Methods & Spatial module     & Temporal module  \\
    \midrule
    \textsc{GCGRU} \cite{seo2016gcgru}   & Cheb Conv  \cite{defferrard2017convolutional}         & GRU                        \\
    \textsc{DCRNN} \cite{li2018diffusion}   & Diff Conv \cite{atwood2016diffusionconvolutional}            & GRU                          \\
    \textsc{AGCRN} \cite{bai2020adaptive}  & Node Similarity \cite{bai2020adaptive}       & GRU                    \\
\bottomrule    
\end{tabular}\vspace*{-1em}
}%
\end{table}
\subsection{Data Generation}
\label{app:data}
\paragraph{Burgers' Equation.}
The initial condition $u_0(x)$ is generated according to $u_0 \sim N(0.625(-\Delta + 25 I)^{-2})$ with periodic boundary conditions.
$\nu$ is set as $0.01$. $x\in [0,1]$ and $t\in[0,1]$. The spatial resolution is 1024, and time resolution is 200. The dataset generation follows \textsc{FNO}'s protocol, which can be downloaded from its source code on official Github.
\paragraph{KdV Equation.} The equation is written as 
\begin{equation}
   \partial_{t}u(x,t) + 3\partial_{x} u^{2}(x,t) + \partial^3_{x}u(x,t) = 0, \label{eq:kdv}
\end{equation}
where $x \in [0,1]$. The initial condition $u_0(x)$ is calculated as 
$$ u(x,0) = \sum_{i=1}^{K} 0.5c_i\cos(0.5\sqrt{c_i + b_i} x-a_i) $$
where $c_i \sim N(0,\sigma_i)$, and $a_i, b_i > 0$. The spatial resolution is $1024$. The dataset is generated by \texttt{scipy} package, with \texttt{fftpack.diff} used as pesudo-differential method and \texttt{odeint} used as forward Euler method.
\paragraph{Darcy Flow.} The equation is written as
\begin{equation}
   \begin{aligned}
   -\nabla (a(\bm{x})\nabla u(\bm{x})) &= f(\bm{x}) \quad \quad &\bm{x}\in (0,1)^2 \\
   u(\bm{x}) &= 0 \quad \quad & \bm{x} \in \partial[0,1]^2
   \end{aligned}
\end{equation} 
The original resolution is $256 \times 256$. $a(\bm{x})$ is generated by Gaussian random field, and we directly establish the operator to learn the mapping of $a$ to $u$.

\paragraph{NS Equation.} 

Our generation of NS Equation is based on \textsc{FNO}'s Appendix. A.3.3, with the forcing is kept fixed. The original spatial resolution is $128\times 128$, and time resolution is $200$.
\subsection{Complete Results on Model Comparison}\label{app:kdvcomp}
Here we give complete results on the four Equations. 
Table.~\ref{tab:compdacy} give the performance comparison on Darcy flow of both equispaced and non-equispaced scenarios.
Table.~\ref{tab:eqcomp1} and \ref{tab:eqcomp2} gives performance comparison in equispaced scenarios on the other three time-dependent problems.
Table.~\ref{tab:noneqcomp1} and \ref{tab:noneqcomp2} gives performance comparison in non-equispaced scenarios on the other three time-dependent problems.
In all the tasks except \texttt{Darcy Flow}, the depth of layer is set as $2$, and $k_{\mathrm{max}} = 16$ in both NFS and FNO. However, we find in \texttt{Darcy Flow}, $k_{\mathrm{max}}$ should be set much larger, or the loss will not decrease. In the reported results, $k_{\mathrm{max}} = 32, 64, 128$ in \texttt{Darcy Flow}.
\begin{table}[htb]
   \centering
   \caption{Performance comparison on Darcy Flow.\\
    } \label{tab:compdacy}
   \resizebox{1.00\columnwidth}{!}{
   \begin{tabular}{lrrrrrr}
       \toprule
        & \multicolumn{1}{c}{MAE ($\times 10^{-3}$)}                   & \multicolumn{1}{c}{RMSE($\times 10^{-3}$)}                  & \multicolumn{1}{c}{MAE($\times 10^{-3}$)}                   & \multicolumn{1}{c}{RMSE($\times 10^{-3}$)}                  & \multicolumn{1}{c}{MAE($\times 10^{-3}$)}                   & \multicolumn{1}{c}{RMSE($\times 10^{-3}$)}                   \\
       \cmidrule(lr){2-3} \cmidrule(lr){4-5} \cmidrule(lr){6-7}
          & \multicolumn{2}{c}{\begin{tabular}[c]{@{}c@{}}\texttt{Darcy Flow} \\ ($r=64$)\end{tabular}} & \multicolumn{2}{c}{\begin{tabular}[c]{@{}c@{}}\texttt{Darcy Flow}\\  ($r=128$)\end{tabular}} & \multicolumn{2}{c}{\begin{tabular}[c]{@{}c@{}}\texttt{Darcy Flow} \\ ($r=256$)\end{tabular}} \\
   \midrule
   \textsc{VIT}      & 0.5073{\scriptsize±0.0411}              & 0.8468{\scriptsize±0.0432}               & 0.9865{\scriptsize±0.0002}                & 1.6195{\scriptsize±0.0007}             & 1.1078{\scriptsize±0.0021}              & 1.8444{\scriptsize±0.0023}              \\
   \textsc{MLPMixer} & 0.4970{\scriptsize±0.0021}                & 0.8228{\scriptsize±0.0034}               & 0.8909{\scriptsize±0.0099}               & 1.4221{\scriptsize±0.0118}              & 0.9125{\scriptsize±0.0024}              & 1.6459{\scriptsize±0.0032}                \\
   \textsc{GFN}      & 0.4739{\scriptsize±0.0016}              & 0.8345{\scriptsize±0.0019}                & 0.8659{\scriptsize±0.0046}               & 1.4237{\scriptsize±0.0071}              & 0.9618{\scriptsize±0.0124}              & 1.6139{\scriptsize±0.0128}              \\
   \textsc{FNO}      & 0.4289{\scriptsize±0.0051}              & 0.7740{\scriptsize±0.0046}              & 0.7086{\scriptsize±0.0045}               & 0.1324{\scriptsize±0.0019}             & 0.9075{\scriptsize±0.0051}              & 1.4940{\scriptsize±0.0046}              \\
   \textsc{NFS}      & \textbf{\textsc{0.1497}}{\scriptsize±0.0005}               & \textbf{\textsc{0.1962}}{\scriptsize±0.0007}      &        \textbf{\textsc{0.2254}}{\scriptsize±0.0007}                                    & \textbf{\textsc{0.7245}}{\scriptsize±0.0009}             &\textbf{\textsc{0.4216}}{\scriptsize±0.0033}  &      \textbf{\textsc{0.8578}}{\scriptsize±0.0041}                                    \\
   \toprule
   & \multicolumn{2}{c}{\begin{tabular}[c]{@{}c@{}}\texttt{Darcy Flow} \\ ($n_s=1024$)\end{tabular}} & \multicolumn{2}{c}{\begin{tabular}[c]{@{}c@{}}\texttt{Darcy Flow}\\  ($n_s=4096$)\end{tabular}} & \multicolumn{2}{c}{\begin{tabular}[c]{@{}c@{}}\texttt{Darcy Flow} \\ ($n_s=16384$)\end{tabular}} \\
   \midrule
   \textsc{DCRNN}      & 1.8146{\scriptsize±0.0060}                                     & 2.6352{\scriptsize±0.0029}                                     & 1.7629{\scriptsize±0.0003}                                    & 2.5760{\scriptsize±0.0001}                                   & OOM                                    & OOM                                      \\
   \textsc{AGCRN}      & 1.6938{\scriptsize±0.0001}                                    & 2.4440{\scriptsize±0.0001}                                    & 1.7336{\scriptsize±0.0001}                                    & 2.4167{\scriptsize±0.0001}                                    & OOM                                    & OOM                                     \\
   \textsc{GCGRU}      & 1.7633{\scriptsize±0.0001}                                    & 2.5696{\scriptsize±0.0001}                                    & 1.7403{\scriptsize±0.0001}                                    & 2.5363{\scriptsize±0.0001}                                    & OOM                                   & OOM                                     \\
   \textsc{MPPDE}      & 0.6673{\scriptsize±0.0009}                                    & 0.9290{\scriptsize±0.0012}                                      & 0.5608{\scriptsize±0.0053}                                    & 0.8424{\scriptsize±0.0051}                                   & 0.6384{\scriptsize±0.0005}                                    & 0.8748{\scriptsize±0.0005}   \\
   \textsc{NFS}      & \textbf{\textsc{0.1727}}{\scriptsize±0.0047}                                    & \textbf{\textsc{0.2311}}{\scriptsize±0.0066}                                    & \textbf{\textsc{0.1430}}{\scriptsize±0.0007}                                    & \textbf{\textsc{0.1914}}{\scriptsize±0.0014}                                    & \textbf{\textsc{0.2379}}{\scriptsize±0.0007}                                    & \textbf{\textsc{0.3489}}{\scriptsize±0.0009}                                     \\
   \bottomrule
\end{tabular}
   }
   \end{table}

\begin{table}[htb]
    \centering
    \caption{Performance comparison with Vision Mixer benchmarks on different equations $(n_t=1)$.\\
    Validation loss on \texttt{Burgers'$(n_t =1)$} of VIT, GFN, and FNO does not converge. The results show that the early-stopping occurs in the begining of training. } \label{tab:eqcomp1}
    \resizebox{1.00\columnwidth}{!}{
    \begin{tabular}{lrrrrrr}
        \toprule
         & \multicolumn{1}{c}{MAE ($\times 10^{-3}$)}                   & \multicolumn{1}{c}{RMSE($\times 10^{-3}$)}                  & \multicolumn{1}{c}{MAE($\times 10^{-3}$)}                   & \multicolumn{1}{c}{RMSE($\times 10^{-3}$)}                  & \multicolumn{1}{c}{MAE($\times 10^{-3}$)}                   & \multicolumn{1}{c}{RMSE($\times 10^{-3}$)}                   \\
        \cmidrule(lr){2-3} \cmidrule(lr){4-5} \cmidrule(lr){6-7}
        Vision Mixers    & \multicolumn{2}{c}{\begin{tabular}[c]{@{}c@{}}\texttt{Burgers'} \\ ($r=512,n'_t=10$)\end{tabular}} & \multicolumn{2}{c}{\begin{tabular}[c]{@{}c@{}}\texttt{Burgers'}\\  ($r=512,n'_t=40$)\end{tabular}} & \multicolumn{2}{c}{\begin{tabular}[c]{@{}c@{}}\texttt{Burgers'} \\ ($r=1024,n'_t=20$)\end{tabular}} \\
    \midrule
    \textsc{VIT}      & 201.6539{\scriptsize±0.5284}              & 231.9138{\scriptsize±0.8403}               & 183.6696{\scriptsize±0.3015}                & 210.6237{\scriptsize±0.6767}             & 195.5858{\scriptsize±0.7706}              & 224.4712{\scriptsize±1.2472}              \\
    \textsc{MLPMixer} & 201.6547{\scriptsize±0.0671}              & 231.9163{\scriptsize±0.0263}               & 183.6535{\scriptsize±0.0599}               & 210.6160{\scriptsize±0.0305}              & 195.5960{\scriptsize±0.0240}              & 224.4791{\scriptsize±0.0132}                \\
    \textsc{GFN}      & 201.6557{\scriptsize±0.9513}              & 231.9122{\scriptsize±0.9535}                & 183.6674{\scriptsize±0.4893}               & 210.6165{\scriptsize±0.4831}              & 195.5918{\scriptsize±0.0471}              & 224.4736{\scriptsize±0.0681}              \\
    \textsc{FNO}      & 201.6527{\scriptsize±1.1415}              & 231.9119{\scriptsize±1.6747}              & 183.6696{\scriptsize±0.3015}               & 210.6299{\scriptsize±0.4983}             & 195.5902{\scriptsize±0.9304}               & 224.4723{\scriptsize±0.9230}             \\
    \textsc{NFS}      & \textbf{\textsc{0.1806}}{\scriptsize±0.0005}               & \textbf{\textsc{0.2669}}{\scriptsize±0.0010}              & \textbf{\textsc{0.3570}}{\scriptsize±0.0008}               & \textbf{\textsc{0.5340}}{\scriptsize±0.0009}              &\textbf{\textsc{0.4344}}{\scriptsize±0.0014}  &      \textbf{\textsc{0.6092}}{\scriptsize±0.0017}                                    \\
    \toprule
    & \multicolumn{2}{c}{\begin{tabular}[c]{@{}c@{}}\texttt{KdV} \\ ($r=512,n'_t=10$)\end{tabular}} & \multicolumn{2}{c}{\begin{tabular}[c]{@{}c@{}}\texttt{KdV}\\  ($r=512,n'_t=40$)\end{tabular}} & \multicolumn{2}{c}{\begin{tabular}[c]{@{}c@{}}\texttt{KdV} \\ ($r=1024,n'_t=20$)\end{tabular}} \\
    \midrule
    \textsc{VIT}      & 0.2808{\scriptsize±0.0006}                                     & 0.3938{\scriptsize±0.0009}                                     & 0.3428{\scriptsize±0.0012}                                    & 0.6832{\scriptsize±0.0016}                                   & 0.3066{\scriptsize±0.0003}                                    & 0.5461{\scriptsize±0.0003}                                      \\
    \textsc{MLPMixer} & 0.2732{\scriptsize±0.0054}                                    & 0.4259{\scriptsize±0.0088}                                    & \textbf{\textsc{0.3336}}{\scriptsize±0.0045}                                    & \textbf{\textsc{0.5923}}{\scriptsize±0.0081}                                    & 0.2872{\scriptsize±0.0005}                                    & 0.5235{\scriptsize±0.0006}                                     \\
    \textsc{GFN}      & 0.2587{\scriptsize±0.0032}                                    & 0.3490{\scriptsize±0.0056}                                    & 0.3086{\scriptsize±0.0223}                                    & 0.5952{\scriptsize±0.0338}                                    & 0.2011{\scriptsize±0.0074}                                    & 0.3464{\scriptsize±0.0063}                                     \\
    \textsc{FNO}      & 0.2619{\scriptsize±0.0069}                                    & 0.3849{\scriptsize±0.0107}                                    & 0.5608{\scriptsize±0.0053}                                    & 0.8424{\scriptsize±0.0051}                                    & 0.3925{\scriptsize±0.0079}                                    & 0.4623{\scriptsize±0.0087}                                     \\
    \textsc{NFS}      & \textbf{\textsc{0.2514}}{\scriptsize±0.0008}                                    & \textbf{\textsc{0.3776}}{\scriptsize±0.00011}                                    & 0.4522{\scriptsize±0.0013}                                    & \textbf{\textsc{0.6290}}{\scriptsize±0.0022}                                    & \textbf{\textsc{0.2254}}{\scriptsize±0.0007}                                    & \textbf{\textsc{0.0745}}{\scriptsize±0.0010}                                     \\
    \toprule
    & \multicolumn{2}{c}{\begin{tabular}[c]{@{}c@{}}\texttt{NS} \\ ($r=64,n'_t=10$)\end{tabular}} & \multicolumn{2}{c}{\begin{tabular}[c]{@{}c@{}}\texttt{NS}\\  ($r=64,n'_t=40$)\end{tabular}} & \multicolumn{2}{c}{\begin{tabular}[c]{@{}c@{}}\texttt{NS} \\ ($r=128,n'_t=20$)\end{tabular}} \\
    \midrule
    \textsc{VIT}      & 9.3797{\scriptsize±0.0421}                                     & 12.9291{\scriptsize±0.0703}                                    & 22.8565{\scriptsize±0.0935}                                   & 29.1130{\scriptsize±0.1428}                                   & 15.7398{\scriptsize±0.0757}                                    & 20.6927{\scriptsize±0.0664}                                    \\
    \textsc{MLPMixer} & 7.5246{\scriptsize±0.0080}                                     & 10.4762{\scriptsize±0.0096}                                    & 15.8632{\scriptsize±0.0375}                                    & 20.1522{\scriptsize±0.0604}                                    & 14.9360{\scriptsize±0.0305}                                    & 19.3268{\scriptsize±0.0635}                                    \\
    \textsc{GFN}      & 3.5524{\scriptsize±0.0057}                                    & 4.7071{\scriptsize±0.0088}                                    & 10.2250{\scriptsize±0.0331}                                    & 13.0451{\scriptsize±0.0704}                                     & 6.3976{\scriptsize±0.00345}                                    & 8.2685{\scriptsize±0.297}                                     \\
    \textsc{FNO}      & 3.3425{\scriptsize±0.0007}                                    & 5.2566{\scriptsize±0.0008}                                    & 8.9857{\scriptsize±0.0010}                                    & 14.0171{\scriptsize±0.0023}                                   & 4.4627{\scriptsize±0.0004}                                    & 6.3047{\scriptsize±0.0004}                                     \\
    \textsc{NFS}      & \textbf{\textsc{1.7425}}{\scriptsize±0.0017}                                    & \textbf{\textsc{2.2847}}{\scriptsize±0.0022}                                    & \textbf{\textsc{4.7882}}{\scriptsize±0.0066}                                    & \textbf{\textsc{6.1508}}{\scriptsize±0.0042}                                    & \textbf{\textsc{2.6988}}{\scriptsize±0.0005} & \textbf{\textsc{3.5121}}{\scriptsize±0.0006}                                                                                     \\
    \bottomrule
\end{tabular}
    }
    \end{table}
\begin{table}[htb]
    \centering
    \caption{Performance comparison with Vision Mixer benchmarks on different equations ($n_t=10$).} \label{tab:eqcomp2}
    \resizebox{1.00\columnwidth}{!}{
    \begin{tabular}{lrrrrrr}
        \toprule
         & \multicolumn{1}{c}{MAE ($\times 10^{-3}$)}                   & \multicolumn{1}{c}{RMSE($\times 10^{-3}$)}                  & \multicolumn{1}{c}{MAE($\times 10^{-3}$)}                   & \multicolumn{1}{c}{RMSE($\times 10^{-3}$)}                  & \multicolumn{1}{c}{MAE($\times 10^{-3}$)}                   & \multicolumn{1}{c}{RMSE($\times 10^{-3}$)}                   \\
        \cmidrule(lr){2-3} \cmidrule(lr){4-5} \cmidrule(lr){6-7}
        Vision Mixers    & \multicolumn{2}{c}{\begin{tabular}[c]{@{}c@{}}\texttt{Burgers'} \\ ($r=512,n'_t=10$)\end{tabular}} & \multicolumn{2}{c}{\begin{tabular}[c]{@{}c@{}}\texttt{Burgers'}\\  ($r=512,n'_t=40$)\end{tabular}} & \multicolumn{2}{c}{\begin{tabular}[c]{@{}c@{}}\texttt{Burgers'} \\ ($r=1024,n'_t=20$)\end{tabular}} \\
    \midrule
    \textsc{VIT}      & 0.5042{\scriptsize±0.0114}                & 0.7667{\scriptsize±0.0225}              & 2.4269{\scriptsize±0.0288}               & 3.7728{\scriptsize±0.0431}             & 1.5327{\scriptsize±0.0314}               & 2.4093{\scriptsize±0.0408}              \\
    \textsc{MLPMixer} & 0.1973{\scriptsize±0.0070}               & 0.2600{\scriptsize±0.0097}               & 0.4210{\scriptsize±0.0084}               & 0.5844{\scriptsize±0.0101}              & 0.3303{\scriptsize±0.0077}                & 0.4473{\scriptsize±0.0086}               \\
    \textsc{GFN}      & 0.2383{\scriptsize±0.0082}                & 0.3066{\scriptsize±0.0114}                & 0.4187{\scriptsize±0.0079}               & 0.5407{\scriptsize±0.0090}              & 0.3500{\scriptsize±0.0062}              & 0.4489{\scriptsize±0.0081}              \\
    \textsc{FNO}      & 0.0978{\scriptsize±0.0019}              & \textbf{\textsc{0.1287}}{\scriptsize±0.0023}              & 0.1815{\scriptsize±0.0009}               & 0.2410{\scriptsize±0.0011}             & \textbf{\textsc{0.1430}}{\scriptsize±0.0009}               & \textbf{\textsc{0.1871}}{\scriptsize±0.0010}             \\
    \textsc{NFS}      & \textbf{\textsc{0.0958}}{\scriptsize±0.0015}               & 0.1347{\scriptsize±0.0022}              & \textbf{\textsc{0.1708}}{\scriptsize±0.0006}               & \textbf{\textsc{0.2351}}{\scriptsize±0.0009}              &0.1474{\scriptsize±0.0026} &      0.1957{\scriptsize±0.0034}                                    \\
    \toprule
    & \multicolumn{2}{c}{\begin{tabular}[c]{@{}c@{}}\texttt{KdV} \\ ($r=512,n'_t=10$)\end{tabular}} & \multicolumn{2}{c}{\begin{tabular}[c]{@{}c@{}}\texttt{KdV}\\  ($r=512,n'_t=40$)\end{tabular}} & \multicolumn{2}{c}{\begin{tabular}[c]{@{}c@{}}\texttt{KdV} \\ ($r=1024,n'_t=20$)\end{tabular}} \\
    \midrule
    \textsc{VIT}      & 0.2066{\scriptsize±0.0027}                                     & 0.3525{\scriptsize±0.0049}                                     & 0.2376{\scriptsize±0.0022}                                    & 0.5521{\scriptsize±0.0036}                                   & 0.1897{\scriptsize±0.0003}                                    & 0.3725{\scriptsize±0.0009}                                      \\
    \textsc{MLPMixer} & 0.2152{\scriptsize±0.0023}                                    & 0.3686{\scriptsize±0.0039}                                    & \textbf{\textsc{0.2497}}{\scriptsize±0.0017}                                    & 0.5400{\scriptsize±0.0029}                                    & 0.2062{\scriptsize±0.0007}                                    & 0.4429{\scriptsize±0.0012}                                     \\
    \textsc{GFN}      & 0.1530{\scriptsize±0.0004}                                    & 0.2607{\scriptsize±0.0006}                                    & 0.2691{\scriptsize±0.0007}                                    & 0.5451{\scriptsize±0.0014}                                    & 0.1984{\scriptsize±0.0002}                                    & 0.3869{\scriptsize±0.0003}                                     \\
    \textsc{FNO}      & 0.3230{\scriptsize±0.0035}                                    & 1.1105{\scriptsize±0.0061}                                    & 0.9605{\scriptsize±0.0024}                                    & 2.7500{\scriptsize±0.0055}                                    & 0.5929{\scriptsize±0.0020}                                    & 1.6473{\scriptsize±0.0033}                                     \\
    \textsc{NFS}      & \textbf{\textsc{0.0678}}{\scriptsize±0.0002}                                    & \textbf{\textsc{0.1214}}{\scriptsize±0.0003}                                    & 0.2709{\scriptsize±0.0009}                                    & \textbf{\textsc{0.5122}}{\scriptsize±0.0013}                                    & \textbf{\textsc{0.1576}}{\scriptsize±0.0003}                                    & \textbf{\textsc{0.3114}}{\scriptsize±0.0005}                                     \\
    \toprule
    & \multicolumn{2}{c}{\begin{tabular}[c]{@{}c@{}}\texttt{NS} \\ ($r=64,n'_t=10$)\end{tabular}} & \multicolumn{2}{c}{\begin{tabular}[c]{@{}c@{}}\texttt{NS}\\  ($r=64,n'_t=40$)\end{tabular}} & \multicolumn{2}{c}{\begin{tabular}[c]{@{}c@{}}\texttt{NS} \\ ($r=128,n'_t=20$)\end{tabular}} \\
    \midrule
    \textsc{VIT}      & 3.9609{\scriptsize±0.0101}                                     & 6.0575{\scriptsize±0.0250}                                    & 12.3433{\scriptsize±0.0342}                                   & 16.5238{\scriptsize±0.0415}                                   & 9.3010{\scriptsize±0.0234}                                    & 14.0027{\scriptsize±0.0380}                                    \\
    \textsc{MLPMixer} & 3.1530{\scriptsize±0.0049}                                     & 4.4339{\scriptsize±0.0067}                                    & 7.9291{\scriptsize±0.0038}                                    & 10.4149{\scriptsize±0.0066}                                    & 7.7410{\scriptsize±0.0037}                                    & 10.1934{\scriptsize±0.0082}                                    \\
    \textsc{GFN}      & 1.7396{\scriptsize±0.0016}                                    & 2.3551{\scriptsize±0.0028}                                    & 5.4464{\scriptsize±0.0023}                                    & 7.2130{\scriptsize±0.0032}                                     & 3.1261{\scriptsize±0.0026}                                    & 4.1691{\scriptsize±0.0047}                                     \\
    \textsc{FNO}      & 2.4076{\scriptsize±0.0017}                                    & 3.2861{\scriptsize±0.0024}                                    & 7.6979{\scriptsize±0.0035}                                    & 10.6401{\scriptsize±0.0056}                                   & 3.7001{\scriptsize±0.0034}                                    & 5.0047{\scriptsize±0.0072}                                     \\
    \textsc{NFS}      & \textbf{\textsc{0.8636}}{\scriptsize±0.0008}                                    & \textbf{\textsc{1.2264}}{\scriptsize±0.0011}                                    & \textbf{\textsc{3.1122}}{\scriptsize±0.0020}                                    & \textbf{\textsc{4.1950}}{\scriptsize±0.0037}                                    & \textbf{\textsc{1.8406}}{\scriptsize±0.0003} & \textbf{\textsc{2.5620}}{\scriptsize±0.0005}                                                                                     \\
    \bottomrule
\end{tabular}
    }
    \end{table}

   \begin{table}[htb]
   \centering
   \caption{Performance comparison with graph spatio-temporal benchmarks $(n_t=1)$.} \label{tab:noneqcomp1}
   \resizebox{1.00\columnwidth}{!}{
   \begin{tabular}{lrrrrrr}
         \toprule
         \multirow{2}{*}{\makecell[c]{Graph Spatio-\\Temporal \\Models}} & \multicolumn{1}{c}{MAE ($\times 10^{-3}$)}                   & \multicolumn{1}{c}{RMSE($\times 10^{-3}$)}                  & \multicolumn{1}{c}{MAE($\times 10^{-3}$)}                   & \multicolumn{1}{c}{RMSE($\times 10^{-3}$)}                  & \multicolumn{1}{c}{MAE($\times 10^{-3}$)}                   & \multicolumn{1}{c}{RMSE($\times 10^{-3}$)}                   \\
         \cmidrule(lr){2-3} \cmidrule(lr){4-5} \cmidrule(lr){6-7}
            & \multicolumn{2}{c}{\begin{tabular}[c]{@{}c@{}}\texttt{Burgers'} \\ ($n_s=512,n'_t=10$)\end{tabular}} & \multicolumn{2}{c}{\begin{tabular}[c]{@{}c@{}}\texttt{Burgers'}\\  ($n_s=256,n'_t=20$)\end{tabular}} & \multicolumn{2}{c}{\begin{tabular}[c]{@{}c@{}}\texttt{Burgers'} \\ ($n_s=512,n'_t=40$)\end{tabular}} \\
   \midrule
   \textsc{DCRNN}       &277.8393{\scriptsize±0.0082} &346.1716{\scriptsize±0.0088} &292.1712{\scriptsize±0.0280} &368.1883{\scriptsize±0.0204} &298.4096{\scriptsize±0.0137} &373.0938{\scriptsize±0.0186} \\
   \textsc{AGCRN}  &289.9780{\scriptsize±0.0001} &360.9834{\scriptsize±0.0001} &272.6697{\scriptsize±0.3404} &340.1351{\scriptsize±0.5435}&305.4976{\scriptsize±0.2120}  &376.0804{\scriptsize±0.2385} \\
   \textsc{GCGRU}       &288.4507{\scriptsize±0.0246} &361.1175{\scriptsize±0.0512} &294.9075{\scriptsize±0.0005} &367.4703{\scriptsize±0.0004} &291.0365{\scriptsize±0.0265} &365.1668{\scriptsize±0.0827} \\
   \textsc{MPPDE}      &24.4997{\scriptsize±0.0014} &34.5123{\scriptsize±0.0017}  &25.4357{\scriptsize±0.0002} & 31.7015{\scriptsize±0.0002} &25.3311{\scriptsize±0.0004} &33.7808{\scriptsize±0.0005} \\
   \textsc{NFS}      &\textbf{\textsc{16.1860}}{\scriptsize±0.0016} &\textbf{\textsc{28.1504}}{\scriptsize±0.0021} &\textbf{\textsc{21.1634}}{\scriptsize±0.0018} &\textbf{\textsc{33.8976}}{\scriptsize±0.0018}&\textbf{\textsc{26.0818}}{\scriptsize±0.0001}& \textbf{\textsc{44.7962}}{\scriptsize±0.0003}             \\
   \toprule
   & \multicolumn{2}{c}{\begin{tabular}[c]{@{}c@{}}\texttt{KdV} \\ ($n_s=512,n'_t=10$)\end{tabular}} & \multicolumn{2}{c}{\begin{tabular}[c]{@{}c@{}}\texttt{KdV}\\  ($n_s=256,n'_t=20$)\end{tabular}} & \multicolumn{2}{c}{\begin{tabular}[c]{@{}c@{}}\texttt{KdV} \\ ($r=512,n'_t=40$)\end{tabular}} \\
   \midrule
   \textsc{DCRNN}      &1.6855{\scriptsize±0.0001} &3.0875{\scriptsize±0.0001} &3.1267{\scriptsize±0.0001} &4.8662{\scriptsize±0.0001} &5.7387{\scriptsize±0.0001} &8.3752{\scriptsize±0.0001} \\
   \textsc{AGCRN} &4.0753{\scriptsize±0.0001} &6.8943{\scriptsize±0.0001} &5.4107{\scriptsize±0.0001} &9.2333{\scriptsize±0.0001} &8.4438{\scriptsize±0.0001} &13.8677{\scriptsize±0.0001} \\
   \textsc{GCGRU}      &1.6554{\scriptsize±0.0001} &2.6839{\scriptsize±0.0001} &3.0677{\scriptsize±0.0001} &4.6557{\scriptsize±0.0001} &5.8745{\scriptsize±0.0001} &9.4528{\scriptsize±0.0001} \\
   \textsc{MPPDE}      &1.5452{\scriptsize±0.0001} &2.6774{\scriptsize±0.0001} &2.9929{\scriptsize±0.0007} &5.4582{\scriptsize±0.0010} &3.0101{\scriptsize±0.0001} &4.9946{\scriptsize±0.0001} \\
   \textsc{NFS}      & \textbf{\textsc{0.0816}}{\scriptsize±0.0012}                                    & \textbf{\textsc{0.1512}}{\scriptsize±0.0022}                                    & \textbf{\textsc{0.1576}}{\scriptsize±0.0007}                                    & \textbf{\textsc{0.3114}}{\scriptsize±0.0018}                                    & \textbf{\textsc{0.3210}}{\scriptsize±0.0021}                                    & \textbf{\textsc{0.6873}}{\scriptsize±0.0049}                                     \\
   \toprule
   & \multicolumn{2}{c}{\begin{tabular}[c]{@{}c@{}}\texttt{NS} \\ ($n_s=4096,n'_t=10$)\end{tabular}} & \multicolumn{2}{c}{\begin{tabular}[c]{@{}c@{}}\texttt{NS}\\  ($n_s=1024,n'_t=20$)\end{tabular}} & \multicolumn{2}{c}{\begin{tabular}[c]{@{}c@{}}\texttt{NS} \\ ($n_s=4096,n'_t=40$)\end{tabular}} \\
   \midrule
   \textsc{DCRNN}      &30.6756{\scriptsize±0.0001} &41.7815{\scriptsize±0.0001} &52.1290{\scriptsize±0.0138} &69.7019{\scriptsize±0.0032} &88.3382{\scriptsize±0.0864} & 119.5021{\scriptsize±0.0055}\\
   \textsc{AGCRN} &OOM &OOM &59.9393{\scriptsize±0.0001} &79.0434{\scriptsize±0.0001} &OOM &OOM \\
   \textsc{GCGRU}      &28.8537{\scriptsize±0.0019} &40.1215{\scriptsize±0.0008} &49.9352{\scriptsize±0.0028} &67.5623{\scriptsize±0.0014} &85.9303{\scriptsize±0.0731} & 117.9925{\scriptsize±0.0172} \\
   \textsc{MPPDE}      &8.9810{\scriptsize±0.0014} &12.1595{\scriptsize±0.0022}  &20.7453{\scriptsize±0.0008} & 32.1098{\scriptsize±0.0018} &54.2387{\scriptsize±0.0006} &90.0190{\scriptsize±0.0007} \\
   \textsc{NFS}      & \textbf{\textsc{2.1992}}{\scriptsize±0.0021}                                    & \textbf{\textsc{2.8280}}{\scriptsize±0.0033}                                    & \textbf{\textsc{3.9178}}{\scriptsize±0.0054}                                    & \textbf{\textsc{5.0182}}{\scriptsize±0.0080}                                    & \textbf{\textsc{4.7865}}{\scriptsize±0.0042} & \textbf{\textsc{6.1384}}{\scriptsize±0.0069}                                                                                     \\
   \bottomrule
\end{tabular}
   }
   \end{table}
  \textsc{NFS} fails to model the non-equispaced Burgers' Equation when $n_t$ is set as 1, in which the performance is far from it can achieve in equispaced scenarios. Such problem will be our future work.
\begin{table}[htb]
    \centering
    \caption{Performance comparison with graph spatio-temporal benchmarks ($n_t=10$).} \label{tab:noneqcomp2}
    \resizebox{1.00\columnwidth}{!}{
    \begin{tabular}{lrrrrrr}
        \toprule
        \multirow{2}{*}{\makecell[c]{Graph Spatio-\\Temporal \\Models}} & \multicolumn{1}{c}{MAE ($\times 10^{-3}$)}                   & \multicolumn{1}{c}{RMSE($\times 10^{-3}$)}                  & \multicolumn{1}{c}{MAE($\times 10^{-3}$)}                   & \multicolumn{1}{c}{RMSE($\times 10^{-3}$)}                  & \multicolumn{1}{c}{MAE($\times 10^{-3}$)}                   & \multicolumn{1}{c}{RMSE($\times 10^{-3}$)}                   \\
        \cmidrule(lr){2-3} \cmidrule(lr){4-5} \cmidrule(lr){6-7}
            & \multicolumn{2}{c}{\begin{tabular}[c]{@{}c@{}}\texttt{Burgers'} \\ ($n_s=512,n'_t=10$)\end{tabular}} & \multicolumn{2}{c}{\begin{tabular}[c]{@{}c@{}}\texttt{Burgers'}\\  ($n_s=256,n'_t=20$)\end{tabular}} & \multicolumn{2}{c}{\begin{tabular}[c]{@{}c@{}}\texttt{Burgers'} \\ ($n_s=512,n'_t=40$)\end{tabular}} \\
    \midrule
    \textsc{DCRNN}       &2.6122{\scriptsize±0.0014} &3.8435{\scriptsize±0.0019} &4.6126{\scriptsize±0.0015} &6.8853{\scriptsize±0.0033} &8.5880{\scriptsize±0.0020} &12.7394{\scriptsize±0.0037} \\
    \textsc{AGCRN}  &4.6667{\scriptsize±0.0001} &6.2791{\scriptsize±0.0001} &10.4900{\scriptsize±0.0009} &13.9810{\scriptsize±0.0022}&15.6143{\scriptsize±0.0002}  &21.0937{\scriptsize±0.0001} \\
    \textsc{GCGRU}       &1.6643{\scriptsize±0.0002} &2.5074{\scriptsize±0.0003} &3.1400{\scriptsize±0.0010} &4.8008{\scriptsize±0.0019} &5.7653{\scriptsize±0.0021} &8.9335{\scriptsize±0.0028} \\
    \textsc{MPPDE}      &1.1271{\scriptsize±0.0004} &1.8838{\scriptsize±0.0007}  &2.4554{\scriptsize±0.0003} & 4.4315{\scriptsize±0.0006} &4.1213{\scriptsize±0.0006} &6.1980{\scriptsize±0.0009} \\
    \textsc{NFS}      &\textbf{\textsc{0.1085}}{\scriptsize±0.0016} &\textbf{\textsc{0.1504}}{\scriptsize±0.0021} &\textbf{\textsc{0.1634}}{\scriptsize±0.0018} &\textbf{\textsc{0.2328}}{\scriptsize±0.0018}&\textbf{\textsc{0.1983}}{\scriptsize±0.0001}& \textbf{\textsc{0.2775}}{\scriptsize±0.0003}             \\
    \toprule
    & \multicolumn{2}{c}{\begin{tabular}[c]{@{}c@{}}\texttt{KdV} \\ ($n_s=512,n'_t=10$)\end{tabular}} & \multicolumn{2}{c}{\begin{tabular}[c]{@{}c@{}}\texttt{KdV}\\  ($n_s=256,n'_t=20$)\end{tabular}} & \multicolumn{2}{c}{\begin{tabular}[c]{@{}c@{}}\texttt{KdV} \\ ($r=512,n'_t=40$)\end{tabular}} \\
    \midrule
    \textsc{DCRNN}      &2.3196{\scriptsize±0.0001} &4.1634{\scriptsize±0.0001} &3.4503{\scriptsize±0.0005} &5.7450{\scriptsize±0.0003} &4.9286{\scriptsize±0.0010} &8.3912{\scriptsize±0.0008} \\
    \textsc{AGCRN} &3.9350{\scriptsize±0.0001} &6.1166{\scriptsize±0.0001} &5.6631{\scriptsize±0.0001} &8.1191{\scriptsize±0.0001} &8.2893{\scriptsize±0.0002} &11.5684{\scriptsize±0.0003} \\
    \textsc{GCGRU}      &1.6643{\scriptsize±0.0001} &2.5074{\scriptsize±0.0001} &3.4205{\scriptsize±0.0001} &5.6873{\scriptsize±0.0001} &2.5032{\scriptsize±0.0002} &5.4515{\scriptsize±0.0003} \\
    \textsc{MPPDE}      &1.4967{\scriptsize±0.0003} &2.6309{\scriptsize±0.0002} &2.9708{\scriptsize±0.0027} &5.3811{\scriptsize±0.0050} &2.4293{\scriptsize±0.0006} &4.9310{\scriptsize±0.0005} \\
    \textsc{NFS}      & \textbf{\textsc{0.0816}}{\scriptsize±0.0012}                                    & \textbf{\textsc{0.1512}}{\scriptsize±0.0022}                                    & \textbf{\textsc{0.1576}}{\scriptsize±0.0007}                                    & \textbf{\textsc{0.3114}}{\scriptsize±0.0018}                                    & \textbf{\textsc{0.3210}}{\scriptsize±0.0021}                                    & \textbf{\textsc{0.6873}}{\scriptsize±0.0049}                                     \\
    \toprule
    & \multicolumn{2}{c}{\begin{tabular}[c]{@{}c@{}}\texttt{NS} \\ ($n_s=4096,n'_t=10$)\end{tabular}} & \multicolumn{2}{c}{\begin{tabular}[c]{@{}c@{}}\texttt{NS}\\  ($n_s=1024,n'_t=20$)\end{tabular}} & \multicolumn{2}{c}{\begin{tabular}[c]{@{}c@{}}\texttt{NS} \\ ($n_s=4096,n'_t=40$)\end{tabular}} \\
    \midrule
    \textsc{DCRNN}      &8.7025{\scriptsize±0.0003} &12.5238{\scriptsize±0.0002} &27.1069{\scriptsize±0.0024} &39.1259{\scriptsize±0.0031} &59.6602{\scriptsize±0.0177} & 88.2946{\scriptsize±0.0146}\\
    \textsc{AGCRN} &OOM &OOM &42.4197{\scriptsize±0.0006} &60.5375{\scriptsize±0.0008} &OOM &OOM \\
    \textsc{GCGRU}      &6.3570{\scriptsize±0.0001} &9.7306{\scriptsize±0.0002} &21.3537{\scriptsize±0.0026} &32.9674{\scriptsize±0.0033} &57.2493{\scriptsize±0.0085} & 84.1847{\scriptsize±0.0106} \\
    \textsc{MPPDE}      &5.4353{\scriptsize±0.0041} &7.8838{\scriptsize±0.0037}  &17.5902{\scriptsize±0.0013} & 25.9372{\scriptsize±0.0016} &42.3057{\scriptsize±0.0066} &76.3374{\scriptsize±0.0069} \\
    \textsc{NFS}      & \textbf{\textsc{0.9335}}{\scriptsize±0.0011}                                    & \textbf{\textsc{1.3254}}{\scriptsize±0.0012}                                    & \textbf{\textsc{1.8239}}{\scriptsize±0.0012}                                    & \textbf{\textsc{2.5291}}{\scriptsize±0.0008}                                    & \textbf{\textsc{3.2768}}{\scriptsize±0.0026} & \textbf{\textsc{4.3988}}{\scriptsize±0.0009}                                                                                     \\
    \bottomrule
\end{tabular}
    }
    \end{table}
\clearpage
\subsection{More Visualization}
\label{app:visualization}
Here we provide more visualization results on the three equations.
See Fig.~\ref{app:bgsvis}, Fig.~\ref{app:kdvvis} and Fig.~\ref{app:nsvis}.
\begin{figure}[ht]
    \centering \vspace{-0.3cm}
    \subfigure{\includegraphics[width=1.0\columnwidth, trim = 40 0 0 0,clip]{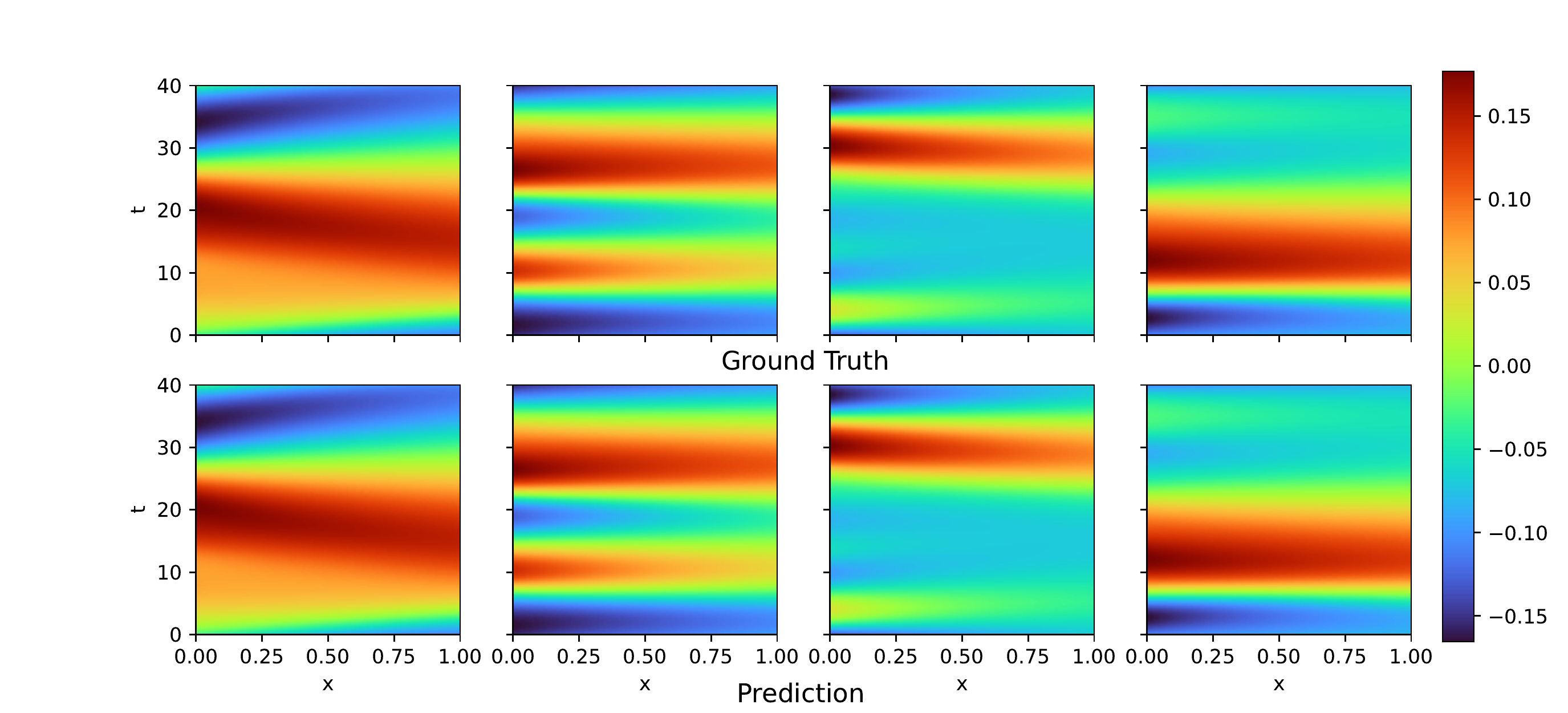}}\vspace{-1em}
    \caption{Visualization on equispaced Burgers' equation. }\vspace{-0.4cm} \label{app:bgsvis}
\end{figure}

\begin{figure}[ht]
    \centering \vspace{-0.0cm}
    \subfigure{\includegraphics[width=1.0\columnwidth, trim = 40 0 0 0,clip]{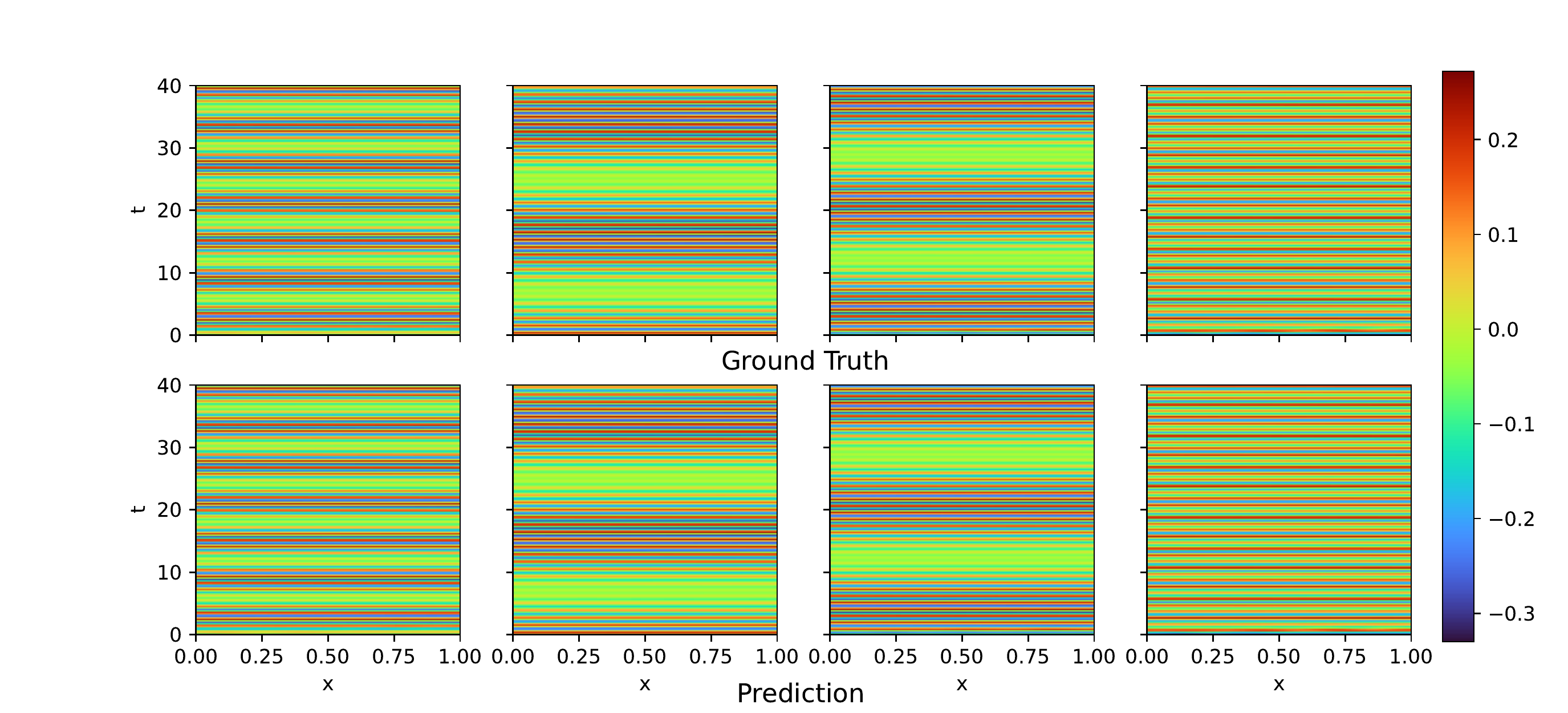}}\vspace{-1em}
    \caption{Visualization on equispaced KdV equation. }\vspace{-0.0cm} \label{app:kdvvis}
\end{figure}
\begin{figure}[ht]
    \centering
    \subfigure{\includegraphics[width=1.0\columnwidth, trim = 100 0 30 15,clip]{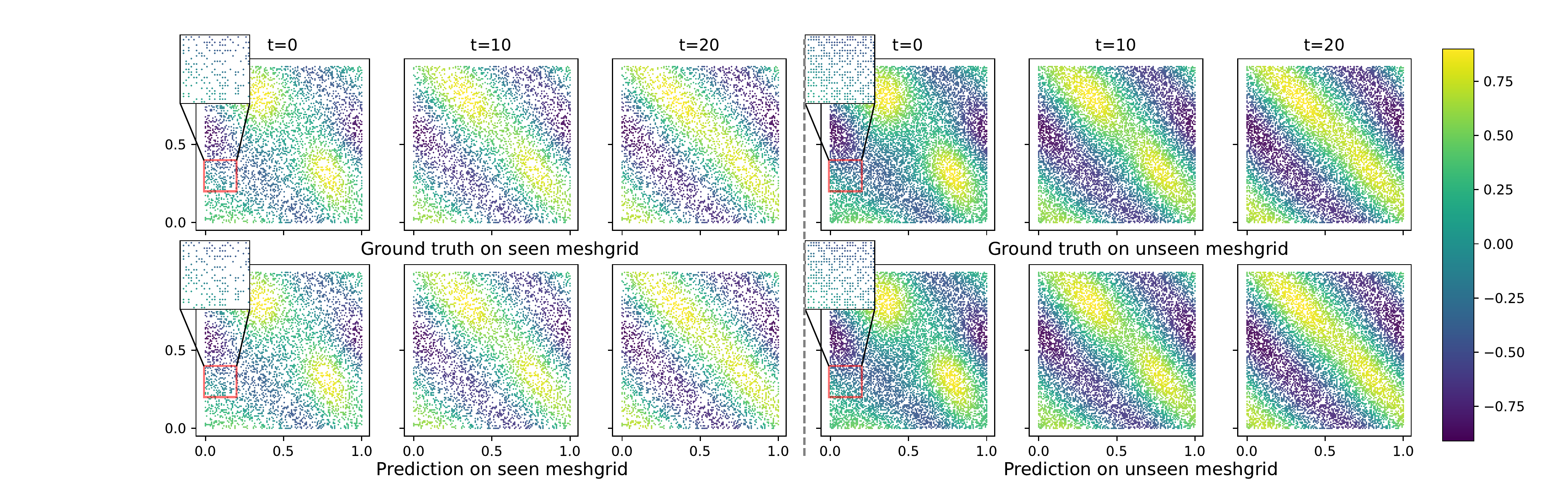}}\vspace{-1em}
    \subfigure{\includegraphics[width=1.0\columnwidth, trim = 100 0 30 15,clip]{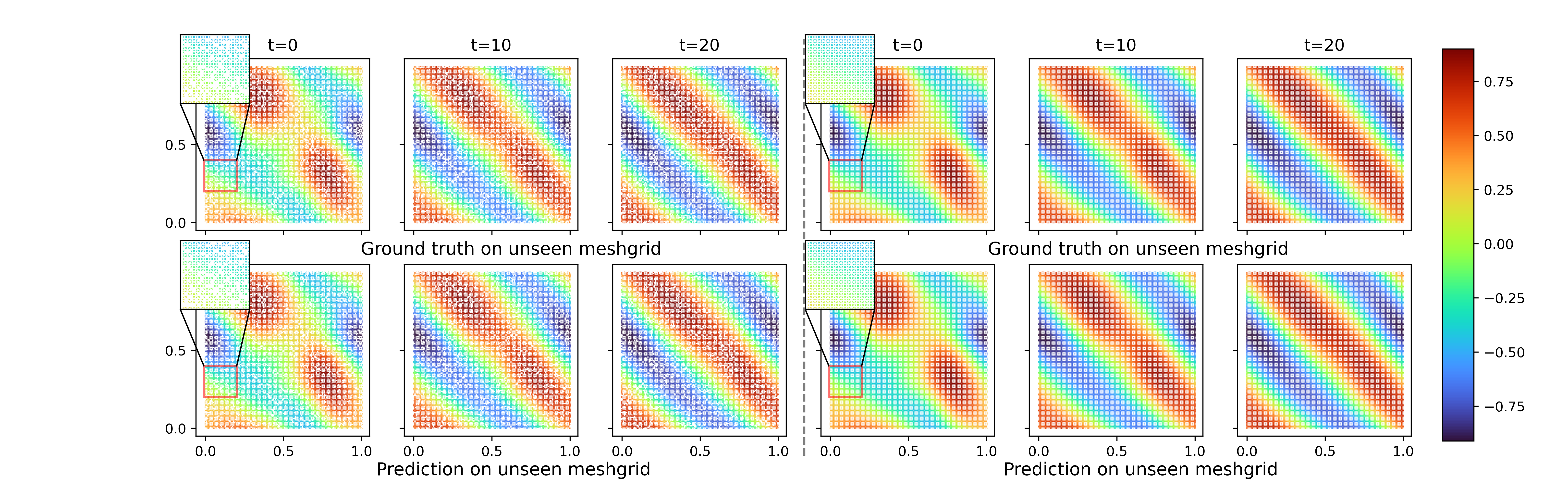}}\vspace{-1em}
    \caption{Visualization on non-equispaced NS equation: The training mesh ($n_s=4096$ in upper-left) is different from the meshes in inference process ($n'_s=8192$ in upper-right, $n'_s=12288$ in lower-left and $n's=16384$ in lower-right). }\vspace{-0.4cm} \label{app:nsvis}
\end{figure}

\clearpage
\subsection{Mesh-invariant Evaluation}
\label{app:meshinv}

\begin{table}[h]\vspace{-1em}
   \centering
   \caption{Mesh-invariant performance of \textsc{NFS} on Burgers' and KdV equations $(n_t = 10)$. } \label{app:tabmeshinv}
   \resizebox{0.80\columnwidth}{!}{
   \begin{tabular}{crrrr}
       \toprule
       & \multicolumn{1}{c}{MAE ($\times 10^{-3}$)}                   & \multicolumn{1}{c}{RMSE($\times 10^{-3}$)}                  & \multicolumn{1}{c}{MAE($\times 10^{-3}$)}                   & \multicolumn{1}{c}{RMSE($\times 10^{-3}$)}                                   \\
       \cmidrule(lr){2-3} \cmidrule(lr){4-5} 
   
       & \multicolumn{2}{c}{\begin{tabular}[c]{@{}c@{}}\texttt{Burgers'} \\ ($n_s=512,n_t=10,n'_t=40$)\end{tabular}} & \multicolumn{2}{c}{\begin{tabular}[c]{@{}c@{}}\texttt{KdV}\\  ($n_s=512,n_t=10,n'_t=40$)\end{tabular}} \\
   \midrule
   $\bm{X}$   & 0.1983{\scriptsize±0.0001} &	0.2775{\scriptsize±0.0002}    & 0.3210{\scriptsize±0.0021} &	0.6873{\scriptsize±0.0049}    \\
   {$n'_s=1.3n_s$}      &0.2371{\scriptsize±0.0034}  &	0.3143{\scriptsize±0.0041}     &0.3769{\scriptsize±0.0030} &	0.7805{\scriptsize±0.0077}    \\
   {$n'_s=1.7n_s$} &0.2898{\scriptsize±0.0113}	& 0.3742{\scriptsize±0.0102}    &0.4084{\scriptsize±0.0072}  &	0.8419{\scriptsize±0.0174}    \\
   {$n'_s=2.0n_s$}     &0.3052{\scriptsize±0.0098} &	0.4180{\scriptsize±0.0100}    &0.4111{\scriptsize±0.0042} &	0.8471{\scriptsize±0.0074}    \\
   \bottomrule
\end{tabular}
}
\end{table}
\begin{table}[h]
   \centering 
   \caption{Performance of \textsc{NFS} with its variants of NS equations $(n_t = 10)$ on unseen meshes. } \label{app:meshinvns}
   \resizebox{0.99\columnwidth}{!}{
   \begin{tabular}{crrrrrr}
       \toprule
       & \multicolumn{1}{c}{MAE ($\times 10^{-3}$)}                   & \multicolumn{1}{c}{RMSE($\times 10^{-3}$)}                  & \multicolumn{1}{c}{MAE($\times 10^{-3}$)}                   & \multicolumn{1}{c}{RMSE($\times 10^{-3}$)}                  & \multicolumn{1}{c}{MAE($\times 10^{-3}$)}                   & \multicolumn{1}{c}{RMSE($\times 10^{-3}$)}                   \\
       \cmidrule(lr){2-3} \cmidrule(lr){4-5} \cmidrule(lr){6-7}
   
   Flex + LN     & \multicolumn{2}{c}{\begin{tabular}[c]{@{}c@{}}\texttt{NS} \\ ($n_s=4096,n'_t=10$)\end{tabular}} & \multicolumn{2}{c}{\begin{tabular}[c]{@{}c@{}}\texttt{NS}\\  ($n_s=1024,n'_t=20$)\end{tabular}} & \multicolumn{2}{c}{\begin{tabular}[c]{@{}c@{}}\texttt{NS} \\ ($n_s=4096,n'_t=40$)\end{tabular}} \\
   \midrule
   $\bm{X}$   & 0.9335{\scriptsize±0.0011}                                    & 1.3254{\scriptsize±0.0012}                                    & 1.8239{\scriptsize±0.0012}                                    & 2.5291{\scriptsize±0.0008}                                    & 3.2768{\scriptsize±0.0026} & 4.3988{\scriptsize±0.0009}                                                                                     \\
     
   {$n'_s=2n_s$}      &0.9731{\scriptsize±0.0034} &1.5042{\scriptsize±0.0057} &2.3530{\scriptsize±0.0051} &3.3320{\scriptsize±0.0074} & 3.5439{\scriptsize±0.0085}&4.7904{\scriptsize±0.0168} \\
   {$n'_s=3n_s$} &1.1071{\scriptsize±0.0021} &1.5716{\scriptsize±0.0038} & 2.5179{\scriptsize±0.0089} &3.5477{\scriptsize±0.0125} &3.6584{\scriptsize±0.0180} &4.8858{\scriptsize±0.0246} \\
   {$n'_s=4n_s$}      &1.1015{\scriptsize±0.0000} &1.5627{\scriptsize±0.0000} &2.5919{\scriptsize±0.0064} &3.6526{\scriptsize±0.0071} &3.6608{\scriptsize±0.0000} & 4.9521{\scriptsize±0.0000} \\
   \bottomrule

        \toprule
        & \multicolumn{1}{c}{MAE ($\times 10^{-3}$)}                   & \multicolumn{1}{c}{RMSE($\times 10^{-3}$)}                  & \multicolumn{1}{c}{MAE($\times 10^{-3}$)}                   & \multicolumn{1}{c}{RMSE($\times 10^{-3}$)}                  & \multicolumn{1}{c}{MAE($\times 10^{-3}$)}                   & \multicolumn{1}{c}{RMSE($\times 10^{-3}$)}                   \\
        \cmidrule(lr){2-3} \cmidrule(lr){4-5} \cmidrule(lr){6-7}
    
   Gaus + LN     & \multicolumn{2}{c}{\begin{tabular}[c]{@{}c@{}}\texttt{NS} \\ ($n_s=4096,n'_t=10$)\end{tabular}} & \multicolumn{2}{c}{\begin{tabular}[c]{@{}c@{}}\texttt{NS}\\  ($n_s=1024,n'_t=20$)\end{tabular}} & \multicolumn{2}{c}{\begin{tabular}[c]{@{}c@{}}\texttt{NS} \\ ($n_s=4096,n'_t=40$)\end{tabular}} \\
    \midrule
    $\bm{X}$         & 1.6341{\scriptsize±0.0034}                    & 2.1992{\scriptsize±0.0042}   & 2.1976 {\scriptsize±0.0065}        & 3.0219{\scriptsize±0.0090}            & 3.6422{\scriptsize±0.0026} & 5.0097{\scriptsize±0.0039}                                                                                     \\
      
    {$n'_s=2n_s$}      &2.8589{\scriptsize±0.0062}                   &4.0562{\scriptsize±0.0126}&3.7465{\scriptsize±0.0041}&5.1308{\scriptsize±0.0097}&3.9092{\scriptsize±0.0041} &5.2402{\scriptsize±0.0075} \\
    {$n'_s=3n_s$}     &3.4513{\scriptsize±0.0168}                    &4.5199{\scriptsize±0.0377}&5.7712{\scriptsize±0.0123}&5.7137{\scriptsize±0.0199}&4.2102{\scriptsize±0.0082}  &5.5057{\scriptsize±0.0138}   \\
    {$n'_s=4n_s$}     &3.4357{\scriptsize±0.0000}                    &4.7382{\scriptsize±0.0000}&5.5990{\scriptsize±0.0066}&5.5958{\scriptsize±0.0049}&4.2628{\scriptsize±0.0000}&5.7679{\scriptsize±0.0000} \\
    \bottomrule

       \toprule
       & \multicolumn{1}{c}{MAE ($\times 10^{-3}$)}                   & \multicolumn{1}{c}{RMSE($\times 10^{-3}$)}                  & \multicolumn{1}{c}{MAE($\times 10^{-3}$)}                   & \multicolumn{1}{c}{RMSE($\times 10^{-3}$)}                  & \multicolumn{1}{c}{MAE($\times 10^{-3}$)}                   & \multicolumn{1}{c}{RMSE($\times 10^{-3}$)}                   \\
       \cmidrule(lr){2-3} \cmidrule(lr){4-5} \cmidrule(lr){6-7}
   
       Flex + \cancel{LN}    & \multicolumn{2}{c}{\begin{tabular}[c]{@{}c@{}}\texttt{NS} \\ ($n_s=4096,n'_t=10$)\end{tabular}} & \multicolumn{2}{c}{\begin{tabular}[c]{@{}c@{}}\texttt{NS}\\  ($n_s=1024,n'_t=20$)\end{tabular}} & \multicolumn{2}{c}{\begin{tabular}[c]{@{}c@{}}\texttt{NS} \\ ($n_s=4096,n'_t=40$)\end{tabular}} \\
   \midrule
   $\bm{X}$   &1.2138{\scriptsize±0.0030}&1.7293{\scriptsize±0.0047}&2.5119{\scriptsize±0.0036}&3.4923{\scriptsize±0.0058}&4.2083{\scriptsize±0.0037} &5.6761{\scriptsize±0.0092} \\
     
   {$n'_s=2n_s$}      &1.4882{\scriptsize±0.0146} &2.1681{\scriptsize±0.0300} &7.0203{\scriptsize±0.0203} & 10.6096{\scriptsize±0.0345} &5.8975{\scriptsize±0.0060}& 8.7704{\scriptsize±0.0189}  \\
   {$n'_s=3n_s$} &1.6384{\scriptsize±0.0088}&2.4130{\scriptsize±0.0169}&7.9177{\scriptsize±0.0059} &11.9825{\scriptsize±0.0118} &6.6622{\scriptsize±0.0063}&9.5874{\scriptsize±0.0131} \\
   {$n'_s=4n_s$}      &1.6975{\scriptsize±0.0000}&2.5008{\scriptsize±0.0000}&7.1962{\scriptsize±0.0101} &10.8860{\scriptsize±0.0098} &6.6951{\scriptsize±0.0000}&9.6334{\scriptsize±0.0000} \\
   \bottomrule
\end{tabular}
}\vspace{-1em}
\end{table}
The mesh-invariant evaluation on Burgers' and KDV Equations of \textsc{NFS} are given in Table.~\ref{app:tabmeshinv}.
In Table.~\ref{app:tabmeshinv}, when the spatial resolution is just $512$, inference performance on unseen meshes deteriorates. 
This result also validates our conclusion \textit{(2)} in the third paragraph in Sec.~\ref{sec:exp}.

Besides, we give a full evaluation on mesh-invairance of \textsc{NFS} in \textsc{NS} equation, with its variants as a detailed results corresponding to  Table.~\ref{app:meshinvns}.

\clearpage
\subsection{Neighborhood Size's Effects}
\label{app:nbhd}
The effects of mean neighborhood size on the predictive performance on \texttt{Burgers'} $ (n_s=512,n_t=10, n'_t=40)$ and \texttt{KDV} $ (n_s=512, n_t=10, n'_t=40)$ are shown in Fig.~\ref{app:nbhdeff}.
\begin{figure*}[ht]
    \centering
            \subfigure[\texttt{Burgers'} $ (n_s=512, n'_t=40)$.]{ \label{fig:inducttemperature}
                \includegraphics[width=0.49\linewidth]{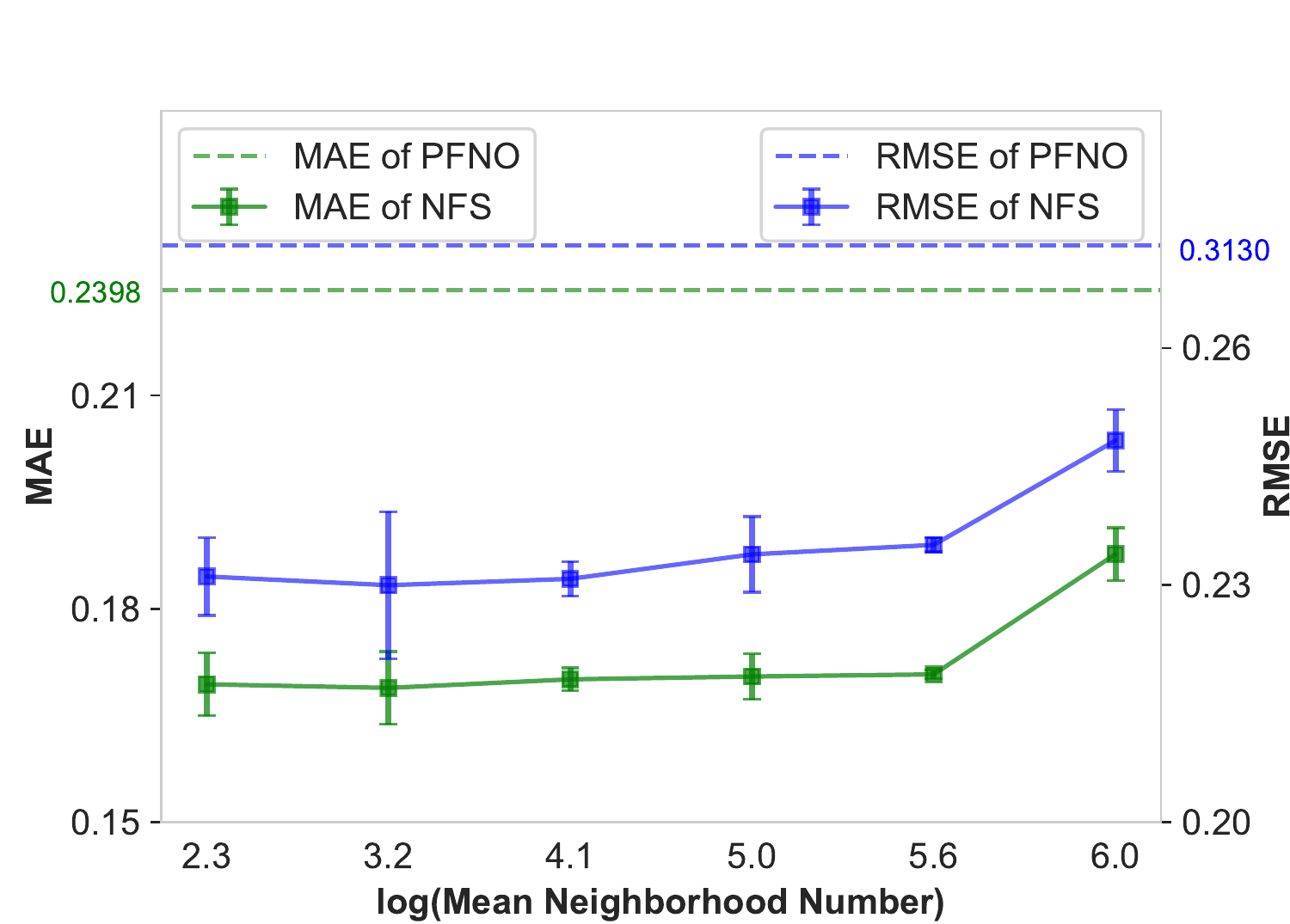}}\hspace{0mm}
            \subfigure[\texttt{KdV} $ (n_s=512, n'_t=40)$.]{ \label{fig:inducthumidity}
                \includegraphics[width=0.49\linewidth]{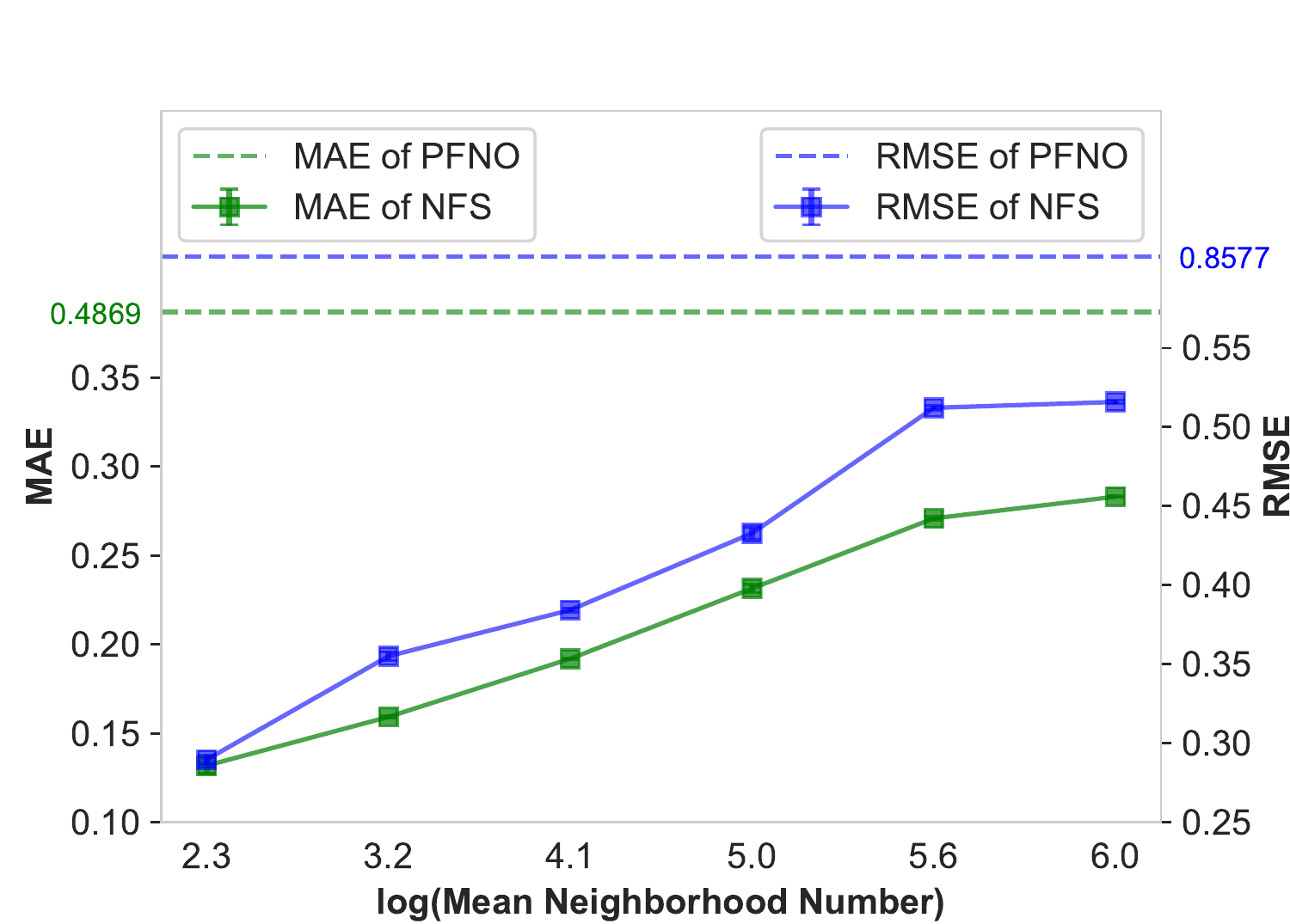}}\hspace{0mm}
    \caption{The change of MAE and RMSE of \textsc{NFS} with the increase of neighborhood size on \texttt{Burgers'} $ (n_s=512,n_t=10, n'_t=40)$ and \texttt{KdV} $ (n_s=512, n_t=10, n'_t=40)$. \textsc{PFNO} is the baseline.}\vspace{-0.4cm} \label{app:nbhdeff}
\end{figure*}

\subsection{Interpolation with Other Vision Mixers}
\label{app:interpmix}
We conduct experiments on non-equispaced NS equations with the combination of our interpolation layers and other Vision Mixers to figure out if they can achieve camparable performance.
\begin{table}[htb]
    \centering
    \vspace{-0.2cm}
    \caption{Performance of different Vision Mixers combined with the interpolation layers in non-equispaced scenarios on NS equations $(n_t =10)$. } \label{tab:neqmixer}
    \resizebox{1.00\columnwidth}{!}{
    \begin{tabular}{crrrrrr}
        \toprule
        & \multicolumn{1}{c}{MAE ($\times 10^{-3}$)}                   & \multicolumn{1}{c}{RMSE($\times 10^{-3}$)}                  & \multicolumn{1}{c}{MAE($\times 10^{-3}$)}                   & \multicolumn{1}{c}{RMSE($\times 10^{-3}$)}                  & \multicolumn{1}{c}{MAE($\times 10^{-3}$)}                   & \multicolumn{1}{c}{RMSE($\times 10^{-3}$)}                   \\
        \cmidrule(lr){2-3} \cmidrule(lr){4-5} \cmidrule(lr){6-7}
    
        \textsc{VIT}& \multicolumn{2}{c}{\begin{tabular}[c]{@{}c@{}}\texttt{NS} \\ ($n_s=4096,n'_t=10$)\end{tabular}} & \multicolumn{2}{c}{\begin{tabular}[c]{@{}c@{}}\texttt{NS}\\  ($n_s=1024,n'_t=20$)\end{tabular}} & \multicolumn{2}{c}{\begin{tabular}[c]{@{}c@{}}\texttt{NS} \\ ($n_s=4096,n'_t=40$)\end{tabular}} \\
    \midrule
    $\bm{X}$    &OOM &OOM & OOM &OOM &OOM &OOM \\
    \toprule
        & \multicolumn{1}{c}{MAE ($\times 10^{-3}$)}                   & \multicolumn{1}{c}{RMSE($\times 10^{-3}$)}                  & \multicolumn{1}{c}{MAE($\times 10^{-3}$)}                   & \multicolumn{1}{c}{RMSE($\times 10^{-3}$)}                  & \multicolumn{1}{c}{MAE($\times 10^{-3}$)}                   & \multicolumn{1}{c}{RMSE($\times 10^{-3}$)}                   \\
        \cmidrule(lr){2-3} \cmidrule(lr){4-5} \cmidrule(lr){6-7}
    
        \textsc{MLPMixer}& \multicolumn{2}{c}{\begin{tabular}[c]{@{}c@{}}\texttt{NS} \\ ($n_s=4096,n'_t=10$)\end{tabular}} & \multicolumn{2}{c}{\begin{tabular}[c]{@{}c@{}}\texttt{NS}\\  ($n_s=1024,n'_t=20$)\end{tabular}} & \multicolumn{2}{c}{\begin{tabular}[c]{@{}c@{}}\texttt{NS} \\ ($n_s=4096,n'_t=40$)\end{tabular}} \\
    \midrule
    $\bm{X}$   &6.1854{\scriptsize±0.0012}&8.1556{\scriptsize±0.0018}&9.4593{\scriptsize±0.0028}&12.1316{\scriptsize±0.0022}&10.1862{\scriptsize±0.0045}&13.1548{\scriptsize±0.0051} \\
      
    {$n'_s=2n_s$}      &8.1573{\scriptsize±0.0126} & 11.2258{\scriptsize±0.0147} &12.0706{\scriptsize±0.0132} &14.9460{\scriptsize±0.0159} &10.6003{\scriptsize±0.0127} &13.6872{\scriptsize±0.0238}  \\
    {$n'_s=3n_s$}      &8.1952{\scriptsize±0.0088} &11.3840{\scriptsize±0.0171} &14.9910{\scriptsize±0.0094} &17.8415{\scriptsize±0.0110} &10.5633{\scriptsize±0.0140} &13.6394{\scriptsize±0.0147}  \\
    {$n'_s=4n_s$}       &8.7773{\scriptsize±0.0000} &11.3313{\scriptsize±0.0000} &14.9517{\scriptsize±0.0125} &17.7857{\scriptsize±0.0199} &10.5414{\scriptsize±0.0000} &13.6106{\scriptsize±0.0000}    \\
    \toprule
        & \multicolumn{1}{c}{MAE ($\times 10^{-3}$)}                   & \multicolumn{1}{c}{RMSE($\times 10^{-3}$)}                  & \multicolumn{1}{c}{MAE($\times 10^{-3}$)}                   & \multicolumn{1}{c}{RMSE($\times 10^{-3}$)}                  & \multicolumn{1}{c}{MAE($\times 10^{-3}$)}                   & \multicolumn{1}{c}{RMSE($\times 10^{-3}$)}                   \\
        \cmidrule(lr){2-3} \cmidrule(lr){4-5} \cmidrule(lr){6-7}
    
        \textsc{GFN}& \multicolumn{2}{c}{\begin{tabular}[c]{@{}c@{}}\texttt{NS} \\ ($n_s=4096,n'_t=10$)\end{tabular}} & \multicolumn{2}{c}{\begin{tabular}[c]{@{}c@{}}\texttt{NS}\\  ($n_s=1024,n'_t=20$)\end{tabular}} & \multicolumn{2}{c}{\begin{tabular}[c]{@{}c@{}}\texttt{NS} \\ ($n_s=4096,n'_t=40$)\end{tabular}} \\
    \midrule
    $\bm{X}$    &12.2373{\scriptsize±0.0091} & 16.2902{\scriptsize±0.0133} &10.2768{\scriptsize±0.0084} &13.7852{\scriptsize±0.0078} &14.7765{\scriptsize±0.0055} &19.4872{\scriptsize±0.0106}  \\

    {$n'_s=2n_s$}       &13.7752{\scriptsize±0.0164} &18.3108{\scriptsize±0.0181} &17.7216{\scriptsize±0.0225} &24.1397{\scriptsize±0.0371} &15.9083{\scriptsize±0.0235}&21.0041{\scriptsize±0.0256}   \\
    {$n'_s=3n_s$}       &13.7054{\scriptsize±0.0122} &18.2192{\scriptsize±0.0184} &17.8238{\scriptsize±0.0112} &24.2783{\scriptsize±0.0196} &15.8986{\scriptsize±0.0156} &20.9943{\scriptsize±0.0158}  \\
    {$n'_s=4n_s$}       & 13.7140{\scriptsize±0.0000} &18.2271{\scriptsize±0.0000} &17.8207{\scriptsize±0.0105} &24.2833{\scriptsize±0.0141}  &15.8736{\scriptsize±0.0000} & 20.9772{\scriptsize±0.0000} \\
    \toprule
\end{tabular}
}\vspace{-1em}
\end{table}
\newpage
\subsection{Complexity Comparison}
\label{app:complexity}
We here first give Table.~\ref{tab:complexity} to show the complexity of time and memory of all the evaluated methods on \texttt{NS} ($r=64, n_t=10, n'_t=40$).
\begin{table}[h]
    \centering
    \caption{comparison on complexity of the evaluated methods}\label{tab:complexity}
    \resizebox{0.80\columnwidth}{!}{
    \begin{tabular}{clrrr}
        \toprule
    Type &Methods & Time/Epoch     & Peak Memory &Parameter Number \\
    \midrule
    \multirow{4}{*}{\makecell[c]{Graph Spatio-\\Temporal Model}}  &\textsc{GCGRU}   &$6^\prime18^{\prime \prime}$           &   8660MB                      & 74945  \\
                   & \textsc{DCRNN}   & $9^\prime38^{\prime \prime}$            &    11120MB                   &  148673  \\
                   & \textsc{AGCRN}   & OOM       & OOM                    & OOM\\
                   & \textsc{MPPDE}   & $10^\prime54^{\prime \prime}$       &23333MB &     622161                \\
                   \midrule
                   \multirow{6}{*}{Vision Mixer} &\textsc{VIT}   &$3^\prime14^{\prime \prime}$         &   32166MB  &  773217               \\
                    &\textsc{MLPMixer}   &$1^\prime12^{\prime \prime}$        & 4421MB  &  79749953                 \\
                    &\textsc{GFN}   &$48^{\prime \prime}$         &  3296MB                   & 1361729 \\
                    &\textsc{FNO}   & $27^{\prime \prime}$       &     3748MB                & 6299425 \\
                    &\textsc{PFNO}   &   $43^{\prime \prime}$       &    3380MB         &  9742145      \\
                    & \textsc{NFS}   & $2^\prime02^{\prime \prime}$       &  31938MB             & 37891937     \\
                    \bottomrule    
\end{tabular}}
\end{table}

\begin{table}[h]
    \centering
    \caption{Detailed complexity of \textsc{NFS}}\label{tab:moduleanaly}
    \resizebox{0.70\columnwidth}{!}{
        \begin{tabular}{ccc}
            \toprule
            \multicolumn{3}{c}{\begin{tabular}[c]{@{}c@{}}Interpolation on Resampled Points\end{tabular}} \\
            \midrule
            Neighbor Searching             & Kernel Calculation             & Weighted Summation             \\
            3522MB                         & 3102MB                         & 6884MB                         \\
            \toprule
            \multicolumn{3}{c}{\begin{tabular}[c]{@{}c@{}}Interpolation back on Original Points\end{tabular}} \\
            \midrule
            Neighbor Searching             & Kernel Calculation             & Weighted Summation             \\
            2506MB                             & 2754MB                         & 6884MB                         \\     
        \bottomrule    
        \end{tabular}}
\end{table}

It demonstrates that our method has comparable efficiency to Vision Mixers. For the graph spatial-temporal models, they suffer from the recurrent network structures and thus are extremely time-consuming while the parameter number is small, limiting their flexibility. 

\paragraph{Time.} However, once we compare the used time in \textsc{PFNO} and \textsc{NFS}, we will find that the interpolation layers are considerably time-consuming.
Another module that cost time complexity is the normalization layer, as the original FNO does not include Layer-Norm in its architecture, but it is stacked in PFNO. Theoretically, PFNO handles down-sampled grids in a low resolution, because of the patchwise embedding. However, it takes more time than FNO.
Therefore, we conclude that 
the time complexity brought from Layer-Norm is very significant, but it is affordable because of the performance improvements.

\paragraph{Memory.} Besides, the operation of searching for each spatial point's neighborhood and calculating weighted summation in Eq.~(9) and Eq.~(10) are very memory-consuming. We test it on the same experiment, and give the memory usage of different models in forward process, as shown in Table.~\ref{tab:moduleanaly}. The memory cost in backward process is 6902MB.


\section{Empirical observation for Theorem 3.1}
\setcounter{table}{0}   
\setcounter{figure}{0}
\renewcommand{\thetable}{C\arabic{table}}
\renewcommand{\thefigure}{C\arabic{figure}}
In Sec. 3.2, Theorem 3.1 is proved to assure the expressivity of \textsc{NFS}. However, no further evidence gives the assurance of the convergence of the kernel interpolation.
Here we conduct empirical study to give some clues.

We conduct experiments on NS equation with $n_s=4096, n_t=10, n'_t = 40$. 
In a single trial, NFS is trained with fixed meshes. We repeated the trials 10 times with different meshes, and then give the one-v.s.-all deviations of the representation states calculated by
$$\mathrm{Diff} = \frac{1}{90} \sum_{i\neq j, i,j=1}^{10} \frac{1}{m_s, n'_t}||\frac{|H_{i} - H_{j}|}{|H_{i}| + |H_{j}|}||_1,$$
where $H_{i}$ is the representation states of the shape $[\sqrt{m_s}, \sqrt{m_s}, n'_t]$, and $|\cdot|$ is the element-wise absolute value, and $||\cdot||_1$ is the 1-norm of the matrix.
If the $\mathrm{Diff}$ is small in the beginning and end, it can be inferred that the interpolation kernel function converges to a similar mapping since the final predictions are close to ground truth in these experiments, and the inputs are sampled from the same instance of PDEs.
We give the $\mathrm{Diff}$ before the first FNO and after the final of FNO layers in Table.~\ref{tab:repdev}. The small values indicate that the trained model usually has similar representation states.
Figure.~\ref{fig:repdev1} and \ref{fig:repdev2}  give visualizations of representation states obtained by one instance of NS equation in two different trials.
It indicates that the differences are getting smaller during the training. 
\begin{table}[h]
\centering
    \caption{The defined Diff calculated by different epochs.} \label{tab:repdev}
    \begin{tabular}{lll}
        \toprule
    Epoch & $\mathrm{Diff}_\mathrm{begin}$ & $\mathrm{Diff}_\mathrm{end}$ \\
    \midrule
    0     & 0.0676      & 0.0978    \\
    500   & 0.0102       & 0.0353   \\
    \bottomrule
    \end{tabular}
\end{table}

\begin{figure}[ht]
    \centering \vspace{-0.3cm}
    \subfigure[Representation states at the beginning of FNO layers in two trials of Epoch 0]{\includegraphics[width=1.0\columnwidth, trim = 0 0 0 0,clip]{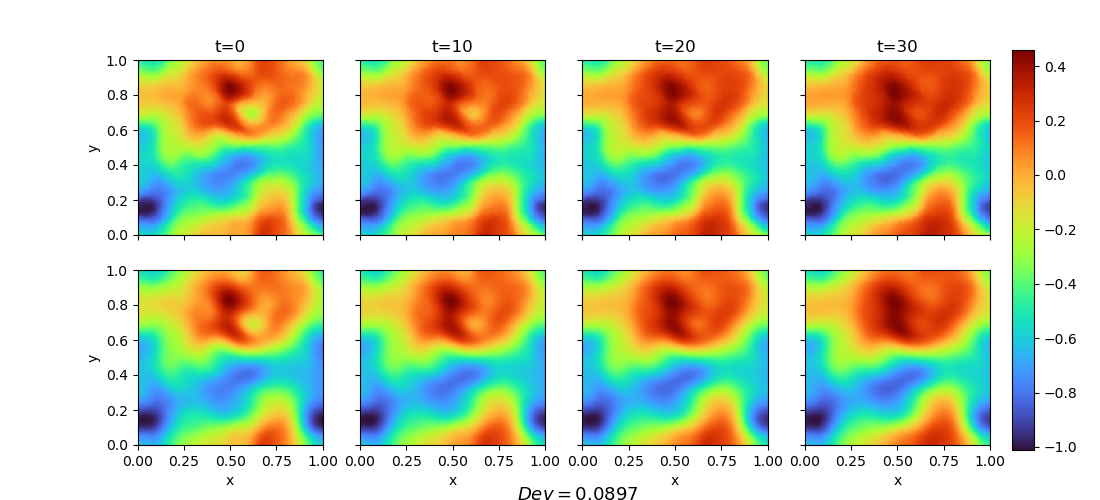}}\vspace{-1em}
    \subfigure[Representation states at the beginning of FNO layers in two trials of Epoch 500]{\includegraphics[width=1.0\columnwidth, trim = 0 0 0 0,clip]{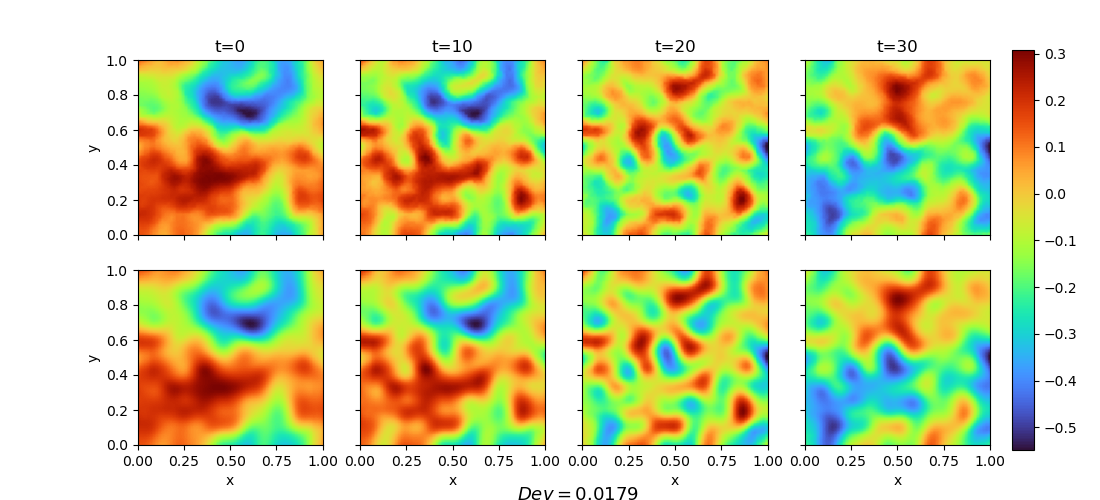}}\vspace{-1em}
    \caption{Visualization on different representation states at the beginning of FNO layers. } \label{fig:repdev1}
\end{figure}

\begin{figure}[ht]
    \centering \vspace{-0.3cm}
    \subfigure[Representation states in the end of FNO layers in two trials of Epoch 0]{\includegraphics[width=1.0\columnwidth, trim = 0 0 0 0,clip]{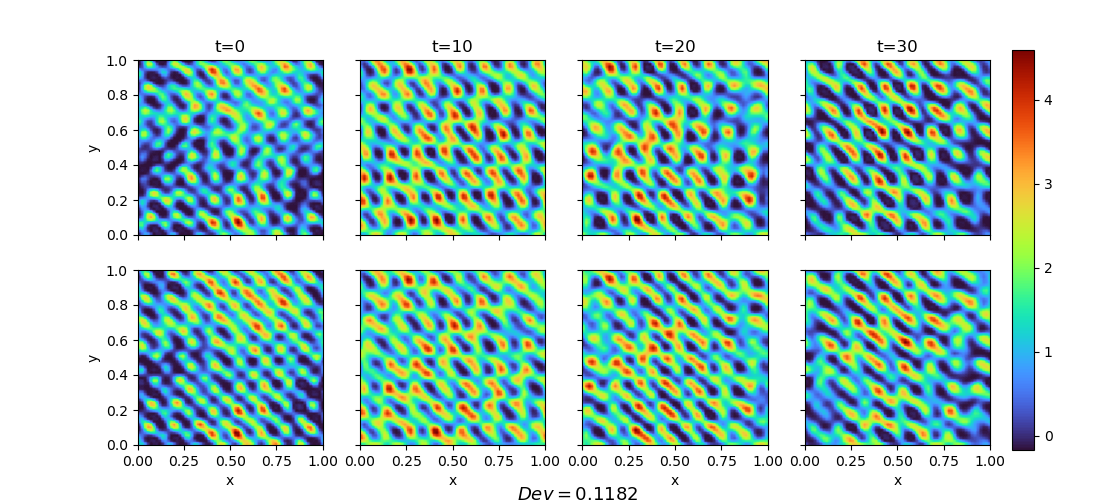}}\vspace{-1em}
    \subfigure[Representation states in the end of FNO layers in two trials of Epoch 500]{\includegraphics[width=1.0\columnwidth, trim = 0 0 0 0,clip]{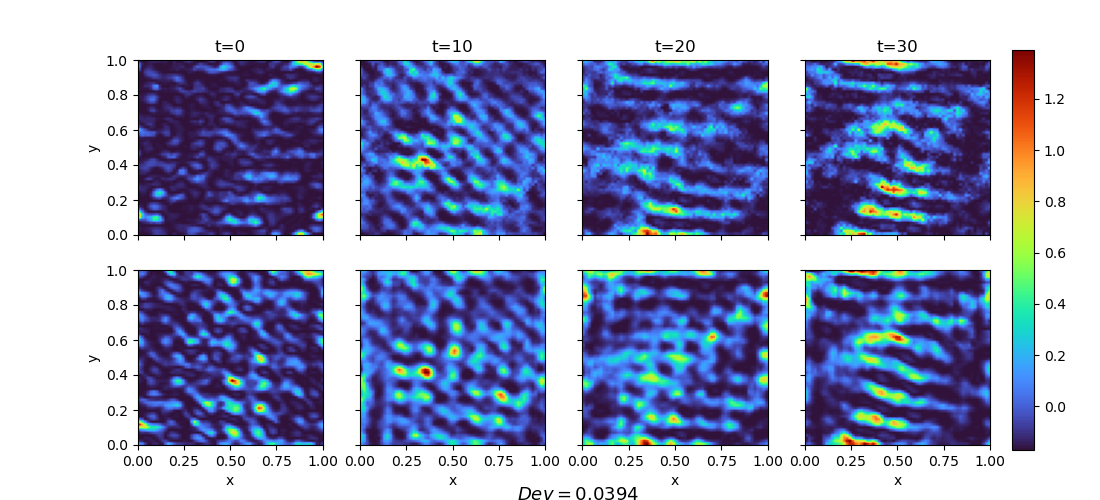}}\vspace{-1em}
    \caption{Visualization on different representation states in the end of FNO layers. } \label{fig:repdev2}
\end{figure}

As a result, we present the one-v.s.-all differences of different epochs in the training process, to validate the convergence, as shown in Figure.~\ref{fig:repconv}.

\begin{figure}[ht]
    \centering \vspace{-0.3cm}
    \subfigure{\includegraphics[width=0.4\columnwidth, trim = 0 0 0 0,clip]{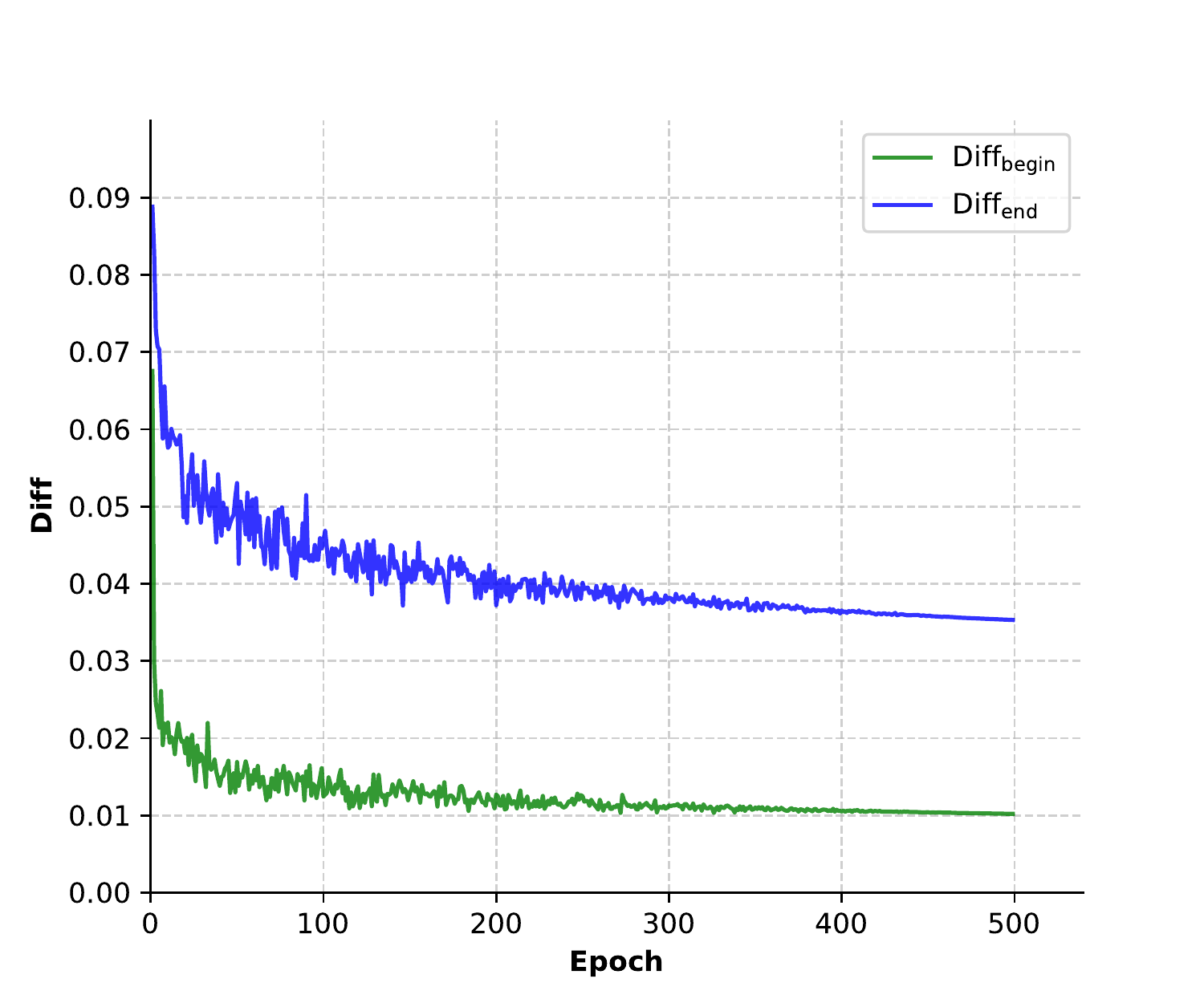}}\vspace{-1em}
\caption{Convergence of Diff. } \label{fig:repconv}
\end{figure}

\end{document}